\definecolor{mydarkred}{rgb}{0.6,0,0}
\definecolor{mydarkgreen}{rgb}{0,0.6,0}
\newtheorem{thm}{Theorem}
\newtheorem{lemma}{Lemma}
\newtheorem{definition}[thm]{Definition}
\newcommand{\argmin}{\mathop{\arg\min}}
\title{Provably Consistent Partial-Label Learning}
\author{%
Lei Feng\textsuperscript{\rm 1}\thanks{Preliminary work was done during an internship at RIKEN AIP.}\quad\quad Jiaqi Lv\textsuperscript{\rm 2}\quad\quad Bo Han\textsuperscript{\rm 3}\quad\quad Miao Xu\textsuperscript{\rm 4,5}\\ \textbf{Gang Niu}\textsuperscript{\rm 5}\quad\quad \textbf{Xin Geng}\textsuperscript{\rm 2}\quad\quad \textbf{Bo An}\textsuperscript{\rm 1}\thanks{Correspondence to: boan@ntu.edu.sg.}\quad\quad \textbf{Masashi Sugiyama}\textsuperscript{\rm 5,\rm 6}\\
\textsuperscript{\rm 1}School of Computer Science and Engineering, Nanyang Technological University, Singapore\\
\textsuperscript{\rm 2}School of Computer Science and Engineering, Southeast University, Nanjing, China\\
\textsuperscript{\rm 3}Department of Computer Science, Hong Kong Baptist University, China\\
\textsuperscript{\rm 4}The University of Queensland, Australia\\
\textsuperscript{\rm 5}Center for Advanced Intelligence Project, RIKEN, Japan\\
\textsuperscript{\rm 6}The University of Tokyo, Japan
% \texttt{\{feng0093, boan\}@ntu.edu.sg},\quad \texttt{\{lvjiaqi, xgeng\}@seu.edu.cn}\\
% \texttt{\{bo.han, miao.xu, gang.niu\}@riken.jp},\quad \texttt{sugi@k.u-tokyo.ac.jp}
  % examples of more authors
  % \And
  % Coauthor \\
  % Affiliation \\
  % Address \\
  % \texttt{email} \\
  % \AND
  % Coauthor \\
  % Affiliation \\
  % Address \\
  % \texttt{email} \\
  % \And
  % Coauthor \\
  % Affiliation \\
  % Address \\
  % \texttt{email} \\
  % \And
  % Coauthor \\
  % Affiliation \\
  % Address \\
  % \texttt{email} \\
}
\begin{document}

\maketitle

\begin{abstract}
\emph{Partial-label learning}~(PLL) is a multi-class classification problem, where each training example is associated with \emph{a set of candidate labels}.
Even though many practical PLL methods have been proposed in the last two decades, there lacks a theoretical understanding of the consistency of those methods---%
none of the PLL methods hitherto possesses a \emph{generation process} of candidate label sets, and then it is still unclear why such a method works on a specific dataset and when it may fail given a different dataset.
In this paper, we propose the first \emph{generation model} of candidate label sets, and develop two novel PLL methods that are guaranteed to be provably consistent, i.e., one is \emph{risk-consistent} and the other is \emph{classifier-consistent}.
Our methods are advantageous, since they are compatible with any deep network or stochastic optimizer.
Furthermore, thanks to the generation model, we would be able to answer the two questions above by testing if the generation model matches given candidate label sets.
Experiments on benchmark and real-world datasets validate the effectiveness of the proposed generation model and two PLL methods.
\end{abstract}

\section{Introduction}
Unlike supervised and unsupervised learning, \emph{weakly supervised learning} \cite{zhou2018brief} aims to learn under weak supervision. So far, various weakly supervised learning frameworks have been widely studied. Examples include semi-supervised learning~\cite{Chapelle2006semi,belkin2006manifold,niu2013squared,tarvainen2017mean,sakai2017semi,luo2018smooth,miyato2019virtual,berthelot2019mixmatch,li2019safe}, multi-instance learning~\cite{amores2013multiple,zhou2012multi}, positive-unlabeled learning~\cite{du2014analysis,Plessis2015Convex,niu2016theoretical,hsieh2019classification,gong2019loss,chen2020self}, complementary-label learning~\cite{ishida2017learning,yu2018learning,Ishida2019Complementary,Xu2020generative,chou2020unbiased}, noisy-label learning~\cite{natarajan2013learning,menon2015learning,patrini2017making,han2018co,yu2019does,menon2019can,lukasik2020does,wei2020combating,yao2020searching,han2020sigua,xia2020part}, positive-confidence learning~\cite{Ishida2018Binary}, similar-unlabeled learning~\cite{Bao2018Classification}, and unlabeled-unlabeled learning~\cite{lu2019on,lu2020mitigating}.

%include semi-supervised learning \cite{zhu2009introduction}, positive and unlabeled learning \cite{elkan2008learning,Plessis2014Analysis}, multi-instance learning \cite{amores2013multiple}.
%In recent years, another weakly supervised learning framework called \emph{partial-label learning} (PLL) \cite{jin2003learning,cour2011learning,liu2014learnability,chen2014ambiguously,zhang2015solving,feng2019partial,lyu2019gm} has gradually attracted attention from machine learning and data mining communities.
This paper focuses on learning under another natural type of weak supervision, which is called \emph{partial-label learning} (PLL) \cite{jin2003learning,cour2011learning,liu2014learnability,chen2014ambiguously,zhang2015solving,feng2019partial,lyu2019gm}. PLL aims to deal with the problem where each instance is provided with a set of candidate labels, only one of which is the correct label. In some studies, PLL is also termed as \emph{ambiguous-label learning} \cite{hullermeier2006learning,zeng2013learning,chen2014ambiguously,chen2018learning,yao2020deep} and \emph{superset-label learning} \cite{liu2012conditional,liu2014learnability,gong2018regularization}. Due to the difficulty in collecting accurately labeled data in many real-world scenarios, PLL has been successfully applied to a wide range of application domains, such as web mining~\cite{luo2010learning}, bird song classification~\cite{liu2012conditional}, and automatic face naming~\cite{zeng2013learning}.

A number of methods \cite{jin2003learning,nguyen2008classification,zhang2015solving,feng2018leveraging,feng2019partial} have been proposed to improve the practical performance of PLL; on the theoretical side, some researchers have studied the statistical consistency~\cite{cour2011learning} and the learnability~\cite{liu2014learnability} of PLL. They made the same assumption on the \emph{ambiguity degree}, which describes the maximum co-occurring probability of the correct label with another candidate label. Although they assumed that the data distribution for successful PLL should ensure a limited ambiguity degree, it is still unclear what the explicit formulation of the data distribution would be. Besides, the consistency of PLL methods would be hardly guaranteed without modeling the data distribution.
%In addition, for most of the existing methods, the difference between the learned classifier from partially labeled data and the optimal one from ordinarily labeled data is not guaranteed to vanish, i.e., there is no statistical consistency guarantee. An exception is from \cite{cour2011learning}, where a \emph{classifier-consistent} approach based on the assumption of the ambiguity degree was proposed. However, its consistency relies on some strict conditions and it is not an unbiased risk estimator of the original classification risk. In general, classifier-consistent methodes \cite{yu2018learning} guarantee that the classifier learned from partially labeled data is consistent to the optimal one learned from ordinarily labeled data, while \emph{risk-consistent} methods possess a statistically unbiased estimator to the expected classification risk on ordinarily labeled data \cite{ishida2017learning,Ishida2019Complementary}.

%none of the above studies provide a risk estimator that is risk-consistent or classifier-consistent.
%There still remain several important problems to be solved. Firstly, how is the probability distribution of partially labeled data? Secondly, can we obtain a risk-consistent or classifier-consistent risk estimator from only partially labeled data?
Motivated by the above observations, we for the first time present a novel statistical model to depict the \emph{generation process of candidate label sets}. Having an explicit data distribution not only helps us to understand how partially labeled examples are generated, but also enables us to perform empirical risk minimization.
%Our proposed generation model is simple and intuitive as it is instance-independent and does not introduce any extra latent variable.
%explicit formulation of the data distribution for learning with partial labels,
We verify that the proposed generation model satisfies the key assumption of PLL that the correct label is always included in the candidate label set. %With the generation model, we are able to analyze the performance of a specific method by testing if the given candidate label sets match the generation model. If satisfied performance can be achieved, the given candidate label sets probably matches the generation model.
%and we also give a motivation to justify the assumed data distribution.
%We show give a real-world motivation to show that the assumed data distribution is practically reasonable.
Based on the generation model, we have the following contributions:
\begin{itemize}[leftmargin=0.4cm,topsep=-2pt]
\item We derive a novel \emph{risk-consistent} method and a novel \emph{classifier-consistent} method. %To the best of our knowledge, this is the first risk-consistent method for partial-label learning. It is also worth noting that
%In contrast to
Most of the existing PLL methods need to specially design complex optimization objectives, which make the optimization process inefficient. %In contrast, our proposed PLL methods do not rely on specific classification models and can be easily trained with stochastic optimization, thus can be naturally applied to complex models such as deep neural networks with large-scale datasets.
In contrast, our proposed PLL methods are model-independent and optimizer-independent, and thus can be naturally applied to complex models such as deep neural networks with any advanced optimizer.
\item We derive an estimation error bound for each of the two methods, which demonstrates that the obtained empirical risk minimizer would approximately converge to the true risk minimizer %with a guaranteed rate %
as the number of training data tends to infinity.
We show that the risk-consistent method holds a tighter estimation error bound than the classifier-consistent method and empirically validate that the risk-consistent method achieves better performance when deep neural networks are used.
\item To show the effect of our generation model, we also use \emph{entropy} to measure how well the given candidate label sets match our generation model. We find that the candidate label sets with higher entropy better match our generation model, and on such datasets, our proposed PLL methods achieve better performance.
\end{itemize}
%, and analyze certain conditions when the performance of our proposed PLL methods would drop.
Extensive experiments on benchmark as well as real-world partially labeled datasets clearly validate the effectiveness of our proposed methods.
\section{Formulations}
In this section, we introduce some notations and briefly review the formulations of learning with ordinary labels, learning with partial labels, and learning with complementary labels.

\noindent\textbf{Learning with Ordinary Labels.}\quad
For ordinary multi-class learning, let the feature space be $\mathcal{X}\in\mathbb{R}^d$ and the label space be $\mathcal{Y} = [k]$ (with $k$ classes) where $[k]:=\{1,2,\ldots,k\}$. Let us clearly define that $\boldsymbol{x}$ denotes an instance and $(\boldsymbol{x},y)$ denotes an example including an instance $\boldsymbol{x}$ and a label $y$. When ordinary labels are provided, we usually assume each example $(\boldsymbol{x},y)\in\mathcal{X}\times\mathcal{Y}$ is independently sampled from an unknown data distribution with probability density $p(\boldsymbol{x},y)$. Then, the goal of multi-class learning is to obtain a multi-class classifier $f:\mathcal{X}\rightarrow\mathbb{R}^k$ that minimizes the following classification risk:
\begin{gather}
\label{ori_risk}
{\textstyle R(f) = \mathbb{E}_{p(\boldsymbol{x},y)}[\mathcal{L}(f(\boldsymbol{x}),y)],}
\end{gather}
where $\mathbb{E}_{p(\boldsymbol{x},y)}[\cdot]$ denotes the expectation over the joint probability density $p(\boldsymbol{x},y)$ and $\mathcal{L}:\mathbb{R}^k\times\mathcal{Y}\rightarrow\mathbb{R}_{+}$ is a multi-class loss function that measures how well a classifier estimates a given label. We say that a method is \emph{classifier-consistent} if the learned classifier by the method is infinite-sample consistent to $\argmin\nolimits_{f\in\mathcal{F}}R(f)$, and a method is \emph{risk-consistent} if the method possesses a classification risk estimator that is equivalent to $R(f)$ given the same classifier $f$. Note that a risk-consistent method is also classifier-consistent \cite{xia2019anchor}. However, a classifier-consistent method may not be risk-consistent.

\noindent\textbf{Learning with Partial Labels.}\quad
For learning with partial labels (i.e., PLL), each instance is provided with a set of candidate (partial) labels, only one of which is correct. Suppose the partially labeled dataset is denoted by $\widetilde{\mathcal{D}}=\{(\boldsymbol{x}_i,Y_i)\}_{i=1}^n$ where $Y_i$ is the candidate label set of $\boldsymbol{x}_i$. Since each candidate label set should not be the empty set nor the whole label set, we have $Y_i\in\mathcal{C}$ where $\mathcal{C}=\{2^{\mathcal{Y}}\setminus\emptyset\setminus\mathcal{Y}\}$, $2^{\mathcal{Y}}$ denotes the power set, and $|\mathcal{C}|=2^k-2$. The key assumption of PLL lies in that the correct label $y_i$ of $\boldsymbol{x}_i$ must be in the candidate label set, i.e.,
\begin{gather}
\label{pl_assump}
{\textstyle p(y_i\in Y_i\mid\boldsymbol{x}_i,Y_i)=1,\ \forall (\boldsymbol{x}_i,y_i)\in\mathcal{X}\times\mathcal{Y},\ \forall Y_i\in\mathcal{C}.}
\end{gather}
Given such data, the goal of PLL is to induce a multi-class classifier $f: \mathcal{X}\rightarrow\mathbb{R}^k$ that can make correct predictions on test inputs. To this end, many methods~\cite{liu2012conditional,zhang2015solving,zhang2017disambiguation,gong2018regularization,feng2019partial,lyu2019gm} have been proposed to improve the performance of PLL. However, to the best of our knowledge, there is only one method \cite{cour2011learning} that possesses statistical consistency by providing a classifier-consistent risk estimator. %i.e., the learned classifier from partially labeled data is consistent to the optimal one from clean data.
However, it not only requires the assumption that the data distribution should ensure a limited ambiguity degree, but also relies on some strict conditions (e.g., convexity of loss function and dominance relation \cite{cour2011learning}).
%Although that classifier-consistent method is entirely based on the assumption that the data distribution should ensure a limited ambiguity degree, they do not provide an explicit formulation of the data distribution.
It is still unclear what the explicit formulation of the data distribution for successful PLL would be.
Besides, it is also unknown whether there exists a risk-consistent method that possesses a statistical unbiased estimator of the classification risk $R(f)$.

\noindent\textbf{Learning with Complementary Labels.}\quad
There is a special case of partial labels, called complementary labels~\cite{ishida2017learning,yu2018learning,Ishida2019Complementary}. Each complementary label specifies one of the classes that the example does \emph{not} belong to. Hence a complementary label $\overline{y}$ can be considered as an extreme case where all $k-1$ classes other than the class $\overline{y}$ are taken as candidate (partial) labels. Existing studies on learning with complementary labels make the assumption on the data generation process. The pioneering study~\cite{ishida2017learning} assumed that each complementarily labeled example $(\boldsymbol{x},\overline{y})$ is independently drawn from the probability distribution with density $\overline{p}(\boldsymbol{x},y)$, where $\overline{p}(\boldsymbol{x},y)$ is defined as
$\overline{p}(\boldsymbol{x},\overline{y})=\sum\nolimits_{y\neq\overline{y}}p(\boldsymbol{x},y)$.
Based on this data distribution, several risk-consistent methods \cite{ishida2017learning,Ishida2019Complementary} have been proposed for learning with complementary labels. However, in many real-world scenarios, multiple complementary labels would be more widespread than a single complementary label. Hence a recent study~\cite{feng2020learning} focused on learning with multiple complementary labels. %It is worth noting that non-candidate labels can be conceptually regarded as complementary labels.
Suppose each training example is represented by $(\boldsymbol{x},\overline{Y})$ where $\overline{Y}$ denotes a set of multiple complementary labels, and $(\boldsymbol{x},\overline{Y})$ is assumed to be independently sampled from the probability distribution with density $\overline{p}(\boldsymbol{x},\overline{Y})$, which is defined as
\begin{gather}
{\textstyle \overline{p}(\boldsymbol{x},\overline{Y}) = \sum\nolimits_{j=1}^{k-1} p(s=j)\overline{p}(\boldsymbol{x},\overline{Y}\mid s=j),}
\end{gather}
where
\begin{gather}
{\textstyle \overline{p}(\boldsymbol{x},\overline{Y}\mid s=j) :=\left\{\begin{matrix}
\frac{1}{\tbinom{k-1}{j}} \sum_{y \notin \overline{Y}} p(\boldsymbol{x}, y) & \text{ if } |\overline{Y}|=j,  \\
0 & \text{ otherwise}.
\end{matrix}\right.}
\end{gather}
Here, the variable $s$ denotes the size of the complementary label set. Supplied with this data distribution, a risk-consistent method~\cite{feng2020learning} was proposed. It is worth noting that following the distribution of complementarily labeled data, although we can obtain partial labels by regarding all the complementary labels as non-candidate labels, the resulting distribution of partially labeled data is not explicitly formulated.
%Besides, it is also unclear whether that resulting data distribution is practically reasonable.
%Motivated by the above observations,
It would be natural to ask whether there also exists an explicit formulation of the partially labeled data distribution that enables us to derive a novel classifier-consistent method or a novel risk-consistent method that possesses statistical consistency. In this paper, we will give an affirmative answer to this question. Specifically, we will show that based on our proposed data generation model, a novel risk-consistent method (the first one for PLL) and a novel classifier-consistent method can be derived accordingly.
\section{Data Generation Model}\label{sec_distr}
%In this section, we present an explicit formulation of the generation process of partially labeled data, and show that it satisfies the key assumption (i.e., Eq. (\ref{pl_assump})) of PLL.
%, and then show that the assumed data distribution satisfies the key assumption of PLL and is practically reasonable.
\subsection{Partially Labeled Data Distribution}
%Let $\mathcal{C}=\{2^{\mathcal{Y}}-\emptyset-\mathcal{Y}\}$ be the whole set of all possible candidate label sets. Since $|\mathcal{C}|=2^k-2$, we can easily know that there are totally $2^{k}-2$ possible candidate label sets.
%Given the partially labeled dataset $\widetilde{\mathcal{D}}=\{\boldsymbol{x}_i,Y_i\}_{i=1}^n$,
%for Since each candidate label set should not be the empty set nor the whole label set, thus there are totally $2^{k}-2$ possible candidate label sets. We denote by $\mathcal{C}$ the whole set of all the possible candidate label sets, $|\mathcal{C}|=\{2^{\mathcal{Y}}-\emptyset-\mathcal{Y}\}$ and $|\mathcal{C}|=2^k-2$.
We assume each partially labeled example $(\boldsymbol{x},Y)$ is independently drawn from a probability distribution with the following density:
\begin{gather}
\label{distr1}
{\textstyle \widetilde{p}(\boldsymbol{x},Y) = \sum\nolimits_{i=1}^k p(Y\mid y=i)p(\boldsymbol{x},y=i), \text{ where }p(Y\mid y=i) = \left\{\begin{matrix}
\frac{1}{2^{k-1}-1}&\ \text{if}\ i\in Y,\\
0&\ \text{if}\ i\notin Y.
\end{matrix}\right.}
\end{gather}
% where $p(Y\mid y=i)$ is defined as:
% \begin{gather}
% \label{distr2}
% p(Y\mid y=i) = \left\{\begin{matrix}
% \frac{1}{2^{k-1}-1}&\ \text{if}\ i\in Y,\\
% 0&\ \text{if}\ i\notin Y.
% \end{matrix}\right.
% \end{gather}
In Eq.~(\ref{distr1}), we assume $p(Y\mid\boldsymbol{x},y)=p(Y\mid y)$, which means, given the correct label $y$, the candidate label set $Y$ is independent of the instance $\boldsymbol{x}$. %Since the candidate labels are only dependent on the class label $y$, we called it as a class-conditional data generation model.
This assumption is similar to the conventional modeling of label noise~\cite{han2018masking} where the observed noisy label is independent of the instance, given the correct label. %Therefore, our proposed data generation model is instance-independent, which does not introduce any extra hidden variable.
In addition, there are in total $2^{k-1}-1$ possible candidate label sets that contain a specific label $y$. Hence, Eq.~(\ref{distr1}) describes the probability of each candidate label set being uniformly sampled, given a specific label.
%by randomly and independently sample a candidate label set given the correct label $y$,
%each candidate label set will be chosen with probability
%Eq.~(\ref{distr2}), given the correct label $y$,
Here, we show that our assumed data distribution is a valid probability distribution by the following theorem.
%the candidate label set is
%In the above equation, we relate the candidate label set to its correct label.
\begin{thm}
\label{valid}
The equality $\int_{\mathcal{C}}\int_{\mathcal{X}}\widetilde{p}(\boldsymbol{x},Y)\mathrm{d}\boldsymbol{x}\ \mathrm{d}Y = 1$ holds.
\end{thm}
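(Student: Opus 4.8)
The plan is to interchange the order of summation and integration, reduce the outer integral over $\mathcal{C}$ to a finite sum over the non-empty proper subsets of $[k]$, and then exploit the fact that $p(\boldsymbol{x},y)$ is already a valid joint density. First I would write the integral over $\mathcal{C}$ explicitly as a sum (the counting measure on the finite collection $\mathcal{C}$) and substitute the definition of $\widetilde{p}$ from Eq.~(\ref{distr1}), giving
\begin{gather*}
\int_{\mathcal{C}}\int_{\mathcal{X}}\widetilde{p}(\boldsymbol{x},Y)\,\mathrm{d}\boldsymbol{x}\,\mathrm{d}Y = \sum\nolimits_{Y\in\mathcal{C}}\int_{\mathcal{X}}\sum\nolimits_{i=1}^k p(Y\mid y=i)\,p(\boldsymbol{x},y=i)\,\mathrm{d}\boldsymbol{x}.
\end{gather*}
Since every term is non-negative and both sums are finite, a Tonelli-style interchange lets me freely swap the two sums and the integral, pulling the sum over $i$ outside and grouping the sum over $Y$ with the factor $p(Y\mid y=i)$, which does not depend on $\boldsymbol{x}$.

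The key step is the combinatorial evaluation of $\sum\nolimits_{Y\in\mathcal{C}} p(Y\mid y=i)$ for a fixed class $i$. By definition, $p(Y\mid y=i)$ equals $\tfrac{1}{2^{k-1}-1}$ exactly when $i\in Y$ and vanishes otherwise, so this sum reduces to $\tfrac{1}{2^{k-1}-1}$ times the number of sets in $\mathcal{C}$ that contain $i$. I would count these by noting that a subset of $[k]$ containing $i$ is determined by an arbitrary choice of the remaining $k-1$ elements, giving $2^{k-1}$ such subsets; exactly one of them is the full label set $\mathcal{Y}$ (excluded from $\mathcal{C}$), and none of them is the empty set, so precisely $2^{k-1}-1$ members of $\mathcal{C}$ contain $i$. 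Hence $\sum\nolimits_{Y\in\mathcal{C}} p(Y\mid y=i)=1$ for every $i$. This is the heart of the argument and the reason the normalizing constant in Eq.~(\ref{distr1}) is chosen as it is.

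Substituting this back collapses the expression to $\sum\nolimits_{i=1}^k\int_{\mathcal{X}} p(\boldsymbol{x},y=i)\,\mathrm{d}\boldsymbol{x}$, which upon interchanging the finite sum with the integral equals $\int_{\mathcal{X}} p(\boldsymbol{x})\,\mathrm{d}\boldsymbol{x}=1$, using that $\sum\nolimits_{i=1}^k p(\boldsymbol{x},y=i)=p(\boldsymbol{x})$ is the marginal density and that the ordinary joint density $p(\boldsymbol{x},y)$ integrates and sums to one. I do not anticipate a genuine obstacle: the only subtlety is the counting step, where one must be careful that the exclusion of both $\emptyset$ and $\mathcal{Y}$ from $\mathcal{C}$ affects the count only through $\mathcal{Y}$ (since $\emptyset$ never contains $i$). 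Everything else is a routine interchange justified by the non-negativity and finiteness of the sums involved.
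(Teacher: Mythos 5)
Your proposal is correct and follows essentially the same route as the paper's proof: both swap the order of the sums over $Y\in\mathcal{C}$ and over the class index, count that exactly $2^{k-1}-1$ sets in $\mathcal{C}$ contain a fixed label (so the normalizing constant cancels), and then reduce to the normalization of $p(\boldsymbol{x},y)$. The only cosmetic difference is that the paper first rewrites $\widetilde{p}(\boldsymbol{x},Y)$ as $\frac{1}{2^{k-1}-1}\sum_{y\in Y}p(\boldsymbol{x},y)$ before exchanging sums, whereas you keep the factor $p(Y\mid y=i)$ explicit.
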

The proof is provided in Appendix A.1. Given the assumed data distribution in Eq.~(\ref{distr1}), it would be natural to ask whether our assumed data distribution meets the key assumption of PLL described in Eq.~(\ref{pl_assump}), i.e., whether the correct label $y$ is always in the candidate label set $Y$ for every partially labeled example $(\boldsymbol{x},Y)$ sampled from $\widetilde{p}(\boldsymbol{x},Y)$. The following theorem provides an affirmative answer to this question.
%We will prove that for any example, its correct label is always in its candidate label set .
\begin{thm}
\label{key_assump}
For any partially labeled example $(\boldsymbol{x},Y)$ independently sampled from the assumed data distribution in Eq.~(\ref{distr1}), the correct label $y$ is always in the candidate label set $Y$, i.e., $p(y\in Y\mid \boldsymbol{x},Y)=1,\ \forall (\boldsymbol{x},Y)\sim \widetilde{p}(\boldsymbol{x},Y)$.
\end{thm}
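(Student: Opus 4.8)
The plan is to unfold the event $\{y \in Y\}$ as a disjoint union over the labels it contains and then apply Bayes' rule termwise. Since the candidate set $Y$ is the object we condition on and the correct label $y$ ranges over $[k]$, I would first write
\[
p(y \in Y \mid \boldsymbol{x}, Y) = \sum_{i \in Y} p(y = i \mid \boldsymbol{x}, Y),
\]
which is valid because the events $\{y = i\}$ are mutually exclusive and $\{y \in Y\} = \bigcup_{i \in Y}\{y = i\}$. This reduces the claim to showing that the posterior mass on the labels inside $Y$ sums to one.

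Next I would expand each posterior $p(y = i \mid \boldsymbol{x}, Y)$ by Bayes' rule, invoking the modeling assumption $p(Y \mid \boldsymbol{x}, y) = p(Y \mid y)$ stated just below Eq.~(\ref{distr1}):
\[
p(y = i \mid \boldsymbol{x}, Y) = \frac{p(Y \mid y = i)\, p(\boldsymbol{x}, y = i)}{\widetilde{p}(\boldsymbol{x}, Y)}.
\]
The denominator is exactly the marginal density $\widetilde{p}(\boldsymbol{x}, Y)$ from Eq.~(\ref{distr1}); it is strictly positive on the support from which $(\boldsymbol{x}, Y)$ is actually drawn (there is at least one $i \in Y$ since $Y \neq \emptyset$), so the conditioning is well defined.

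The crux is the piecewise form of $p(Y \mid y = i)$, which equals the constant $1/(2^{k-1} - 1)$ when $i \in Y$ and $0$ when $i \notin Y$. Substituting this into the definition $\widetilde{p}(\boldsymbol{x}, Y) = \sum_{i=1}^{k} p(Y \mid y = i)\, p(\boldsymbol{x}, y = i)$, every term with $i \notin Y$ vanishes, so the marginal collapses to $\widetilde{p}(\boldsymbol{x}, Y) = \tfrac{1}{2^{k-1}-1}\sum_{i \in Y} p(\boldsymbol{x}, y = i)$. Summing the Bayes expression over $i \in Y$ then yields a numerator that is term-for-term identical to this reduced marginal, so the ratio is exactly $1$.

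I do not anticipate a genuine obstacle: the only step requiring care is making the conditional-independence substitution explicit and observing that the $i \notin Y$ contributions drop out in both places at once, namely in the numerator (because the outer sum is restricted to $i \in Y$) and in the denominator (because $p(Y \mid y = i) = 0$ there). This simultaneous cancellation is precisely the mechanism by which the generation model in Eq.~(\ref{distr1}) forces the correct label into the candidate set with probability one, thereby verifying the key PLL assumption of Eq.~(\ref{pl_assump}).
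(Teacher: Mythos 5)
Your proof is correct and takes essentially the same route as the paper's: both rest on Bayes' rule, the conditional-independence assumption $p(Y\mid\boldsymbol{x},y)=p(Y\mid y)$, and the fact that $p(Y\mid y=i)=0$ whenever $i\notin Y$, which makes the marginal $\widetilde{p}(\boldsymbol{x},Y)$ collapse to a sum over $i\in Y$. The only cosmetic difference is that you sum the posterior directly over $i\in Y$ and show the ratio equals one, whereas the paper writes $p(y\in Y\mid\boldsymbol{x},Y)=1-p(y\notin Y\mid\boldsymbol{x},Y)$ and shows the complementary sum over $i\notin Y$ vanishes.
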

The proof is provided in Appendix A.2. Theorem~\ref{key_assump} clearly demonstrates that our assumed data distribution in Eq. (\ref{distr1}) satisfies the key assumption of PLL.
\subsection{Motivation}\label{motivation}
Here, we provide a motivation why we derived the above data generation model. Generally, a large number of high-quality samples are notably helpful to machine learning or data mining. However, it is usually difficult for our labelers to directly identify the correct label for each instance \cite{zhou2018brief}. Nonetheless, it would be easier to collect a set of candidate labels that contains the correct label. Suppose there is a labeling system that can uniformly sample a label set $Y$ from $\mathcal{C}$. For each instance $\boldsymbol{x}$, the labeling system uniformly samples a label set $Y$ and asks a labeler whether the correct label $y$ is in the sampled label set $Y$. In this case, the collected examples whose correct label $y$ is included in the proposed label set $Y$ follow the same distribution as Eq.~(\ref{distr1}).
In order to justify that, we first introduce the following lemma.
\begin{lemma}
\label{simple_lemma}
Given any instance $\boldsymbol{x}$ with its correct label $y$, for any unknown label set $Y$ that is uniformly sampled from $\mathcal{C}$, the equality $p(y\in Y\mid\boldsymbol{x}) = 1/2$ holds.
\end{lemma}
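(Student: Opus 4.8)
The plan is to reduce this conditional probability to a pure counting problem and then evaluate it with elementary combinatorics. First I would observe that the label set $Y$ is drawn uniformly from $\mathcal{C}$ by the labeling system, with no reference to the instance $\boldsymbol{x}$; hence the event $\{y\in Y\}$ is independent of $\boldsymbol{x}$, the conditioning can be dropped, and $p(y\in Y\mid\boldsymbol{x})=p(y\in Y)$. Because $Y$ is uniform on $\mathcal{C}$, this probability equals the fraction of sets in $\mathcal{C}$ that contain $y$, namely $|\{Y\in\mathcal{C}:y\in Y\}|/|\mathcal{C}|$.

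Next I would count the two cardinalities. The denominator is already given as $|\mathcal{C}|=2^k-2$, since $\mathcal{C}$ is the power set of $\mathcal{Y}=[k]$ with the empty set and the full set removed. For the numerator, I would count the subsets of $[k]$ that contain the fixed label $y$: freely choosing the membership of the remaining $k-1$ labels yields $2^{k-1}$ such subsets. Any set containing $y$ is automatically non-empty, so the only member of this collection lying outside $\mathcal{C}$ is the full set $\mathcal{Y}$ itself; removing it leaves $2^{k-1}-1$ sets in $\mathcal{C}$ that contain $y$. Substituting gives $p(y\in Y)=(2^{k-1}-1)/(2^k-2)=1/2$, using $2^k-2=2(2^{k-1}-1)$.

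As a conceptual check (and an alternative route that sidesteps the arithmetic), I would note that complementation $Y\mapsto\mathcal{Y}\setminus Y$ is an involution on $\mathcal{C}$: it is well defined because the complement of a non-empty proper subset is again a non-empty proper subset, and it sends every set containing $y$ to a set not containing $y$ and vice versa. This exhibits a bijection between $\{Y\in\mathcal{C}:y\in Y\}$ and $\{Y\in\mathcal{C}:y\notin Y\}$, which partition $\mathcal{C}$; hence each has size $|\mathcal{C}|/2$ and the desired probability is $1/2$.

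There is no serious obstacle here, as the statement is an elementary counting identity. The only points requiring care are justifying that the conditioning on $\boldsymbol{x}$ is vacuous—which follows directly from the uniform-sampling description of the labeling system—and correctly handling the two excluded sets $\emptyset$ and $\mathcal{Y}$ in the definition of $\mathcal{C}$. In particular one should notice that $\emptyset$ never contains $y$ while $\mathcal{Y}$ always does, so exactly one excluded set is subtracted from each side and the symmetry underlying the $1/2$ is preserved.
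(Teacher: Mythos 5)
Your proof is correct and rests on the same core count as the paper's: there are $2^{k-1}-1$ sets in $\mathcal{C}$ containing a fixed label out of $|\mathcal{C}|=2^k-2$, giving $1/2$. The only real difference is how the conditioning on $\boldsymbol{x}$ is discharged: you drop it in one line by appealing to the independence of the sampling of $Y$ from $\boldsymbol{x}$, whereas the paper decomposes $p(y\in Y\mid\boldsymbol{x})=\sum_i p(y\in Y, y=i\mid\boldsymbol{x})$, shows each term equals $\tfrac{1}{2}p(y=i\mid\boldsymbol{x})$, and sums; that longer route is what makes the argument airtight when $y$ is treated as random given $\boldsymbol{x}$ rather than deterministic, since the key point is that $p(i\in Y)$ is the \emph{same} constant for every $i$, so averaging against the posterior changes nothing. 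Your remark at the end shows you see this, so there is no gap. The complementation involution $Y\mapsto\mathcal{Y}\setminus Y$ is a genuinely different and cleaner way to obtain the $1/2$ that the paper does not use: it explains the factor of two structurally (the excluded sets $\emptyset$ and $\mathcal{Y}$ are swapped by the involution, one on each side) rather than by arithmetic, at the cost of not generalizing to non-uniform sampling schemes where the explicit count would still be the natural tool.
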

It is quite intuitive to verify that Lemma~\ref{simple_lemma} indeed holds. Specifically, if we do not have any information of $Y$, we may randomly guess with even probabilities whether the correct $y$ is included in an unknown label set $Y$ or not. A rigorous mathematical proof is provided in Appendix A.3. Based on Lemma~\ref{simple_lemma}, we have the following theorem.
%In this way,{b} all the sampled data whose label sets that contain the correct label clearly match the same data distribution defined in Eq.~(\ref{distr1}). We demonstrate this fact by the following theorem.
\begin{thm}
\label{real_world}
In the above setting, the distribution of the collected data whose correct label $y\in\mathcal{Y}$ is included in the label set $Y\in\mathcal{C}$ is the same as Eq.~(\ref{distr1}), i.e., $p(\boldsymbol{x},Y\mid y\in Y) = \widetilde{p}(\boldsymbol{x},Y)$ where $\widetilde{p}(\boldsymbol{x},Y)$ is defined in Eq.~(\ref{distr1}).
\end{thm}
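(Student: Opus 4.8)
The plan is to compute the conditional density $p(\boldsymbol{x}, Y \mid y \in Y)$ directly via Bayes' rule and show it collapses to $\widetilde{p}(\boldsymbol{x}, Y)$ as defined in Eq.~(\ref{distr1}). The labeling system described in Section~\ref{motivation} specifies the generative mechanism: an instance $\boldsymbol{x}$ is drawn with its correct label $y$ according to $p(\boldsymbol{x}, y)$, and \emph{independently} a candidate set $Y$ is drawn uniformly from $\mathcal{C}$, so that the joint proposal density factorizes as $p(\boldsymbol{x}, y, Y) = p(\boldsymbol{x}, y)\, p(Y)$ with $p(Y) = 1/|\mathcal{C}| = 1/(2^k - 2)$ for every $Y \in \mathcal{C}$. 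The conditioning event ``$y \in Y$'' is the acceptance step in which a labeler confirms the correct label lies in the proposed set.

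First I would write
\begin{gather}
{\textstyle p(\boldsymbol{x}, Y \mid y \in Y) = \frac{p(\boldsymbol{x}, Y, y \in Y)}{p(y \in Y)},}
\end{gather}
and expand the numerator by summing over the correct label. Since $y \in Y$ holds exactly when the (unobserved) correct label is one of the elements of $Y$, I would write the numerator as $\sum_{i \in Y} p(\boldsymbol{x}, y = i)\, p(Y)$, using the independence of the uniform sampling of $Y$ from $(\boldsymbol{x}, y)$. For the denominator, Lemma~\ref{simple_lemma} gives $p(y \in Y \mid \boldsymbol{x}) = 1/2$ for every $\boldsymbol{x}$, and integrating this against $p(\boldsymbol{x})$ yields $p(y \in Y) = 1/2$ marginally.

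Next I would combine these pieces. The numerator becomes $p(Y) \sum_{i \in Y} p(\boldsymbol{x}, y = i) = \frac{1}{2^k - 2} \sum_{i \in Y} p(\boldsymbol{x}, y = i)$, and dividing by the denominator $1/2$ multiplies by $2$, giving $\frac{2}{2^k - 2} \sum_{i \in Y} p(\boldsymbol{x}, y = i) = \frac{1}{2^{k-1} - 1} \sum_{i \in Y} p(\boldsymbol{x}, y = i)$. The final step is to recognize this as exactly $\widetilde{p}(\boldsymbol{x}, Y)$: rewriting the definition in Eq.~(\ref{distr1}), the sum $\sum_{i=1}^k p(Y \mid y = i)\, p(\boldsymbol{x}, y = i)$ retains only the terms with $i \in Y$ (since $p(Y \mid y = i) = 0$ when $i \notin Y$), each carrying the coefficient $1/(2^{k-1}-1)$, which matches term by term.

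The main obstacle I anticipate is bookkeeping the conditioning event correctly rather than any deep computation: the subtlety is that in the proposal model $Y$ is sampled \emph{independently} of $(\boldsymbol{x}, y)$, so the dependence of $Y$ on the correct label arises \emph{only} through the conditioning on acceptance. One must be careful that the factor $p(Y)$ is the \emph{unconditional} uniform probability $1/(2^k-2)$ over all of $\mathcal{C}$, and that the key cancellation $\frac{2}{2^k-2} = \frac{1}{2^{k-1}-1}$ is what converts the uniform-over-$\mathcal{C}$ prior into the uniform-over-supersets-of-$y$ conditional appearing in Eq.~(\ref{distr1}). This cancellation is precisely where Lemma~\ref{simple_lemma}'s value of $1/2$ does its work, so invoking that lemma for the denominator is the crux of the argument.
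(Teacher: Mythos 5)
Your proposal is correct and follows essentially the same route as the paper's proof: both apply Bayes' rule to the acceptance event, use the independence of the uniformly sampled $Y$ from $(\boldsymbol{x},y)$ together with $p(Y)=1/(2^k-2)$, invoke Lemma~\ref{simple_lemma} for the value $1/2$ in the denominator, and exploit the cancellation $2/(2^k-2)=1/(2^{k-1}-1)$. The only cosmetic difference is that the paper conditions on $\boldsymbol{x}$ throughout and multiplies by $p(\boldsymbol{x})$ at the end, whereas you work with the joint density directly; the computations are identical.
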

The proof is provided in Appendix A.4.
\section{Consistent Methods}\label{estimators}
In this section, based on our assumed partially labeled data distribution in Eq.~(\ref{distr1}), we present a novel risk-consistent method and a novel classifier-consistent method and theoretically derive an estimator error bound for each of them. Both methods are agnostic in specific classification models and can be easily trained with stochastic optimization, which ensures their scalability to large-scale datasets.
\subsection{Risk-Consistent Method}
For the risk-consistent method, we employ the \emph{importance reweighting} strategy~\cite{gretton2009covariate} to rewrite the classification risk $R(f)$ as
\begin{align}
\nonumber
& \textstyle R(f) =  \mathbb{E}_{p(\boldsymbol{x},y)}[\mathcal{L}\big(f(\boldsymbol{x}),y\big)] =\textstyle \int_{\boldsymbol{x}}\sum\nolimits_{i=1}^k p(y=i\mid\boldsymbol{x})\mathcal{L}\big(f(\boldsymbol{x}),i\big)p(\boldsymbol{x})\text{d}\boldsymbol{x}\\
\nonumber
& \textstyle = \int_{\boldsymbol{x}}\sum\nolimits_{i=1}^k\frac{1}{|\mathcal{C}|}\sum\nolimits_{Y\in\mathcal{C}}p(Y\mid\boldsymbol{x})\frac{p(y=i\mid\boldsymbol{x})}{p(Y\mid\boldsymbol{x})}\mathcal{L}\big(f(\boldsymbol{x}),i\big)p(\boldsymbol{x})\text{d}\boldsymbol{x}\\
\nonumber
&\textstyle = \frac{1}{|\mathcal{C}|}\int_{\boldsymbol{x}}\sum\nolimits_{Y\in\mathcal{C}}p(Y|\boldsymbol{x})\Big[\sum\nolimits_{i=1}^k\frac{p(y=i\mid\boldsymbol{x})}{p(Y\mid\boldsymbol{x})}\mathcal{L}(f(\boldsymbol{x}),i)\Big]p(\boldsymbol{x})\text{d}\boldsymbol{x}\\
\label{temp_un}
& \textstyle =\frac{1}{2^k-2}\mathbb{E}_{\widetilde{p}(\boldsymbol{x},Y)}\Big[\sum\nolimits_{i=1}^k\frac{p(y=i\mid\boldsymbol{x})}{p(Y\mid\boldsymbol{x})}\mathcal{L}\big(f(\boldsymbol{x}),i\big)\Big] = R_{\mathrm{rc}}(f).
\end{align}
Here, $p(Y\mid\boldsymbol{x})$ can be calculated by
\begin{gather}
\label{reduction}
\textstyle p(Y\mid\boldsymbol{x}) = \sum\nolimits_{j=1}^k p(Y\mid y=j)p(y=j\mid\boldsymbol{x})=\frac{1}{2^{k-1}-1}\sum\nolimits_{j\in Y}p(y=j\mid\boldsymbol{x}),
\end{gather}
where the last equality holds due to Eq.~(\ref{distr1}). By substituting Eq.~(\ref{reduction}) into Eq.~(\ref{temp_un}), we obtain
\begin{gather}
\textstyle R_{\mathrm{rc}}(f) = \frac{1}{2}\mathbb{E}_{\widetilde{p}(\boldsymbol{x},Y)}\Big[\sum\nolimits_{i=1}^k\frac{p(y=i\mid\boldsymbol{x})}{\sum\nolimits_{j\in Y}p(y=j\mid\boldsymbol{x})}\mathcal{L}\big(f(\boldsymbol{x}),i\big)\Big].
\end{gather}
In this way, its empirical risk estimator can be expressed as
\begin{gather}
\label{emp_ris}
\textstyle \widehat{R}_{\mathrm{rc}}(f)=\frac{1}{2n}\sum_{o=1}^n\Big(\sum\nolimits_{i=1}^k\frac{p(y_o=i\mid\boldsymbol{x}_o)}{\sum\nolimits_{j\in Y_o}p(y_o=j\mid\boldsymbol{x}_o)}\mathcal{L}\big(f(\boldsymbol{x}_o),i\big)\Big),
\end{gather}
where $\{\boldsymbol{x}_o,Y_o\}_{o=1}^n$ are partially labeled examples drawn from $\widetilde{p}(\boldsymbol{x},Y)$. Note that $p(y=i\mid\boldsymbol{x})$ is not accessible from the given data. Therefore, we apply the softmax function on the model output $f(\boldsymbol{x})$ to approximate $p(y=i\mid\boldsymbol{x})$, i.e., $p(y=i\mid\boldsymbol{x}) = g_i(\boldsymbol{x})$ where $g_i(\boldsymbol{x})$ is the probability of label $i$ being the true label of $\boldsymbol{x}$, which is calculated by $g_i(\boldsymbol{x})={\exp(f_i(\boldsymbol{x}))}/{\sum_{j=1}^k\exp(f_j(\boldsymbol{x}))}$, and $f_i(\boldsymbol{x})$ is the $i$-th coordinate of $f(\boldsymbol{x})$. Note that the non-candidate labels can never be the correct label. Hence we further correct $p(y=i\mid\boldsymbol{x})$ by setting the confidence of each non-candidate label to 0, i.e.,
\begin{gather}
\label{update_conf}
\textstyle p(y=i\mid\boldsymbol{x}) = g_i(\boldsymbol{x}) \text{ if } i\in Y, \text{ otherwise } p(y=i\mid\boldsymbol{x}) = 0,\ \forall (\boldsymbol{x},Y)\sim\widetilde{p}(\boldsymbol{x},Y).
\end{gather}
% \begin{gather}
% \label{update}
% p(y=i\mid\boldsymbol{x}) \approx \left\{\begin{matrix}
% g_i(\boldsymbol{x}),&\ \text{if}\ i\in Y,\\
% 0,&\ \text{if}\ i\notin Y.
% \end{matrix}\right.
% \end{gather}
As shown in Eq.~(\ref{emp_ris}), our risk-consistent method does not rely on specific loss functions, hence we simply adopt the widely-used categorical cross entropy loss for practical implementation. The pseudo-code of the Risk-Consistent (RC) method is presented in Algorithm~\ref{algo_ris}. It is worth noting that the algorithmic process of RC surprisingly coincides with that of PRODEN \cite{lv2020progressive}. However, they are derived in totally different manners. Besides, PRODEN does not hold any theoretical guarantee while we show that our proposed RC method is consistent.
%Therefore, despite same algorithmic process, RC

%\paragraph{Estimation Error Bound.}
Here, we establish an estimation error bound for our RC method to demonstrate its learning consistency. Let $\widehat{f}_{\mathrm{rc}}=\min_{f\in\mathcal{F}}\widehat{R}_{\mathrm{rc}}(f)$ be the empirical risk minimizer and $f^\star=\min_{f\in\mathcal{F}}{R}(f)$ be the true risk minimizer. Besides, we define the function space $\mathcal{H}_y$ for the label $y\in\mathcal{Y}$ as $\{h:\boldsymbol{x}\mapsto f_y(\boldsymbol{x})\mid f\in\mathcal{F}\}$. Let $\mathfrak{R}_n(\mathcal{H}_y)$ be the expected Rademacher complexity \cite{bartlett2002rademacher} of $\mathcal{H}_y$ with sample size $n$, then we have the following theorem.
\begin{thm}
\label{ris_bound}
Assume the loss function $\mathcal{L}(f(\boldsymbol{x}),y)$ is $\rho$-Lipschitz with respect to $f(\boldsymbol{x})$ ($0<\rho<\infty$) for all $y\in\mathcal{Y}$ and upper-bounded by $M$, i.e., $M=\sup_{\boldsymbol{x}\in\mathcal{X},f\in\mathcal{F},y\in\mathcal{Y}}\mathcal{L}(f(\boldsymbol{x}),y)$. Then, for any $\delta>0$, with probability at least $1-\delta$,
\begin{gather}
\nonumber
\textstyle R(\widehat{f}_{\mathrm{rc}})-R(f^\star)\leq 4\sqrt{2}\rho\sum\nolimits_{y=1}^k\mathfrak{R}_n(\mathcal{H}_y)+M\sqrt{\frac{\log\frac{2}{\delta}}{2n}},
\end{gather}
\end{thm}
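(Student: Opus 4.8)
The plan is to follow the classical uniform-convergence route, leaning on the risk-consistency identity $R_{\mathrm{rc}}(f)=R(f)$ established in Eq.~(\ref{temp_un}). Because of this identity, controlling the excess risk $R(\widehat{f}_{\mathrm{rc}})-R(f^\star)$ reduces to controlling how far the empirical estimator $\widehat{R}_{\mathrm{rc}}$ strays from its population version $R_{\mathrm{rc}}$ uniformly over $\mathcal{F}$. Concretely, since $\widehat{f}_{\mathrm{rc}}$ minimizes $\widehat{R}_{\mathrm{rc}}$ while $f^\star$ minimizes $R=R_{\mathrm{rc}}$, I would add and subtract $\widehat{R}_{\mathrm{rc}}(\widehat{f}_{\mathrm{rc}})$ and $\widehat{R}_{\mathrm{rc}}(f^\star)$, discard the nonpositive empirical optimization gap, and obtain the standard bound $R(\widehat{f}_{\mathrm{rc}})-R(f^\star)\le 2\sup_{f\in\mathcal{F}}|\widehat{R}_{\mathrm{rc}}(f)-R_{\mathrm{rc}}(f)|$. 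Everything then rests on bounding this supremum.

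Next I would concentrate the supremum using McDiarmid's bounded-difference inequality. The crucial observation is that each summand of $\widehat{R}_{\mathrm{rc}}$ has the form $\tfrac12\sum_{i\in Y}w_i(\boldsymbol{x},Y)\,\mathcal{L}(f(\boldsymbol{x}),i)$, where the reweighting coefficients $w_i=p(y=i\mid\boldsymbol{x})/\sum_{j\in Y}p(y=j\mid\boldsymbol{x})$ are nonnegative and sum to one over $i\in Y$; hence each summand is a convex combination of losses and lies in $[0,M/2]$. Replacing a single example therefore changes $\sup_{f}|\widehat{R}_{\mathrm{rc}}-R_{\mathrm{rc}}|$ by at most $M/(2n)$, and McDiarmid gives, with probability at least $1-\delta$, that $\sup_{f}|\widehat{R}_{\mathrm{rc}}-R_{\mathrm{rc}}|\le \mathbb{E}\big[\sup_{f}|\widehat{R}_{\mathrm{rc}}-R_{\mathrm{rc}}|\big]+\tfrac{M}{2}\sqrt{\log(2/\delta)/(2n)}$. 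After the factor $2$ from the first step this produces exactly the $M\sqrt{\log(2/\delta)/(2n)}$ term in the theorem.

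It remains to bound the expected supremum, and this is where the main obstacle lies. Standard ghost-sample symmetrization, applied to each of the two one-sided deviations, bounds $\mathbb{E}[\sup_f|\widehat{R}_{\mathrm{rc}}-R_{\mathrm{rc}}|]$ by $4\,\mathfrak{R}_n(\mathcal{G})$, where $\mathcal{G}=\{(\boldsymbol{x},Y)\mapsto\tfrac12\sum_{i\in Y}w_i\mathcal{L}(f(\boldsymbol{x}),i):f\in\mathcal{F}\}$. The difficulty is that each per-example map depends on the \emph{whole} output vector $f(\boldsymbol{x})\in\mathbb{R}^k$ through a data-dependent weighting, so the scalar Talagrand contraction does not apply and a naive union over the $k$ loss terms would cost a spurious factor of $k$. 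The clean way around this is to note that, since every $\mathcal{L}(\cdot,i)$ is $\rho$-Lipschitz and the $w_i$ form a convex combination, the map $\boldsymbol{v}\mapsto\tfrac12\sum_{i\in Y}w_i\mathcal{L}(\boldsymbol{v},i)$ is $\tfrac{\rho}{2}$-Lipschitz in Euclidean norm for each fixed example, and then to invoke the vector-valued contraction lemma (Maurer), which supplies the factor $\sqrt2$ and decomposes the complexity coordinatewise as $\mathfrak{R}_n(\mathcal{G})\le \tfrac{\sqrt2\,\rho}{2}\sum_{y=1}^k\mathfrak{R}_n(\mathcal{H}_y)$.

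Chaining the three estimates then closes the argument: the first-step factor $2$, the symmetrization factor $4$, and the contraction factor $\tfrac{\sqrt2\,\rho}{2}$ multiply to give the leading term $4\sqrt2\,\rho\sum_{y=1}^k\mathfrak{R}_n(\mathcal{H}_y)$, while the McDiarmid slack contributes $M\sqrt{\log(2/\delta)/(2n)}$. I expect the routine parts (the ERM decomposition and the McDiarmid step) to be mechanical; the genuinely delicate point is recognizing the convex-combination structure of the reweighted loss so that the vector contraction can be applied without incurring an extra dimension factor.
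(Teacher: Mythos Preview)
Your proposal is correct and follows essentially the same route as the paper: the ERM decomposition (paper's Lemma~\ref{est_lemma}), McDiarmid with bounded differences $M/(2n)$ coming from the convex-combination structure of the reweighted loss (paper's Lemma~\ref{ris_lemma}), and Maurer's vector contraction applied after observing that the per-example map is Lipschitz because the weights sum to one (paper's Lemma~\ref{ris_rademacher}). The only cosmetic difference is in constant bookkeeping: you take a loose factor~$4$ in symmetrization but keep the sharp Lipschitz constant $\rho/2$, whereas the paper takes the standard factor~$2$ in symmetrization but drops the $1/2$ when passing from $\widetilde{\mathfrak{R}}_n(\mathcal{G}_{\mathrm{rc}})$ to $\mathfrak{R}_n(\mathcal{L}\circ\mathcal{F})$; both arrive at $4\sqrt{2}\rho$.
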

The proof of Theorem~\ref{ris_bound} is provided in Appendix B. Generally, $\mathfrak{R}_n(\mathcal{H}_y)$ can be bounded by $C_{\mathcal{H}}/\sqrt{n}$ for a positive constant $C_{\mathcal{H}}$ \cite{lu2020mitigating,xia2019anchor,golowich2017size}. Hence Theorem~\ref{ris_bound} shows that the empirical risk minimizer $f_{\mathrm{rc}}$ converges to the true risk minimizer $f^\star$ as $n\rightarrow\infty$.
%\paragraph{Practical Implementation}
\subsection{Classifier-Consistent Method}
For the classifier-consistent method, we start by introducing a transition matrix $\boldsymbol{Q}$ that describes the probability of the candidate label set given an ordinary label. Specifically, the transition matrix $\boldsymbol{Q}$ is defined as
% \begin{gather}
% \begin{bmatrix}
% p(Y=C_1|y=1) & \cdots & p(Y=C_{2^k-2}|y=1)\\
% p(Y=C_1|y=2) & \cdots & p(Y=C_{2^k-2}|y=2)\\
% \vdots & \ddots & \vdots\\
% p(Y=C_1|y=k) & \cdots & p(Y=C_{2^k-2}|y=k)
% \end{bmatrix},
% \end{gather}
$Q_{ij} = p(Y=C_j\mid y=i)$ where $C_j\in\mathcal{C}$ ($j\in[2^k-2]$) is a specific label set. By further taking into account the assumed data distribution in Eq.~(\ref{distr1}), we can instantiate the transition matrix $\boldsymbol{Q}$ as $Q_{ij}=\frac{1}{2^{k-1}-1}$ if $i\in C_j$, otherwise $Q_{ij}=0$.
Let us introduce $q_j(\boldsymbol{x})=p(Y=C_j\mid\boldsymbol{x})$ and $g_i(\boldsymbol{x})=p(y=i\mid\boldsymbol{x})$, then we can obtain
$q(\boldsymbol{x}) = \boldsymbol{Q}^\top g(\boldsymbol{x})$ with the assumption $p(Y\mid\boldsymbol{x},y)=p(Y\mid y)$. Given each partially labeled example $(\boldsymbol{x},Y)$ sampled from $\widetilde{p}(\boldsymbol{x},Y)$, the proposed classifier-consistent risk estimator is presented as
\begin{gather}
\label{cls_risk}
R_{\mathrm{cc}}(f) = \mathbb{E}_{\widetilde{p}(\boldsymbol{x},Y)}[\mathcal{L}(q(\boldsymbol{x}),\widetilde{y})], \text{ where } Y=C_{\widetilde{y}}.
\end{gather}
In this formulation, we regard the candidate label set $Y$ as a virtual label $\widetilde{y}$ if $Y$ is a specific label set $C_{\widetilde{y}}$. Since there are $2^k-2$ possible label sets, we denote by $\widetilde{\mathcal{Y}}$ the virtual label space where $\widetilde{\mathcal{Y}} = [2^k-2]$ and $\widetilde{y}\in\widetilde{\mathcal{Y}}$. It is worth noting that the transition matrix $\boldsymbol{Q}$ has full rank, because all rows of $\boldsymbol{Q}$ are linearly independent by the definition of $\boldsymbol{Q}$.
Then, in order to prove that this method is classifier-consistent, we introduce the following lemma.
% \begin{lemma}
% \label{assump1}
% The transition matrix $\boldsymbol{Q}$ has full rank.
% \end{lemma}
% The proof is provided in Appendix C.1. %Note that this theorem has been proved by some studies \cite{yu2018learning,lv2020progressive}.
\begin{lemma}
\label{assump2}
If certain loss functions are used (e.g., the softmax cross entropy loss or mean squared error), by minimizing the expected risk $R(f)$, the optimal mapping $g^\star$ satisfies $g_i^\star(\boldsymbol{x}) = p(y=i\mid\boldsymbol{x})$.
\end{lemma}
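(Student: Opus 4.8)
The plan is to exploit the pointwise structure of the risk. First I would rewrite $R(f)=\mathbb{E}_{p(\boldsymbol{x})}\big[\sum_{i=1}^k p(y=i\mid\boldsymbol{x})\,\mathcal{L}(f(\boldsymbol{x}),i)\big]$, so that the outer expectation over $p(\boldsymbol{x})$ decouples and minimizing $R(f)$ reduces to minimizing the \emph{conditional risk} $\sum_{i=1}^k \eta_i\,\mathcal{L}(f(\boldsymbol{x}),i)$ separately at each $\boldsymbol{x}$, where I abbreviate $\eta_i:=p(y=i\mid\boldsymbol{x})$. This pointwise reduction is legitimate provided $\mathcal{F}$ is rich enough (e.g.\ a universal approximator) that $f(\boldsymbol{x})$, and hence its softmax image $g(\boldsymbol{x})$, can attain the pointwise optimum independently for (almost) every $\boldsymbol{x}$; I would state this richness as the standing assumption. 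Both claimed losses then turn the conditional risk into a strictly proper scoring rule evaluated at the pair $(\eta,g)$, which is uniquely minimized at $g=\eta$.

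For the softmax cross-entropy loss, $\mathcal{L}(f(\boldsymbol{x}),i)=-\log g_i(\boldsymbol{x})$ with $g_i=\exp(f_i)/\sum_j\exp(f_j)$, so the conditional risk equals the cross entropy $-\sum_{i=1}^k\eta_i\log g_i$. Since the softmax image $g$ ranges over the probability simplex and $\eta$ also lies in the simplex, I would invoke Gibbs' inequality in the form $-\sum_i\eta_i\log g_i=H(\eta)+\mathrm{KL}(\eta\,\|\,g)\ge H(\eta)$, with equality if and only if $g=\eta$. Hence the minimizer satisfies $g_i^\star(\boldsymbol{x})=\eta_i=p(y=i\mid\boldsymbol{x})$.

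For the mean squared error loss I would write $\mathcal{L}(g(\boldsymbol{x}),i)=\|g(\boldsymbol{x})-\boldsymbol{e}_i\|_2^2$ with $\boldsymbol{e}_i$ the one-hot vector of label $i$, so the conditional risk is $\sum_{i=1}^k\eta_i\|g-\boldsymbol{e}_i\|_2^2$. Expanding and using $\sum_i\eta_i=1$, each coordinate contributes $g_j^2-2\eta_j g_j+\text{const}$; setting the derivative to zero (equivalently completing the square as $\|g-\eta\|_2^2+\text{const}$) gives the unconstrained minimizer $g_j^\star=\eta_j$. Because $\eta$ itself lies in the simplex, this unconstrained minimizer is feasible and hence also the constrained one, so again $g_i^\star(\boldsymbol{x})=p(y=i\mid\boldsymbol{x})$.

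The main obstacle is not the algebra but justifying the pointwise argument and its boundary behavior. I expect the delicate part to be the cross-entropy case when some $\eta_i=0$: the softmax never outputs an exact zero, so the infimum of the conditional risk is only approached as $f_i\to-\infty$ and is not attained by any finite $f$. I would handle this by noting that the minimizer is still characterized by the vanishing of the $\mathrm{KL}$ term, which forces $g^\star=\eta$ in the closure of the simplex, so the identity $g_i^\star(\boldsymbol{x})=p(y=i\mid\boldsymbol{x})$ holds for the limiting Bayes-optimal predictor; combined with the richness assumption on $\mathcal{F}$, this delivers the lemma for both losses.
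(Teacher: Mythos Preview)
Your proposal is correct and shares the paper's overall scaffolding---first rewrite $R(f)$ as an expectation over $p(\boldsymbol{x})$ of a conditional risk and then minimize that conditional risk pointwise on the simplex---but the two diverge in how the pointwise minimization is carried out. The paper solves both cases by Lagrange multipliers: for cross entropy it forms $\Phi(g)=-\sum_i\eta_i\log g_i+\lambda(\sum_i g_i-1)$, obtains $g_i^\star=\eta_i/\lambda$, and deduces $\lambda=1$ from the normalization; for MSE it forms $\Phi(g)=\sum_i(\eta_i-g_i)^2+\lambda'(\sum_i g_i-1)$, obtains $g_i^\star=\eta_i-\lambda'/2$, and deduces $\lambda'=0$. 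You instead use Gibbs' inequality (the $H(\eta)+\mathrm{KL}(\eta\Vert g)$ decomposition) for cross entropy and complete the square for MSE. Your route is a bit cleaner: it delivers uniqueness of the minimizer for free via the equality condition of Gibbs' inequality, avoids second-order verification that the Lagrangian stationary point is a minimum, and makes transparent that both losses are strictly proper scoring rules. You also flag the boundary subtlety when some $\eta_i=0$ (the softmax never attains zero at finite $f$), which the paper's proof does not mention; your resolution via the closure of the simplex is the standard fix. The paper's Lagrangian approach, on the other hand, is more mechanical and would generalize uniformly to other differentiable losses that lack a convenient information-theoretic identity.
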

The proof is provided in Appendix C.1. The same proof can also be found in \cite{yu2018learning,lv2020progressive}.
\begin{multicols}{2}
\begin{algorithm}[H]
\caption{RC Algorithm}
\label{algo_ris}
\noindent {\bf Input:} Model $f$, epoch $T_{\text{max}}$, iteration $I_{\text{max}}$, \hspace*{0.02in} partially labeled training set $\widetilde{\mathcal{D}}=\{(\boldsymbol{x}_i,Y_i)\}_{i=1}^n$.
\begin{algorithmic}[1]
\State \textbf{Initialize}  $p(y_i=j\mid\boldsymbol{x}_i)=1, \forall j\in Y_i$, otherwise $p(y_i=j\mid\boldsymbol{x}_i)=0$;
\For {$t$ = $1,2,\ldots,T_{\text{max}}$}
    \State \textbf{Shuffle}  $\widetilde{\mathcal{D}}=\{(\boldsymbol{x}_i,Y_i)\}_{i=1}^n$; %noisy dataset
    \For{$j = 1,\ldots,I_{\text{max}}$}
        \State \textbf{Fetch} mini-batch $\widetilde{\mathcal{D}}_j$ from $\widetilde{\mathcal{D}}$;
        \State \textbf{Update} model $f$ by $\widehat{R}_{\mathrm{rc}}$ in Eq.~(\ref{emp_ris});
        %\State\textbf{Calculate} the softmax output $g_i(\boldsymbol{x})=\frac{\exp(f_i(\boldsymbol{x}))}{\sum_{j=1}^k\exp(f_j(\boldsymbol{x}))}$;
        \State \textbf{Update} $p(y_i\mid\boldsymbol{x}_i)$ by Eq.~(\ref{update_conf});
    \EndFor
\EndFor\quad \hspace*{0.02in} {\bf Output:} $f$.
\end{algorithmic}
\end{algorithm}

\begin{algorithm}[H]
\caption{CC Algorithm}
\label{algo_cls}
\noindent {\bf Input:} Model $f$, epoch $T_{\text{max}}$, iteration $I_{\text{max}}$, \hspace*{0.02in} partially labeled training set $\widetilde{\mathcal{D}}=\{(\boldsymbol{x}_i,Y_i)\}_{i=1}^n$;
\begin{algorithmic}[1]
\For{$t$ = $1,2,\ldots,T_{\text{max}}$}
    \State \textbf{Shuffle} the partially labeled training set $\widetilde{\mathcal{D}}=\{(\boldsymbol{x}_i,Y_i)\}_{i=1}^n$; %noisy dataset
    \For{$j = 1,\ldots,I_{\text{max}}$}
        \State \textbf{Fetch} mini-batch $\widetilde{\mathcal{D}}_j$ from $\widetilde{\mathcal{D}}$;
        \State \textbf{Update} model $f$ by minimizing the empirical risk estimator $\widehat{R}_{\mathrm{cc}}$ in  Eq.~(\ref{emp_cls});
    \EndFor
\EndFor
\end{algorithmic}
\hspace*{0.02in} {\bf Output:} $f$.
\end{algorithm}
\end{multicols}
\begin{thm}
\label{cls_cons}
When the transition matrix $\boldsymbol{Q}$ has full rank and the condition in Lemma \ref{assump2} is satisfied, the minimizer $f_{\mathrm{cc}} = \argmin_{f\in\mathcal{F}}R_{\mathrm{cc}}(f)$ is also the true minimizer $f^\star = \argmin_{f\in\mathcal{F}}R(f)$, i.e., $f_{\mathrm{cc}}=f^\star$ (classifier-consistency).
\end{thm}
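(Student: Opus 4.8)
The plan is to recognize that $R_{\mathrm{cc}}(f)$ is nothing but an \emph{ordinary} classification risk over the virtual-label space $\widetilde{\mathcal{Y}}=[2^k-2]$, in which the classifier's posterior estimate is the transformed vector $q(\boldsymbol{x})=\boldsymbol{Q}^\top g(\boldsymbol{x})$ rather than $g(\boldsymbol{x})$ itself. Concretely, each partially labeled example $(\boldsymbol{x},Y)\sim\widetilde{p}(\boldsymbol{x},Y)$ is reinterpreted as an ordinarily labeled example $(\boldsymbol{x},\widetilde{y})$ with $Y=C_{\widetilde{y}}$, and the true virtual-label posterior is $p(\widetilde{y}\mid\boldsymbol{x})=p(Y=C_{\widetilde{y}}\mid\boldsymbol{x})$. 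Before invoking any calibration result, I would first record that $q(\boldsymbol{x})$ is a legitimate probability vector over $\widetilde{\mathcal{Y}}$: since every row of $\boldsymbol{Q}$ sums to $\sum_{j:\,i\in C_j}\frac{1}{2^{k-1}-1}=1$, we get $\sum_{j}q_j(\boldsymbol{x})=\sum_i g_i(\boldsymbol{x})=1$ for any softmax output $g(\boldsymbol{x})$, so the conditions of Lemma~\ref{assump2} are meaningful in the virtual-label problem.

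Next I would apply Lemma~\ref{assump2} \emph{to the virtual-label problem}. For the same class of losses (softmax cross entropy or mean squared error), minimizing $R_{\mathrm{cc}}(f)=\mathbb{E}_{\widetilde{p}(\boldsymbol{x},Y)}[\mathcal{L}(q(\boldsymbol{x}),\widetilde{y})]$ over the attainable posteriors forces the optimal estimate to equal the true virtual-label posterior, i.e.\ the minimizer $f_{\mathrm{cc}}$ yields $q_{\mathrm{cc}}(\boldsymbol{x})=\boldsymbol{Q}^\top g_{\mathrm{cc}}(\boldsymbol{x})$ with $q_{\mathrm{cc},j}(\boldsymbol{x})=p(Y=C_j\mid\boldsymbol{x})$ for all $j$. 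Separately, the transition relation $q(\boldsymbol{x})=\boldsymbol{Q}^\top g(\boldsymbol{x})$, evaluated at the genuine class posterior $g^\star$ with $g_i^\star(\boldsymbol{x})=p(y=i\mid\boldsymbol{x})$, shows that the true virtual-label posterior is exactly $\boldsymbol{Q}^\top g^\star(\boldsymbol{x})$. Combining these two facts gives $\boldsymbol{Q}^\top g_{\mathrm{cc}}(\boldsymbol{x})=\boldsymbol{Q}^\top g^\star(\boldsymbol{x})$ pointwise.

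Finally I would exploit the full-rank hypothesis. Because all $k$ rows of $\boldsymbol{Q}$ are linearly independent, $\mathrm{rank}(\boldsymbol{Q})=\mathrm{rank}(\boldsymbol{Q}^\top)=k$, so the linear map $g\mapsto\boldsymbol{Q}^\top g$ from $\mathbb{R}^k$ into $\mathbb{R}^{2^k-2}$ has trivial kernel and is therefore injective. Applying injectivity to $\boldsymbol{Q}^\top g_{\mathrm{cc}}(\boldsymbol{x})=\boldsymbol{Q}^\top g^\star(\boldsymbol{x})$ yields $g_{\mathrm{cc}}(\boldsymbol{x})=g^\star(\boldsymbol{x})$ for every $\boldsymbol{x}$. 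Since $g^\star$ is precisely the minimizer of $R(f)$ identified in Lemma~\ref{assump2}, this establishes $f_{\mathrm{cc}}=f^\star$, which is the claimed classifier-consistency.

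The delicate step, and the one I would spell out most carefully, is the passage through Lemma~\ref{assump2} in the virtual-label space: $q$ is not an unconstrained posterior estimate but is confined to the image $\{\boldsymbol{Q}^\top g : g \text{ a softmax output}\}$. The argument that the constrained minimizer coincides with the unconstrained Bayes-optimal $q$ hinges on the true posterior being \emph{attainable} within this image, which holds because $g^\star$ is itself a valid probability vector and hence $\boldsymbol{Q}^\top g^\star$ lies in the attainable set; this in turn presumes $\mathcal{F}$ is expressive enough to represent $g^\star$ (the usual infinite-capacity idealization underlying consistency statements). I would therefore state this expressiveness/attainability assumption explicitly and verify that the pointwise minimization licensed by Lemma~\ref{assump2} survives the restriction to the image of $\boldsymbol{Q}^\top$, after which the injectivity inversion is immediate.
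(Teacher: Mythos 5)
Your proof is correct and follows essentially the same route as the paper's: apply Lemma~\ref{assump2} in the virtual-label space to conclude $q_j(\boldsymbol{x})=p(Y=C_j\mid\boldsymbol{x})$ at the minimizer, identify the true virtual-label posterior with $\boldsymbol{Q}^\top g^\star(\boldsymbol{x})$, and invert via the full-rank (injectivity) property of $\boldsymbol{Q}^\top$ to get $g_{\mathrm{cc}}=g^\star$ and hence $f_{\mathrm{cc}}=f^\star$. Your added checks --- that the rows of $\boldsymbol{Q}$ sum to one so $q(\boldsymbol{x})$ is a valid posterior, and that the true posterior is attainable within the image of $\boldsymbol{Q}^\top$ under the usual expressiveness idealization --- make explicit points the paper leaves implicit, but do not change the argument.
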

The proof is provided in Appendix C.2.

%At first glance, there seem to have some strong constraints on Theorem~\ref{cls_cons} due to the two assumptions. However, we will show in the following theorems that the two assumptions can be easily satisfied.
As suggested by Lemma \ref{assump2}, we adopt the cross entropy loss in our classifier-consistent risk estimator (i.e., Eq.~(\ref{cls_risk})) for practical implementation. In this way, we have the following empirical risk estimator:
\begin{align}
\nonumber
\textstyle &\widehat{R}_{\mathrm{cc}}(f) = \textstyle -\frac{1}{n}\sum\limits_{i=1}^n \Big( \sum\limits_{j=1}^{2^k-2}\mathbb{I}(Y_i=C_j)\log (q_j(\boldsymbol{x}_i))\Big)
=-\frac{1}{n}\sum\limits_{i=1}^n \sum\limits_{j=1}^{2^k-2}\mathbb{I}(Y_i=C_j)\log\big(\boldsymbol{Q}[:,j]^\top g(\boldsymbol{x})\big)\\
\label{emp_cls}
&\quad\textstyle =-\frac{1}{n}\sum\limits_{i=1}^n\log\Big(\frac{1}{2^{k-1}-1}\sum_{y\in Y_i}g_y(\boldsymbol{x})\Big)=-\frac{1}{n}\sum\limits_{i=1}^n\log\Big(\frac{1}{2^{k-1}-1}\sum_{y\in Y_i}\frac{\exp(f_y(\boldsymbol{x}))}{\sum\nolimits_{j}\exp(f_j(\boldsymbol{x}))}\Big),
\end{align}
where $\mathbb{I}[\cdot]$ is the indicator function. For the expected risk estimator $R_{\mathrm{cc}}(f)$, it seems that the transition matrix $\boldsymbol{Q}\in\mathbb{R}^{k\times (2^k-2)}$ is indispensable. Unfortunately, it would be computationally prohibitive, since $2^k-2$ is an extremely large number if the number of classes $k$ is large. However, for practical implementation, Eq.~(\ref{emp_cls}) shows that we do not need to explicitly calculate and store the transition matrix $\boldsymbol{Q}$, which brings no pain to optimization.
The pseudo-code of the Classifier-Consistent (CC) method is presented in Algorithm~\ref{algo_cls}.
%we do not need to explicitly store the transition matrix $\boldsymbol{Q}$. Besides,

%\paragraph{Estimation Error Bound.}
Here, we also establish an estimation error bound for the classifier-consistent method.
Let $\widehat{f}_{\mathrm{cc}}=\argmin_{f\in\mathcal{F}}\widehat{R}_{\mathrm{cc}}(f)$ be the empirical minimizer and $f^\star=\argmin_{f\in\mathcal{F}}{R}(f)$ be the true minimizer. %then we have the following theorem.
Besides, we define the function space $\mathcal{H}_y$ for the label $y\in\mathcal{Y}$ as $\{h:\boldsymbol{x}\mapsto f_y(\boldsymbol{x})\mid f\in\mathcal{F}\}$. Then, we have the following theorem.
%Specifically, let $\widehat{f}_{\text{cls}}=\min_{f\in\mathcal{F}}\widehat{R}_{\text{cls}}(f)$ and $f^\star=\min_{f\in\mathcal{F}}{R}(f)$.
\begin{thm}
\label{cls_bound}
Assume the loss function $\mathcal{L}(q(\boldsymbol{x}),\widetilde{y})$ is $\rho^\prime$-Lipschitz with respect to $f(\boldsymbol{x})$ ($0<\rho<\infty$) for all $\widetilde{y}\in\widetilde{\mathcal{Y}}$ and upper-bounded by $M$, i.e., $M=\sup_{\boldsymbol{x}\in\mathcal{X},f\in\mathcal{F},\widetilde{y}\in\widetilde{\mathcal{Y}}}\mathcal{L}(q(\boldsymbol{x}),\widetilde{y})$. Then, for any $\delta>0$, with probability at least $1-\delta$,
\begin{gather}
\nonumber
\textstyle R_{\mathrm{cc}}(\widehat{f}_{cc}) - R_{\mathrm{cc}}(f^\star)\leq
4\sqrt{2}\rho^\prime\sum\nolimits_{y=1}^k{\mathfrak{R}}_n(\mathcal{H}_y)+2M\sqrt{\frac{\log\frac{2}{\delta}}{2n}}.
\end{gather}
\end{thm}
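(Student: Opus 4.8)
The plan is to follow the standard route for estimation error bounds: reduce the excess risk $R_{\mathrm{cc}}(\widehat{f}_{\mathrm{cc}})-R_{\mathrm{cc}}(f^\star)$ to a uniform deviation between $R_{\mathrm{cc}}$ and $\widehat{R}_{\mathrm{cc}}$ over $\mathcal{F}$, control that deviation in expectation by the Rademacher complexity of the loss class, strip the loss off by a contraction argument to arrive at $\sum_{y=1}^k\mathfrak{R}_n(\mathcal{H}_y)$, and handle the deviation of the supremum from its mean by a bounded-difference (McDiarmid) inequality.

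First I would decompose the excess risk. By Theorem~\ref{cls_cons} the population minimizer $f^\star=\argmin_{f\in\mathcal{F}}R(f)$ also minimizes $R_{\mathrm{cc}}$, while $\widehat{f}_{\mathrm{cc}}$ minimizes $\widehat{R}_{\mathrm{cc}}$, so inserting and removing $\widehat{R}_{\mathrm{cc}}(\widehat{f}_{\mathrm{cc}})$ and $\widehat{R}_{\mathrm{cc}}(f^\star)$ and discarding the nonpositive empirical-minimizer gap $\widehat{R}_{\mathrm{cc}}(\widehat{f}_{\mathrm{cc}})-\widehat{R}_{\mathrm{cc}}(f^\star)\le 0$ yields
\begin{gather*}
R_{\mathrm{cc}}(\widehat{f}_{\mathrm{cc}})-R_{\mathrm{cc}}(f^\star)\leq \sup_{f\in\mathcal{F}}\big(R_{\mathrm{cc}}(f)-\widehat{R}_{\mathrm{cc}}(f)\big)+\sup_{f\in\mathcal{F}}\big(\widehat{R}_{\mathrm{cc}}(f)-R_{\mathrm{cc}}(f)\big).
\end{gather*}

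Next I would concentrate each supremum. Since every summand of $\widehat{R}_{\mathrm{cc}}=\frac{1}{n}\sum_i\mathcal{L}(q(\boldsymbol{x}_i),\widetilde{y}_i)$ lies in $[0,M]$, replacing a single example perturbs either supremum by at most $M/n$, so McDiarmid's inequality gives, with probability at least $1-\delta/2$ each, that both suprema are bounded by their expectations plus $M\sqrt{\log(2/\delta)/(2n)}$; a union bound combines them at level $1-\delta$ and produces the additive term $2M\sqrt{\log(2/\delta)/(2n)}$. A standard symmetrization step then bounds each expected supremum by $2\mathfrak{R}_n(\mathcal{L}\circ\mathcal{F})$, where $\mathcal{L}\circ\mathcal{F}=\{(\boldsymbol{x},\widetilde{y})\mapsto\mathcal{L}(q(\boldsymbol{x}),\widetilde{y}):f\in\mathcal{F}\}$, so the two contributions together give $4\mathfrak{R}_n(\mathcal{L}\circ\mathcal{F})$. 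Incidentally, the coefficient $2M$ here, against the $M$ of Theorem~\ref{ris_bound}, is exactly the absence of the $\tfrac12$ importance-reweighting factor carried by $\widehat{R}_{\mathrm{rc}}$, which there halves the bounded difference to $M/(2n)$.

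The main obstacle is the final contraction. The loss $\mathcal{L}(q(\boldsymbol{x}),\widetilde{y})$ is a function of the \emph{entire} output vector $f(\boldsymbol{x})=(f_1(\boldsymbol{x}),\dots,f_k(\boldsymbol{x}))$, since $q(\boldsymbol{x})=\boldsymbol{Q}^\top g(\boldsymbol{x})$ couples all coordinates through the softmax $g$; hence the scalar Talagrand contraction lemma does not apply, and I would instead invoke the vector-valued contraction inequality (Maurer) for a $\rho'$-Lipschitz map from $\mathbb{R}^k$, which bounds $\mathfrak{R}_n(\mathcal{L}\circ\mathcal{F})$ by $\sqrt{2}\,\rho'\sum_{y=1}^k\mathfrak{R}_n(\mathcal{H}_y)$ in terms of the per-coordinate classes $\mathcal{H}_y$. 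Substituting this into $4\mathfrak{R}_n(\mathcal{L}\circ\mathcal{F})$ gives the leading term $4\sqrt{2}\,\rho'\sum_{y=1}^k\mathfrak{R}_n(\mathcal{H}_y)$ and, combined with the concentration term, the claimed bound. The delicate points to verify are that the $\rho'$-Lipschitz hypothesis is stated with respect to the Euclidean geometry used by the vector contraction inequality, and that the identification $Y=C_{\widetilde{y}}$ makes $(\boldsymbol{x},\widetilde{y})$ a genuine i.i.d. sample from $\widetilde{p}$, so that McDiarmid and symmetrization are legitimately applicable.
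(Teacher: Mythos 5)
Your proposal is correct and follows essentially the same route as the paper's proof: identify $R_{\mathrm{cc}}(f^\star)$ with $R_{\mathrm{cc}}(f_{\mathrm{cc}})$ via classifier-consistency, reduce the excess risk to the uniform deviation, apply McDiarmid with bounded difference $M/n$ plus symmetrization, and finish with Maurer's vector contraction inequality to obtain $\sqrt{2}\rho'\sum_{y=1}^k\mathfrak{R}_n(\mathcal{H}_y)$. Your side remark correctly pinpoints that the $2M$ (versus $M$ in Theorem~\ref{ris_bound}) arises from the missing $\tfrac12$ reweighting factor in $\widehat{R}_{\mathrm{cc}}$, which is exactly the source of the discrepancy in the paper's two proofs as well.
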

The proof is provided in Appendix D. %Generally, $\mathfrak{R}_n(\mathcal{H}_y)$ can be bounded by $C_{\mathcal{H}}/\sqrt{n}$ for a positive constant $C_{\mathcal{H}}$ \cite{lu2020mitigating,xia2019anchor,golowich2017size}. Hence
Theorem~\ref{cls_bound} demonstrates that the empirical risk minimizer $\widehat{f}_{\mathrm{cc}}$ converges to the true risk minimizer $f^\star$ as $n\rightarrow\infty$.

\textbf{Theoretical Comparison Between RC and CC.\quad}There exists a clear difference between the estimation error bounds in Theorem~\ref{ris_bound} and Theorem~\ref{cls_bound}, especially in the last term. If we assume that $\rho$ for RC and $\rho^\prime$ for CC hold the same value, we can find that the estimation error bound in Theorem~\ref{cls_bound} would be looser than that in Theorem~\ref{ris_bound}. Therefore, we could expect that RC may have better performance than CC.
In addition, RC needs to estimate the prediction confidence of each example. Intuitively, complex models like deep neural networks normally provide more accurate estimation than linear models. Therefore, we speculate that when more complex models are used, the superiority of RC would be more remarkable. We will demonstrate via experiments that RC is generally superior to CC when deep neural networks are used.
%It is worth noting that the difference of the estimation error bounds in Theorem~\ref{ris_bound} and Theorem~\ref{cls_bound} mainly comes from the first term, since there only exists a constant difference in the second term even though various models are used. If we assume that $\rho$ for RC and $\rho^\prime$ for CC hold the same value, we can find that the estimation error bound in Theorem~\ref{cls_bound} could be notably looser than that in Theorem~\ref{ris_bound} when the Rademacher complexity of the model class is large. This observation suggests that RC probably achieves smaller estimation error than CC when complex models are used. We will demonstrate via experiments that RC generally outperforms CC when deep neural networks are used.
% Thus, its empirical risk estimator can be expressed as
% \begin{gather}
% \widehat{R}_{\text{cla}}(f) = \frac{1}{n}\sum_{i=1}^n\bigg(\mathcal{L}\big(\boldsymbol{Q}^\top f(\boldsymbol{x}_i),Y_i\big)\bigg).
% \end{gather}
\section{Experiments}
In this section, we conduct extensive experiments on various datasets to validate the effectiveness of our proposed methods.

\noindent\textbf{Datasets.\quad}We collect four widely used benchmark datasets including  MNIST~\cite{lecun1998gradient}, Kuzushiji-MNIST~\cite{clanuwat2018deep}, Fashion-MNIST~\cite{xiao2017fashion}, and CIFAR-10~\cite{krizhevsky2009learning}, and five datasets from the UCI Machine Learning Repository \cite{krizhevsky2009learning}. In order to generate candidate label sets on these datasets, following the motivation in Section \ref{motivation}, we uniformly sample the candidate label set that includes the correct label from $\mathcal{C}$ for each instance. In addition, we also use five widely used real-world partially labeled datasets, including Lost~\cite{cour2011learning}, BirdSong~\cite{briggs2012rank}, MSRCv2~\cite{liu2012conditional}, Soccer Player~\cite{zeng2013learning}, Yahoo! News~\cite{guillaumin2010multiple}. Since our proposed methods do not rely on specific classification models, we use various base models to validate the effectiveness of our methods, including linear model, three-layer ($d$-500-$k$) MLP, 5-layer LeNet, 34-layer ResNet \cite{he2016deep}, and 22-layer DenseNet \cite{huang2017densely}. The detailed descriptions of these datasets with the corresponding base models are provided in Appendix E.1.
%In addition, we also use five datasets from the UCI repository\footnote{\url{https://archive.ics.uci.edu/ml/datasets.php}} \cite{blake1998uci} including Yeast, Texture, Dermatology, Synthetic-Control, and 20Newsgroups\footnote{\url{http://qwone.com/~jason/20Newsgroups/}} \cite{lang1995newsweeder}. For 20Newsgroups, we transform it into tf-idf vectors, and use TruncatedSVD \cite{halko2009finding} to conduct dimension reduction (to 300 dimensions).
%As shown in our real-world motivation, each label set is randomly and uniformly chosen from the whole set of label sets with the same probability. Therefore, for the generation process of partially labeled data, we also randomly and uniformly sample the candidate label set that includes the correct label from the whole set of label sets with the same probability for each instance. In the experiments, we run 5 trials on these four large-scale datasets.
% \begin{figure*}[!t]
%       \includegraphics[scale=0.5]{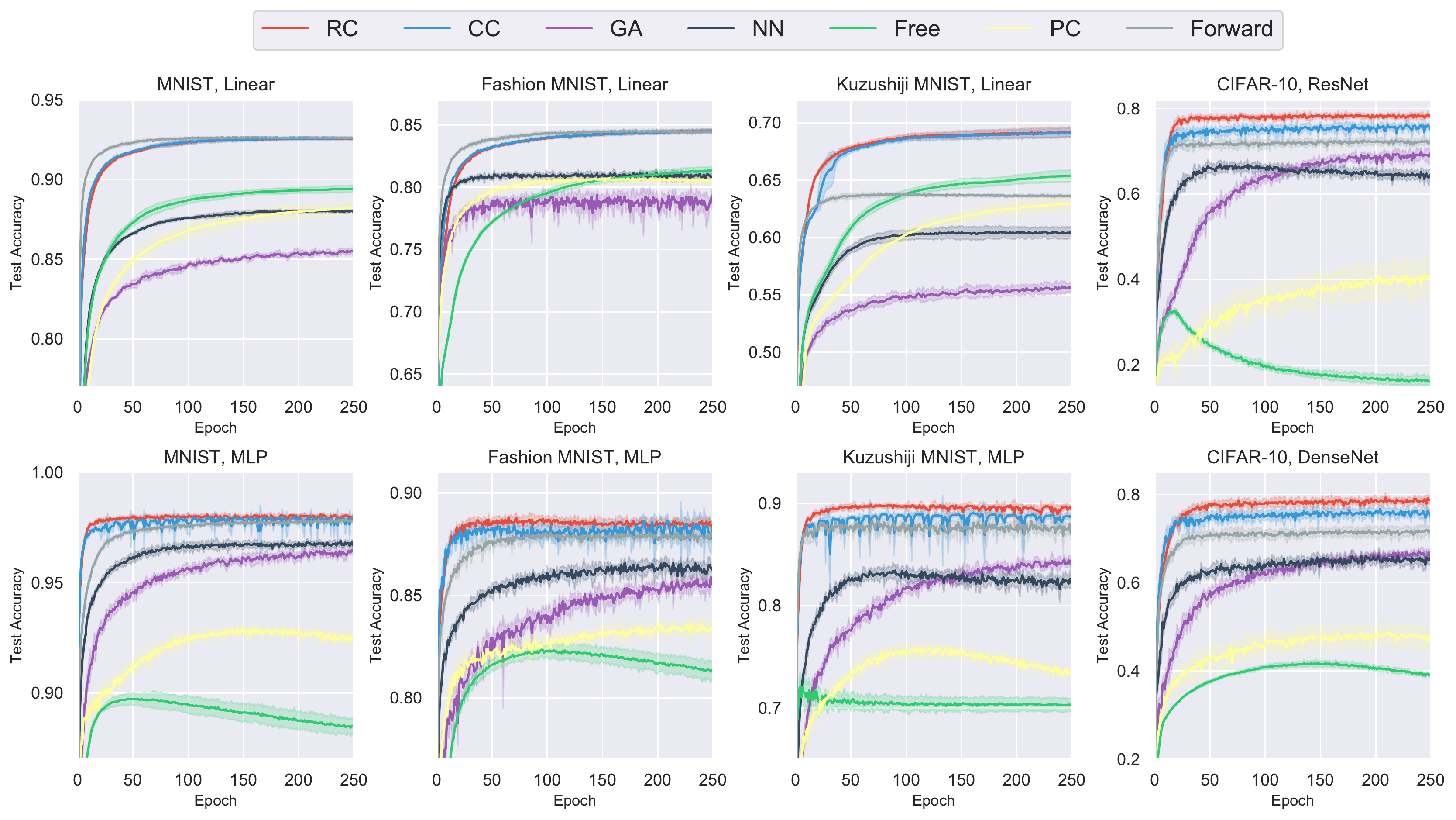}
%     \\
%      % \subfloat[NDCG@10]{
%     %\label{fig:mini:subfig:d} %% label for second subfigure
%       %\centering
%       \includegraphics[scale=0.5]{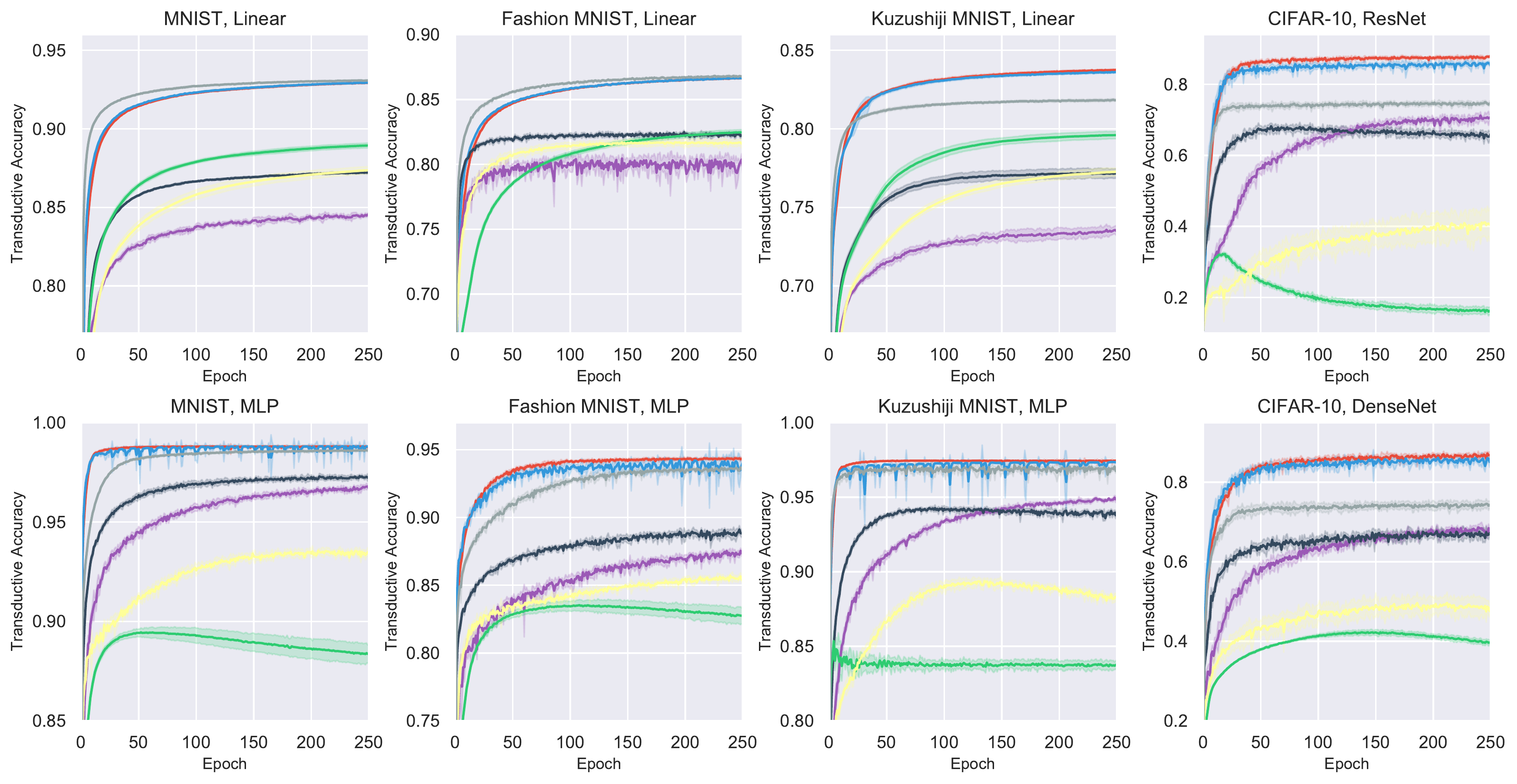}
% %}
%   \caption{Experimental results of different methods for different datasets and models. Dark colors show the mean accuracy of 5 trials and light colors show the standard deviation.}
%   \label{fig1} %% label for entire figure
% \end{figure*}
%\paragraph{Real-World Partially Labeled Datasets.}
%Note that these datasets are not large-scale datasets, hence we only use the linear model, and run 10 trials on these real-world partially labeled datasets.

\begin{table*}[!t]
\centering
%\scriptsize
%\setlength{\abovecaptionskip}{0.2cm}
%\setlength{\belowcaptionskip}{-0.1cm}
	\caption{Test performance (mean$\pm$std) of each method using neural networks on benchmark datasets. ResNet is trained on CIFAR-10, and MLP is trained on the other three datasets.}
	\label{large_mlp_test}
\resizebox{1.0\textwidth}{!}{
\setlength{\tabcolsep}{6mm}{
\begin{tabular}{ccccc}
\toprule
%\cline{2-11}
                      & MNIST & Kuzushiji-MNIST & Fashion-MNIST & CIFAR-10\\
                      \midrule
RC  & \textbf{98.00$\pm$0.11\%} & \textbf{89.38$\pm$0.28\%} & \textbf{88.38$\pm$0.16\%} & \textbf{77.93$\pm$0.59\%} \\
CC  & 97.87$\pm$0.10\%$\bullet$ & 88.83$\pm$0.40\%$\bullet$ & 87.88$\pm$0.25\%$\bullet$ & 75.78$\pm$0.27\%$\bullet$ \\
\midrule
GA    & 96.37$\pm$0.13\%$\bullet$ & 84.23$\pm$0.19\%$\bullet$ & 85.57$\pm$0.16\%$\bullet$ & 72.22$\pm$0.19\%$\bullet$  \\
NN    &  96.75$\pm$0.08\%$\bullet$ & 82.36$\pm$0.41\%$\bullet$ & 86.25$\pm$0.14\%$\bullet$ & 68.09$\pm$0.31\%$\bullet$  \\
Free      & 88.48$\pm$0.37\%$\bullet$ & 70.31$\pm$0.68\%$\bullet$ & 81.34$\pm$0.47\%$\bullet$ & 17.74$\pm$1.20\%$\bullet$\\
PC   &   92.47$\pm$0.13\%$\bullet$ & 73.45$\pm$0.20\%$\bullet$ & 83.37$\pm$0.31\%$\bullet$ & 46.53$\pm$2.01\%$\bullet$ \\
Forward   &  97.64$\pm$0.11\%$\bullet$ & 87.64$\pm$0.13\%$\bullet$ & 86.73$\pm$0.15\%$\bullet$ & 71.18$\pm$0.92\%$\bullet$ \\
EXP & 97.81$\pm$0.04\%$\bullet$ & 88.48$\pm$0.29\%$\bullet$ & 87.96$\pm$0.06\%$\bullet$ & 73.22$\pm$0.66\%$\bullet$ \\
LOG &  97.86$\pm$0.11\%$\bullet$ & 88.24$\pm$0.08\%$\bullet$ & 88.31$\pm$0.26\% & 75.38$\pm$0.34\%$\bullet$ \\
MAE & 97.82$\pm$0.11\%$\bullet$ & 88.43$\pm$0.32\%$\bullet$ & 87.83$\pm$0.22\%$\bullet$ & 66.91$\pm$3.08\%$\bullet$\\
MSE & 96.95$\pm$0.14\%$\bullet$ & 85.16$\pm$0.44\%$\bullet$ & 85.72$\pm$0.26\%$\bullet$ & 66.15$\pm$2.13\%$\bullet$\\
GCE & 96.71$\pm$0.08\%$\bullet$ & 85.19$\pm$0.39\%$\bullet$ & 86.88$\pm$0.16\%$\bullet$ & 68.39$\pm$0.71\%$\bullet$ \\
Phuber-CE & 95.10$\pm$0.34\%$\bullet$ & 80.66$\pm$0.41\%$\bullet$ & 85.33$\pm$0.23\%$\bullet$ & 58.60$\pm$0.95\%$\bullet$ \\
\bottomrule
\end{tabular}
}
}
\end{table*}

\begin{table*}[!t]
\centering
%\scriptsize
\setlength{\abovecaptionskip}{-0.1cm}
	\caption{Test performance (mean$\pm$std) of each method using neural networks on benchmark datasets. DenseNet is trained on CIFAR-10, and LeNet is trained on the other three datasets.}
	\label{large_lenet_test}
\resizebox{1.0\textwidth}{!}{
\setlength{\tabcolsep}{6mm}{
\begin{tabular}{ccccc}
\toprule
%\cline{2-11}
& MNIST & Kuzushiji-MNIST & Fashion-MNIST & CIFAR-10\\
                      \midrule
RC  & \textbf{99.04$\pm$0.03\%} & \textbf{94.00$\pm$0.30\%} & \textbf{89.48$\pm$0.15\%} & \textbf{78.53$\pm$0.46\%} \\
CC  & 98.99$\pm$0.08\% & 93.86$\pm$0.18\% & 88.98$\pm$0.20\%$\bullet$ & 75.71$\pm$0.18\%$\bullet$ \\
\midrule
GA    & 98.68$\pm$0.05\%$\bullet$ & 90.39$\pm$0.26\%$\bullet$ & 87.95$\pm$0.12\%$\bullet$ & 71.85$\pm$0.19\%$\bullet$ \\
NN  & 98.51$\pm$0.08\%$\bullet$ & 89.60$\pm$0.34\%$\bullet$ & 88.47$\pm$0.15\%$\bullet$ & 71.98$\pm$0.35\%$\bullet$   \\
Free  & 80.48$\pm$2.06\%$\bullet$ & 71.18$\pm$1.38\%$\bullet$ & 74.02$\pm$3.88\%$\bullet$ & 45.94$\pm$0.83\%$\bullet$ \\
PC   & 95.03$\pm$0.16\%$\bullet$ & 79.62$\pm$0.11\%$\bullet$ & 83.98$\pm$0.20\%$\bullet$ & 54.18$\pm$2.10\%$\bullet$  \\
Forward   & 98.80$\pm$0.04\%$\bullet$ & 93.87$\pm$0.14\% & 88.72$\pm$0.17\%$\bullet$ & 73.56$\pm$1.47\%$\bullet$  \\
EXP & 98.82$\pm$0.03\%$\bullet$ & 92.69$\pm$0.31\%$\bullet$ & 88.99$\pm$0.25\%$\bullet$ & 75.02$\pm$1.02\%$\bullet$ \\
LOG & 98.88$\pm$0.08\%$\bullet$ & 93.97$\pm$0.25\% & 88.75$\pm$0.28\%$\bullet$ & 75.54$\pm$0.59\%$\bullet$\\
MAE & 98.88$\pm$0.05\%$\bullet$ & 93.04$\pm$0.52\%$\bullet$ & 87.30$\pm$3.16\%$\bullet$ & 67.74$\pm$0.89\%$\bullet$\\
MSE & 98.38$\pm$0.05\%$\bullet$ & 88.37$\pm$0.55\%$\bullet$ & 88.18$\pm$0.08\%$\bullet$ & 70.66$\pm$0.59\%$\bullet$\\
GCE & 98.63$\pm$0.06\%$\bullet$ & 91.27$\pm$0.30\%$\bullet$ & 88.66$\pm$0.16\%$\bullet$ & 72.09$\pm$0.51\%$\bullet$ \\
Phuber-CE & 96.92$\pm$0.18\%$\bullet$ & 82.24$\pm$2.45\%$\bullet$ & 87.02$\pm$0.09\%$\bullet$ & 66.47$\pm$0.35\%$\bullet$ \\
\bottomrule
\end{tabular}
}
}
%\vspace{-0.3cm}
\end{table*}
\begin{table*}[!t]
\centering
\setlength{\abovecaptionskip}{-0.1cm}
	\caption{Test performance (mean$\pm$std) of each method using linear model on UCI datasets.}
	\label{uci_linear_test}
\resizebox{.99\textwidth}{!}{
\setlength{\tabcolsep}{6mm}{
\begin{tabular}{cccccc}
\toprule
%\cline{2-11}
                      & Texture & Yeast & Dermatology & Har & 20Newsgroups\\
                      \midrule
RC                  & 99.24$\pm$0.14\%   & 59.89$\pm$1.27\%    & 99.41$\pm$1.00\%   &    98.03$\pm$0.09\%    & \bf{75.99$\pm$0.53\%}  \\
CC     &  98.02$\pm$2.91\%$\bullet$     &  \textbf{59.97$\pm$1.57\%}    &    \textbf{99.73$\pm$0.85\%}         &   \textbf{98.10$\pm$0.18\%}      &  75.97$\pm$0.54\% \\
\midrule
SURE   & 95.38$\pm$0.28\%$\bullet$   &  54.39$\pm$1.32\%$\bullet$   &  97.48$\pm$0.32\%$\bullet$   &  97.43$\pm$0.24\%$\bullet$    &  69.82$\pm$0.26\%$\bullet$\\
CLPL & 91.93$\pm$0.97\%$\bullet$  & 54.58$\pm$2.11\%$\bullet$  & 99.62$\pm$0.85\%   & 97.48$\pm$0.18\%$\bullet$  &  71.44$\pm$0.55\%$\bullet$\\
PLECOC  &  69.69$\pm$4.82\%$\bullet$  &  37.37$\pm$9.73\%$\bullet$  &     87.84$\pm$5.30\%$\bullet$    &  96.97$\pm$0.29\%$\bullet$  & 15.32$\pm$7.86\%$\bullet$   \\
PLSVM & 49.38$\pm$9.99\%$\bullet$ &  45.70$\pm$8.01\%$\bullet$  &  80.00$\pm$7.53\%$\bullet$    & 91.64$\pm$1.43\%$\bullet$  & 32.59$\pm$8.91\%$\bullet$ \\
PLKNN & 96.78$\pm$0.31\%$\bullet$  & 47.79$\pm$2.41\%$\bullet$ &  80.54$\pm$5.06\%$\bullet$   & 94.17$\pm$0.59\%$\bullet$  & 27.18$\pm$0.65\%$\bullet$   \\
IPAL  &  \textbf{99.45$\pm$0.23\%}  & 48.99$\pm$3.84\%$\bullet$ & 98.65$\pm$2.27\%$\bullet$ &   96.55$\pm$0.40\%$\bullet$   & 48.36$\pm$0.85\%$\bullet$  \\
\bottomrule
\end{tabular}
}
}
\end{table*}

\noindent\textbf{Compared Methods.\quad}We compare with six state-of-the-art PLL methods including SURE~\cite{feng2019partial}, CLPL~\cite{cour2011learning}, IPAL~\cite{zhang2015solving}, PLSVM~\cite{elkan2008learning}, PLECOC~\cite{zhang2017disambiguation}, PLKNN~\cite{hullermeier2006learning}.
%For all the above methods, their parameters are specified or searched according to the suggested parameter settings by respective papers.
Besides, we also compare with various \emph{complementary-label learning} (CLL) methods for two reasons: 1) We can directly use CLL methods on partially labeled datasets by regarding non-candidate labels as complementary labels.
%By regarding each non-candidate label as a complementary label, we can transform the partially labeled dataset into complementarily labeled dataset, thus we can directly use CLL methods.
2) Existing CLL methods can be applied to large-scale datasets. The compared CLL methods include GA, NN, and Free \cite{Ishida2019Complementary}, PC \cite{ishida2017learning}, Forward \cite{yu2018learning}, the unbiased risk estimator \cite{feng2020learning} with bounded losses MAE, MSE, GCE, Phuber-CE, and the surrogate losses EXP and LOG. For all the above methods, their hyper-parameters are specified or searched according to the suggested parameter settings by respective papers. The detailed information of these compared methods is provided in Appendix E.2. For our proposed methods RC (Algorithm~\ref{algo_ris}) and CC (Algorithm~\ref{algo_cls}), we only need to search learning rate and weight decay from $\{10^{-6},\ldots,10^{-1}\}$, since there are no other hyper-parameters in our methods. Hyper-parameters are selected so as to maximize the accuracy on a validation set (10\% of the training set) of partially labeled data. We implement them using PyTorch \cite{paszke2019pytorch} and use the Adam~\cite{kingma2015adam} optimizer with the mini-batch size set to 256 and the number of epochs set to 250. For all the parametric methods, we adopt the same base model for fair comparisons.
\begin{table*}[!t]
 \caption{Test performance (mean$\pm$std) of each method using linear model on real-world datasets.}
 \label{real_results}
 \centering
 \resizebox{1.0\textwidth}{!}{
\setlength{\tabcolsep}{6mm}{
     \begin{tabular}{cccccc}
      \toprule
  & Lost &  MSRCv2   & BirdSong  & Soccer Player  & Yahoo! News \\
      \midrule
      RC & \textbf{79.43$\pm$3.26\%} & 46.56$\pm$2.71\% & 71.94$\pm$1.72\% & \textbf{57.00$\pm$0.97\%} & \textbf{68.23$\pm$0.83\%}\\
      CC & 79.29$\pm$3.19\% & 47.22$\pm$3.02\% & \textbf{72.22$\pm$1.71\%} & 56.32$\pm$0.64\% & 68.14$\pm$0.81\%\\
      \midrule
      SURE & 71.33$\pm$3.57\%$\bullet$ & 46.88$\pm$4.67\% & 58.92$\pm$1.28\%$\bullet$ & 49.41$\pm$086\%$\bullet$ & 45.49$\pm$1.15\%$\bullet$\\
      CLPL & 74.87$\pm$4.30\%$\bullet$ & 36.53$\pm$4.59\%$\bullet$ & 63.56$\pm$1.40\%$\bullet$ & 36.82$\pm$1.04\%$\bullet$ & 46.21$\pm$0.90\%$\bullet$\\
      PLECOC & 49.03$\pm$8.36\%$\bullet$ & 41.53$\pm$3.25\%$\bullet$ & 71.58$\pm$1.81\% & 53.70$\pm$2.02\%$\bullet$ & 66.22$\pm$1.01\%$\bullet$\\
      PLSVM & 75.31$\pm$3.81\%$\bullet$ & 35.85$\pm$4.41\%$\bullet$ & 49.90$\pm$2.07\%$\bullet$ & 46.29$\pm$0.96\%$\bullet$ & 56.85$\pm$0.91\%$\bullet$\\
      PLKNN & 36.73$\pm$2.99\%$\bullet$ & 41.36$\pm$2.89\%$\bullet$ & 64.94$\pm$1.42\%$\bullet$ & 49.62$\pm$0.67\%$\bullet$ & 41.07$\pm$1.02\%$\bullet$\\
      IPAL & 72.12$\pm$4.48\%$\bullet$ & \textbf{50.80$\pm$4.46\%}$\circ$ & 72.06$\pm$1.55\% & 55.03$\pm$0.77\%$\bullet$ & 66.79$\pm$1.22\%$\bullet$ \\
      \bottomrule
    \end{tabular}
    }
    }
%\vspace{-0.3cm}
\end{table*}
\begin{table*}[!t]
\centering
\setlength{\abovecaptionskip}{-0.1cm}
	\caption{Test performance (mean$\pm$std) of the RC method using neural networks on benchmark datasets with different generation models.}
	\label{rc_test}
	\resizebox{1.0\textwidth}{!}{
\setlength{\tabcolsep}{4mm}{
\begin{tabular}{c|ccccccc}
\toprule
        & Case 1 & Case 2 & Case 3 & Case 4 & Case 5 & Our Case & Supervised \\
\midrule
\multirow{2}{*}{MLP MNIST}   & 95.29$\bullet$  & 97.17$\bullet$ & 97.68$\bullet$ &  97.93  & 98.97  & \textbf{98.00} & \underline{98.48} \\
                         & ($\pm$0.14)  & ($\pm$0.04) & ($\pm$0.10) & ($\pm$0.15)  & ($\pm$0.12)  &  ($\pm$0.11)  &  ($\pm$0.00)          \\
                         \midrule
\multirow{2}{*}{MLP KMNIST}  & 79.88$\bullet$  & 85.65$\bullet$  &  88.04$\bullet$ & 89.07$\bullet$  & 89.34 & \textbf{89.38} & \underline{91.53} \\
                         & ($\pm$0.47)  &  ($\pm$0.38) & ($\pm$0.37) & ($\pm$0.20)  &  ($\pm$0.18) & ($\pm$0.21) & ($\pm$0.00)\\
                         \midrule
\multirow{2}{*}{MLP FMNIST} & 79.78$\bullet$  &  84.97$\bullet$ &  87.05$\bullet$ & 88.09$\bullet$  & 88.27  &  \textbf{88.38} & \underline{89.37}          \\
                         & ($\pm$0.32)  & ($\pm$0.32)  &  ($\pm$0.17) & ($\pm$0.18)  & ($\pm$0.24)  &  ($\pm$0.23)  & ($\pm$0.00) \\
                         \midrule
\multirow{2}{*}{LeNet MNIST}   & 98.82$\bullet$ & 99.02 & 99.02 & \textbf{99.04} & \textbf{99.04} & \textbf{99.04} & \underline{99.22} \\
                         & ($\pm$0.05) & ($\pm$0.06) & ($\pm$0.06) & ($\pm$0.08) & ($\pm$0.05) & ($\pm$0.08) & ($\pm$0.00) \\
                         \midrule
\multirow{2}{*}{LeNet KMNIST}  & 92.81$\bullet$ & 93.54$\bullet$ & 93.71$\bullet$ & 93.77$\bullet$ & 93.89 & \textbf{94.00} & \underline{95.34}\\
                         & ($\pm$0.39) & ($\pm$0.21) & ($\pm$0.20) & ($\pm$0.23) & ($\pm$0.25) & ($\pm$0.31) & ($\pm$0.00)\\
                         \midrule
\multirow{2}{*}{LeNet FMNIST} & 81.59$\bullet$ & 86.49$\bullet$ & 88.48$\bullet$ & 89.24$\bullet$ & 89.45 & \textbf{89.48} & \underline{89.93} \\
                         & ($\pm$0.18) & ($\pm$0.31) & ($\pm$0.15) & ($\pm$0.11) & ($\pm$0.18) & ($\pm$0.11) & ($\pm$0.00)\\
                         \bottomrule
\end{tabular}
}}
\end{table*}
\begin{table*}[!t]
\centering
\setlength{\abovecaptionskip}{-0.1cm}
	\caption{Test performance (mean$\pm$std) of the CC method using neural networks on benchmark datasets with different generation models.}
	\label{cc_test}
	\resizebox{1.0\textwidth}{!}{
\setlength{\tabcolsep}{4mm}{
\begin{tabular}{c|ccccccc}
\toprule
        & Case 1 & Case 2 & Case 3 & Case 4 & Case 5 & Our Case & Supervised \\
\midrule
\multirow{2}{*}{MLP MNIST}   & 96.36$\bullet$ & 97.49$\bullet$ & 97.76 & 97.85 & \textbf{97.87} & \textbf{97.87} & \underline{98.48}\\
                         & ($\pm$0.17) & ($\pm$0.10) & ($\pm$0.12) & ($\pm$0.08) & ($\pm$0.17) & ($\pm$0.10) & ($\pm$0.00)\\
                         \midrule
\multirow{2}{*}{MLP KMNIST}  & 80.65$\bullet$ & 86.43$\bullet$ & 88.06$\bullet$ & 88.69 & 88.73 & \textbf{88.83} & \underline{91.53}\\
                         & ($\pm$0.86) & ($\pm$0.80) & ($\pm$0.57) & ($\pm$0.21) & ($\pm$0.44) & ($\pm$0.40) & ($\pm$0.00)\\
                         \midrule
\multirow{2}{*}{MLP FMNIST} & 79.81$\bullet$ & 84.49$\bullet$ & 86.47$\bullet$ & 87.52$\bullet$ & 87.64 & \textbf{87.80} & \underline{89.37}\\
                         & ($\pm$0.45) & ($\pm$0.33) & ($\pm$0.14) & ($\pm$0.15) & ($\pm$0.18) & ($\pm$0.25) & ($\pm$0.00)\\
                         \midrule
\multirow{2}{*}{LeNet MNIST} & 98.28$\bullet$ & 98.83$\bullet$  & 98.93 & 98.94 & 98.95 & \textbf{98.99} & \underline{99.22}\\
                         & ($\pm$0.19) & ($\pm$0.08) & ($\pm$0.07) & ($\pm$0.02) & ($\pm$0.09) & ($\pm$0.08) & ($\pm$0.00)\\
                         \midrule
\multirow{2}{*}{LeNet KMNIST}  & 86.67$\bullet$ & 92.16$\bullet$ & 93.13$\bullet$ & 93.41$\bullet$ & 93.81 & \textbf{93.86} & \underline{95.34}\\
                         & ($\pm$1.22) & ($\pm$0.30) & ($\pm$0.26) & ($\pm$0.30) & ($\pm$0.22) & ($\pm$0.18) & ($\pm$0.00)\\
                         \midrule
\multirow{2}{*}{LeNet FMNIST} & 77.75$\bullet$ & 86.11$\bullet$ & 87.86$\bullet$ & 88.53$\bullet$ & 88.97 & \textbf{88.98} & \underline{89.93}  \\
                         & ($\pm$5.32) & ($\pm$0.31) & ($\pm$0.20) & ($\pm$0.31) & ($\pm$0.25) & ($\pm$0.20) & ($\pm$0.00) \\
                         \bottomrule
\end{tabular}
}}
\end{table*}

\noindent\textbf{Experimental Results.\quad}We run 5 trials on the four benchmark datasets and run 10 trials (with 90\%/10\% train/test split) on UCI datasets and real-world partially labeled datasets, and record the mean accuracy with standard deviation (mean$\pm$std). We also use paired $t$-test at 5\% significance level, and $\bullet/\circ$ represents whether the \emph{best} of RC and CC is significantly better/worse than other compared methods. Besides, the best results are highlighted in bold. Table~\ref{large_mlp_test} and Table~\ref{large_lenet_test} report the test performance of each method using neural networks on benchmark datasets. %using MLP model and LeNet, respectively.
%We also record the transductive performance (i.e., the training set is evaluated with true labels), and put the results in Appendix E.3.
We also provide the transductive performance of each method in Appendix E.3.
From the two tables, we can observe that RC always achieves the best performance and significantly outperforms other compared methods in most cases. In addition, we record the test accuracy at each training epoch to provide more detailed visualized results in Appendix E.4.
Table~\ref{uci_linear_test} and Table~\ref{real_results} report the test performance of each method using linear model on UCI datasets and real-world partially labeled datasets, respectively. We can find that RC and CC generally achieve superior performance against other compared methods on both UCI datasets and real-world partially labeled datasets. % Only on MSRCv2, RC and CC are significantly worse than IPAL. In all the remaining cases, RC and CC achieve significantly better or at least comparable performance by comparing with all the methods.

\noindent\textbf{Performance Comparison Between RC and CC.\quad}It can be seen that when linear model is used, RC and CC achieve similar performance. However, RC significantly outperforms CC when deep neural networks are used. These observations clearly accord with our conjecture that the superiority of RC would be more remarkable when more complex models are used.
%derived estimation error bounds for RC and CC, i.e., RC achieves notably smaller estimation error than CC when complex models are used.%All of the above experimental results clearly demonstrate the effectiveness of our proposed methods. In addition,
%On BirdSong, our methods are comparable to ECOC and IPAL, and significantly outperforms other comparing methods.

\noindent\textbf{Effectiveness of Generation Model.\quad} Here, we test the performance of our methods under different data generation processes. We use \emph{entropy} to measure how well given candidate label sets match the proposed generation model. By this measure, we could know ahead of model training whether to apply our proposed methods or not on a specific dataset. We expect that the higher the entropy, the better the match, thus the better the performance of our proposed methods. To verify our conjecture, we generate various candidate labels sets by six different cases of generation models, and each of them holds a value of entropy. The detailed information of the six cases is provided in Appendix F. Table \ref{rc_test} and Table \ref{cc_test} report the test performance (mean$\pm$std) of the RC method and the CC method using neural networks on benchmark datasets with different cases of generation models. From the two tables, we can observe that the higher the entropy, the better the match, thus the better the performance of our proposed methods. Thus, our conjecture is clearly validated.
We further conduct experiments with the generation model of Case 1 where given candidate label sets do not match our proposed generation model well. The experimental results are shown in Table \ref{large_lenet_gen_test}. As can be seen from Table \ref{large_lenet_gen_test}, our methods still significantly outperform other compared methods and RC always achieves the best performance.%We use \emph{entropy} to measure how well the candidate label sets generated by other generation models match the proposed generation model.
%We also study other cases where candidate label sets are generated by different generation models, and use \emph{entropy} to measure how well the given candidate label sets match the proposed generation model. We find that the candidate label sets with higher entropy better match our proposed generation model, and on such datasets, our proposed methods achieve better performance. We also show via experiments that our proposed methods still significantly outperform other compared methods even when the candidate label sets do not match the proposed generation model well. These experimental results are provided in Appendix F.
%We also conduct experiments to demonstrate the effectiveness of the proposed generation model. We provide certain conditions where the performance of our proposed PLL methods would drop. In these conditions, candidate label sets are generated by different generation processes. We use \emph{entropy} to measure how well the given candidate label sets match the proposed generation model. We also show via experiments that our proposed PLL methods still outperform other compared methods even though the given candidate label sets do not well match the proposed generation model. These experimental results are provided in Appendix F.
%Specifically, we compare the proposed generation model with different generation processes of candidate labels
\begin{table*}[!t]
\centering
%\scriptsize
%\setlength{\abovecaptionskip}{-0.1cm}
	\caption{Test performance (mean$\pm$std) of each method using neural networks on benchmark datasets. DenseNet is trained on CIFAR-10, and LeNet is trained on the other three datasets. Candidate label sets are generated by the generation model in Case 1 (entropy=2.015).}
	\label{large_lenet_gen_test}
\resizebox{1.0\textwidth}{!}{
\setlength{\tabcolsep}{6mm}{
\begin{tabular}{ccccc}
\toprule
%\cline{2-11}
& MNIST & Kuzushiji-MNIST & Fashion-MNIST & CIFAR-10\\
                      \midrule
RC  & \textbf{98.82$\pm$0.05\%} & \textbf{92.81$\pm$0.39\%} & \textbf{81.59$\pm$0.18\%} & \textbf{68.18$\pm$0.60\%}\\
CC  & 98.28$\pm$0.19\%$\bullet$ & 86.67$\pm$1.22\%$\bullet$ & 77.75$\pm$5.32\%$\bullet$ & 56.13$\pm$3.33\%$\bullet$ \\
\midrule
GA    & 97.29$\pm$0.19\%$\bullet$ & 83.79$\pm$0.98\%$\bullet$ & 70.91$\pm$0.99\%$\bullet$ & 41.57$\pm$1.35\%$\bullet$  \\
NN  & 69.51$\pm$2.06\%$\bullet$ & 51.03$\pm$1.88\%$\bullet$ & 53.13$\pm$2.04\%$\bullet$ & 31.54$\pm$1.65\%$\bullet$ \\
Free & 15.29$\pm$0.58\%$\bullet$ & 13.60$\pm$0.37\%$\bullet$ & 10.58$\pm$0.54\%$\bullet$ & 12.53$\pm$0.34\%$\bullet$ \\
PC & 96.56$\pm$0.25\%$\bullet$ & 85.60$\pm$0.45\%$\bullet$ & 80.98$\pm$0.44\% & 65.97$\pm$0.39\%$\bullet$ \\
Forward & 95.87$\pm$4.82\%$\bullet$ & 90.83$\pm$0.82\%$\bullet$ & 59.66$\pm$2.75\%$\bullet$ & 51.25$\pm$0.49\%$\bullet$\\
EXP & 84.37$\pm$9.30\%$\bullet$ & 71.10$\pm$5.74\%$\bullet$ & 59.56$\pm$8.43\%$\bullet$ & 30.35$\pm$0.38\%$\bullet$ \\
LOG & 98.17$\pm$0.10\%$\bullet$ & 87.85$\pm$0.82\%$\bullet$ & 77.50$\pm$5.12\%$\bullet$ & 54.61$\pm$4.04\%$\bullet$ \\
MAE & 56.81$\pm$8.36\%$\bullet$ & 49.78$\pm$9.03\%$\bullet$ & 36.41$\pm$0.29\%$\bullet$ & 30.61$\pm$0.43\%$\bullet$  \\
MSE & 95.80$\pm$0.24\%$\bullet$ & 74.95$\pm$0.84\%$\bullet$ & 58.85$\pm$3.52\%$\bullet$ & 58.18$\pm$1.25\%$\bullet$ \\
GCE & 95.92$\pm$0.09\%$\bullet$ & 80.49$\pm$1.10\%$\bullet$ & 72.25$\pm$0.35\%$\bullet$ & 57.47$\pm$0.59\%$\bullet$\\
Phuber-CE & 79.41$\pm$1.61\%$\bullet$ & 59.88$\pm$1.06\%$\bullet$ & 58.65$\pm$1.22\%$\bullet$ & 57.53$\pm$3.36\%$\bullet$ \\
\bottomrule
\end{tabular}
}
}
\end{table*}
\section{Conclusion}
In this paper, we for the first time provided an explicit mathematical formulation of the partially labeled data generation process for PLL. Based on our data generation model, we further
derived a novel \emph{risk-consistent} method and a novel \emph{classifier-consistent} method. To the best of our knowledge, we provided the first risk-consistent PLL method. Besides, our proposed methods do not reply on specific models and can be easily trained with stochastic optimization, which ensures their scalability to large-scale datasets.
In addition, we theoretically derived an \emph{estimation error bound} for each of the proposed methods. Finally, extensive experimental results clearly demonstrated the effectiveness of the proposed generation model and two PLL methods.
% \begin{figure*}[ht]
% \centering
% \includegraphics[scale=0.5]{samples/training.pdf}
% \caption{this is a figure demo}
% \label{train_fig}
% \end{figure*}

% Acknowledgements should only appear in the accepted

% \begin{acks}
% To Robert, for the bagels and explaining CMYK and color spaces.
% \end{acks}
%%
%% The next two lines define the bibliography style to be used, and
%% the bibliography file.
\newpage
\section*{Broader Impact}
A potential application of our proposed partial-label learning methods would be data privacy. For example, when we collect some survey data, we may ask respondents to answer some extremely private questions. It would be difficult for us to directly obtain the ground-truth answer (label) to the question. However, it would be easier for us to obtain a set of candidate labels that contains the true label, since it is mentally less demanding for respondents to remove several obviously wrong labels. In this case, our proposed partial-label learning methods can be used.

There may also exist some negative impacts of our proposed methods. For example, an adversary might deliberately ask a person to give some candidate choices or remove some improper choices to specially designed questions, so that high-quality partially labeled data could be collected. The adversary may apply the proposed partial-label learning methods to learn from the collected partially labeled data. As a consequence, some extremely private data of the person would be divulged or leveraged by the adversary. In addition, if partial-label learning methods are very effective and prevalent, the need for accurately annotated data would be significantly reduced. As a result, the rate of unemployment for data annotation specialists might be increased.

\section*{Acknowledgements}
This research was supported by the National Research Foundation, Singapore under its AI Singapore Programme (AISG Award No: AISG-RP-2019-0013), National Satellite of Excellence in Trustworthy Software Systems (Award No: NSOE-TSS2019-01), and NTU. Any opinions, findings and conclusions or recommendations expressed in this material are those of the author(s) and do not reflect the views of National Research Foundation, Singapore. JL and XG were supported by NSFC (62076063). BH was supported by the RGC Early Career Scheme No. 22200720, NSFC Young Scientists Fund No. 62006202, HKBU Tier-1 Start-up Grant and HKBU CSD Start-up Grant. GN and MS were supported by JST AIP Acceleration Research Grant Number JPMJCR20U3, Japan.
%and MS was also supported by Institute for AI and Beyond, UTokyo.
\bibliographystyle{abbrv}
\bibliography{paper}
\newpage

\appendix
\section{Proofs of Data Generation Process} % ~\ref{sec_distr}
%\subsection{Proof of Theorem~\ref{valid}}\label{A.1}
\subsection{Proof of Theorem 1}\label{A.1}
From our formulation of the partially labeled data distribution $\widetilde{p}(\boldsymbol{x},Y)$, we can obtain the simplified expression $\widetilde{p}(\boldsymbol{x},Y)=\frac{1}{2^{k-1}-1}\sum_{y\in Y}p(\boldsymbol{x},y)$. Then, we have
\begin{align}
\nonumber
\int_{\mathcal{C}}\int_{\mathcal{X}}\widetilde{p}(\boldsymbol{x},Y)\mathrm{d}\boldsymbol{x}\ \mathrm{d}Y &= \int_{\mathcal{X}}\sum_{Y\in\mathcal{C}}\widetilde{p}(\boldsymbol{x},Y)\text{d}\boldsymbol{x}\\
\nonumber
&=\frac{1}{2^{k-1}-1}\int_{\mathcal{X}}\sum_{Y\in\mathcal{C}}\sum_{y\in Y}p(\boldsymbol{x},y)\text{d}\boldsymbol{x}\\
\nonumber
&=\frac{1}{2^{k-1}-1}\int_{\mathcal{X}}\sum_{y=1}^k\sum_{Y\in\{Y\mid Y\in\mathcal{C},y\in Y\}} p(\boldsymbol{x},y)\text{d}\boldsymbol{x}\\
\nonumber
&=\frac{1}{2^{k-1}-1}\int_{\mathcal{X}}\sum_{y=1}^k(2^{k-1}-1) p(\boldsymbol{x},y)\text{d}\boldsymbol{x}\\
\nonumber
&=1,
\end{align}
which concludes the proof of Theorem 1.\qed
%\subsection{Proof of Theorem~\ref{key_assump}}\label{A.2}
\subsection{Proof of Theorem 2}\label{A.2}
It is intuitive to express $p(y\in Y\mid \boldsymbol{x},Y)$ as
\begin{align}
\nonumber
p(y\in Y\mid \boldsymbol{x},Y)&=1-p(y\notin Y\mid\boldsymbol{x},Y)\\
\nonumber
&=1-\sum_{i\notin Y}p(y=i\mid\boldsymbol{x},Y)\\
\nonumber
&=1-\sum_{i\notin Y}\frac{p(Y\mid y=i,\boldsymbol{x})p(y=i\mid\boldsymbol{x})}{p(Y\mid\boldsymbol{x})}\\
\nonumber
&=1-\sum_{i\notin Y}\frac{p(Y\mid y=i)p(y=i\mid\boldsymbol{x})}{\sum_{j=1}^kp(Y\mid y=j)p(y=j\mid\boldsymbol{x})}\\
\nonumber
&=1-(2^{k-1}-1)\sum_{i\notin Y}\frac{p(Y\mid y=i)p(y=i\mid\boldsymbol{x})}{\sum_{j\in Y}p(y=j\mid\boldsymbol{x})}\\
\nonumber
&=1,
\end{align}
where the last equality holds because $p(Y\mid y=i)=0$ if $i\notin Y$, in terms of Eq.~(5).\qed
%Eq.~(\ref{distr1}).
%\subsection{Proof of Lemma~\ref{simple_lemma}}\label{A.3}
\subsection{Proof of Lemma 1}\label{A.3}
Let us first consider the case where the correct label $y$ is a specific label $i$ ($i\in[k]$), then we have
\begin{align}
\nonumber
p(y\in Y, y=i\mid\boldsymbol{x})
=& p(y\in Y\mid y=i,\boldsymbol{x})p(y=i\mid\boldsymbol{x})\\
\nonumber
=&\sum_{C\in\mathcal{C}}p(y\in Y, Y=C\mid y=i,\boldsymbol{x})p(y=i\mid\boldsymbol{x})\\
\nonumber
=&\sum_{C\in\mathcal{C}}p(y\in Y\mid Y=C,y=i,\boldsymbol{x})p(y=i\mid\boldsymbol{x})p(Y=C\mid\boldsymbol{x})\\
\nonumber
=&\sum_{C\in\mathcal{C}}p(y\in Y\mid Y=C,y=i,\boldsymbol{x})p(y=i\mid\boldsymbol{x})p(Y=C)\\
\nonumber
=&\frac{1}{2^{k}-2}\sum_{C\in\mathcal{C}}p(y\in Y\mid Y=C,y=i,\boldsymbol{x})p(y=i\mid\boldsymbol{x})\\
\nonumber
=&\frac{1}{2^{k}-2}|\mathcal{C}^i|\cdot p(y=i\mid\boldsymbol{x})\\
\nonumber
=&\frac{2^{k-1}-1}{2^{k}-2}p(y=i\mid\boldsymbol{x})
\\
\nonumber
=&\frac{1}{2}p(y=i\mid\boldsymbol{x}),
\end{align}
where we have used $p(Y=C\mid\boldsymbol{x})=p(Y=C)=\frac{1}{2^k-2}$ because $Y$ is sampled from the whole set of label sets uniformly at random. In addition, $\mathcal{C}^i=\{Y\in\mathcal{C}\mid i\in Y\}$ denotes the set of all the label sets that contain $i$, hence we can obtain $|\mathcal{C}^i|=2^{k-1}-1$.
By further summing up the both side over all possible $i$, we can obtain
\begin{gather}
\nonumber
\sum_{i}p(y\in Y, y=i\mid\boldsymbol{x}) = \sum_{i}\frac{1}{2}p(y=i\mid\boldsymbol{x}) \Rightarrow p(y\in Y\mid \boldsymbol{x})=\frac{1}{2},
\end{gather}
which concludes the proof of Lemma 1.\qed
%\subsection{Proof of Theorem~\ref{real_world}}\label{A.4}
\subsection{Proof of Theorem 3}\label{A.4}
Let us express $p(Y\mid y\in Y,\boldsymbol{x})$ as
\begin{align}
\nonumber
p(Y\mid y\in Y,\boldsymbol{x}) =& \frac{p(y\in Y,Y\mid\boldsymbol{x})}{p(y\in Y\mid\boldsymbol{x})}\\
\nonumber
=& \frac{p(y\in Y\mid Y,\boldsymbol{x})p(Y\mid \boldsymbol{x})}{p(y\in Y\mid\boldsymbol{x})}\\
\nonumber
=& \frac{p(y\in Y\mid Y,\boldsymbol{x})p(Y)}{p(y\in Y\mid\boldsymbol{x})}\\
\nonumber
=& \frac{2}{2^k-2}p(y\in Y\mid Y,\boldsymbol{x})\quad\quad (\because p(y\in Y\mid\boldsymbol{x})=\frac{1}{2} \text{ and } p(Y)=\frac{1}{2^k-2})\\
\nonumber
=&\frac{1}{2^{k-1}-1}\sum_{y\in Y}p(y\mid\boldsymbol{x}).
\end{align}
%where equality (1) holds because the label set $Y$ is randomly and uniformly sampled given each instance $\boldsymbol{x}$, which means $p(Y\mid\boldsymbol{x})=p(Y)$. Besides, equality (2) is due to Lemma~\ref{simple_lemma} and $p(Y)=\frac{1}{2^{k-2}-2}$ since $Y$ is randomly and uniformly sampled from $\mathcal{C}$ and $|\mathcal{C}|=2^k-2$.
By further multiplying $p(\boldsymbol{x})$ on both side, we can obtain $p(\boldsymbol{x},Y\mid y\in Y)=\frac{1}{2^{k-1}-1}\sum_{y\in Y}p(\boldsymbol{x},y) = \widetilde{p}(\boldsymbol{x},Y)$ where $\widetilde{p}(\boldsymbol{x},Y)$ is our presented data distribution for PLL.\qed

\section{Proofs of Theorem 4}\label{B}
%\subsection{Proof of Theorem~\ref{ris_bound}}\label{B.1}
Our proof of the estimation error bound is based on \emph{Rademacher complexity} \cite{bartlett2002rademacher}.
\begin{definition}[Redemacher complexity]
Let $Z_1,\dots,Z_n$ be $n$ i.i.d. random variables drawn from a probability distribution $\mu$, $\mathcal{H}=\{h:\mathcal{Z}\rightarrow\mathbb{R}\}$ be a class of measurable functions. Then the expected Rademacher complexity of $\mathcal{H}$ is defined as
\begin{gather}
\nonumber
\mathfrak{R}_n(\mathcal{H})=\mathbb{E}_{Z_1,\dots,Z_n\sim\mu}\mathbb{E}_{\boldsymbol{\sigma}}\bigg[\sup_{h\in\mathcal{H}}\frac{1}{n}\sum_{i=1}^n\sigma_ih(Z_i)\bigg],
\end{gather}
where $\boldsymbol{\sigma}=(\sigma_1,\dots,\sigma_n)$ are Rademacher variables taking the value from $\{-1,+1\}$ with even probabilities.
\end{definition}
%Before proving Theorem~\ref{ris_bound}, we introduce the following lemma.
Before proving Theorem 4, we introduce the following lemmas.
\setcounter{lemma}{2} % the lemma number would start from 4
\begin{lemma}
\label{est_lemma}
Let $\widehat{f}$ be the empirical risk minimizer (i.e., $\widehat{f}=\argmin_{f\in\mathcal{F}}\widehat{R}(f)$) and $f^\star$ be the true risk minimizer (i.e., $f^\star=\argmin_{f\in\mathcal{F}}R(f)$), then the following inequality holds:
\begin{gather}
\nonumber
R(\widehat{f})-R(f^\star)\leq 2\sup_{f\in\mathcal{F}}|\widehat{R}(f)-R(f)|.
\end{gather}
\end{lemma}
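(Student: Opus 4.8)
The plan is to prove this via the standard three-term decomposition that isolates the uniform deviation between empirical and true risk. The only structural facts I would use are the two defining optimalities: $\widehat{f}=\argmin_{f\in\mathcal{F}}\widehat{R}(f)$ and $f^\star=\argmin_{f\in\mathcal{F}}R(f)$. Everything else is algebraic bookkeeping, so I expect no serious obstacle; the one point that genuinely matters is remembering to route the bound through $\widehat{R}(f^\star)$ rather than trying to compare $R(\widehat{f})$ and $R(f^\star)$ directly, since $\widehat{f}$ has no control relation to $R$ and $f^\star$ has none to $\widehat{R}$.

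First I would write the telescoping identity
\begin{gather}
\nonumber
R(\widehat{f})-R(f^\star) = \big(R(\widehat{f})-\widehat{R}(\widehat{f})\big) + \big(\widehat{R}(\widehat{f})-\widehat{R}(f^\star)\big) + \big(\widehat{R}(f^\star)-R(f^\star)\big).
\end{gather}
The key step is to kill the middle bracket: because $\widehat{f}$ minimizes $\widehat{R}$ over $\mathcal{F}$ and $f^\star\in\mathcal{F}$, we have $\widehat{R}(\widehat{f})\le\widehat{R}(f^\star)$, so $\widehat{R}(\widehat{f})-\widehat{R}(f^\star)\le 0$ and this term may simply be dropped.

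It then remains to control the two outer brackets, each of which is a single-function difference between $R$ and $\widehat{R}$ and is therefore trivially dominated by the supremum over $\mathcal{F}$. Concretely, $R(\widehat{f})-\widehat{R}(\widehat{f})\le\sup_{f\in\mathcal{F}}\big(R(f)-\widehat{R}(f)\big)\le\sup_{f\in\mathcal{F}}|\widehat{R}(f)-R(f)|$, and likewise $\widehat{R}(f^\star)-R(f^\star)\le\sup_{f\in\mathcal{F}}|\widehat{R}(f)-R(f)|$. Summing the two surviving terms yields exactly the factor of $2$ in the claimed bound, which completes the argument. The whole proof is a few lines and the genuinely important move is the optimality cancellation of the middle term; the rest is bounding each remaining difference by the uniform deviation $\sup_{f\in\mathcal{F}}|\widehat{R}(f)-R(f)|$.
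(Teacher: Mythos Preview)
Your proof is correct and follows exactly the standard decomposition the paper uses: add and subtract $\widehat{R}(\widehat{f})$, use $\widehat{R}(\widehat{f})\le\widehat{R}(f^\star)$ to drop the middle term, and bound the two remaining differences by the uniform deviation. Your write-up is in fact cleaner than the paper's, which contains a typo in its second displayed line (it should read $\widehat{R}(f^\star)-R(f^\star)$ rather than $R(\widehat{f})-R(f^\star)$).
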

\begin{proof}
It is intuitive to obtain
\begin{align}
\nonumber
R(\widehat{f})-R(f^\star)&\leq R(\widehat{f})-\widehat{R}(\widehat{f})+\widehat{R}(\widehat{f}) - R(f^\star)\\
\nonumber
&\leq R(\widehat{f})-\widehat{R}(\widehat{f})+R(\widehat{f}) - R(f^\star)\\
\nonumber
&\leq 2\sup_{f\in\mathcal{F}}|\widehat{R}(f)-R(f)|,
\end{align}
which completes the proof. The same proof has been provided in \cite{mohri2012foundations}.
\end{proof}
Then, we define a function space for our RC method as
\begin{gather}
\nonumber
\mathcal{G}_{\mathrm{rc}}=\{(\boldsymbol{x},Y)\mapsto \frac{1}{2}\sum_{i=1}^k\frac{p(y=i\mid\boldsymbol{x})}{\sum_{j\in Y}p(y=j\mid\boldsymbol{x})}\mathcal{L}(f(\boldsymbol{x}),i)\mid f\in\mathcal{F}\},
\end{gather}
where $(\boldsymbol{x},Y)$ is randomly sampled from $\widetilde{p}(\boldsymbol{x},Y)$. Let $\widetilde{\mathfrak{R}}_n(\mathcal{G}_{\mathrm{rc}})$ be the expected Rademacher complexity of $\mathcal{G}_{\mathrm{rc}}$, i.e.,
\begin{gather}
\nonumber
\widetilde{\mathfrak{R}}_n(\mathcal{G}_{\mathrm{rc}}) = \mathbb{E}_{\widetilde{p}(\boldsymbol{x},Y)}\mathbb{E}_{\boldsymbol{\sigma}}\bigg[\sup_{g\in\mathcal{G}_{\mathrm{rc}}}\frac{1}{n}\sum_{i=1}^n\sigma_ig(\boldsymbol{x}_i,Y_i)\bigg].
\end{gather}
Then we have the following lemma.
\begin{lemma}
\label{ris_lemma}
Suppose the loss function $\mathcal{L}$ is bounded by $M$, i.e., $M=\sup_{\boldsymbol{x}\in\mathcal{X},f\in\mathcal{F},y\in\mathcal{Y}}\mathcal{L}(f(\boldsymbol{x}),y)$, then for any $\delta>0$, with probability at least $1-\delta$,
\begin{gather}
\nonumber
\sup_{f\in\mathcal{F}}\left|R_{\mathrm{rc}}(f)-\widehat{R}_{\mathrm{rc}}(f)\right|\leq 2\widetilde{\mathfrak{R}}_n(\mathcal{G}_{\mathrm{rc}})+\frac{M}{2}\sqrt{\frac{\log\frac{2}{\delta}}{2n}}.
\end{gather}
\end{lemma}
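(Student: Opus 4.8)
The plan is to bound the uniform deviation $\sup_{f\in\mathcal{F}}|R_{\mathrm{rc}}(f)-\widehat{R}_{\mathrm{rc}}(f)|$ by the standard combination of McDiarmid's bounded-difference inequality with a symmetrization argument, where the only nonroutine ingredient is a boundedness estimate on the elements of $\mathcal{G}_{\mathrm{rc}}$. The first and most important preliminary step is to show that every $g\in\mathcal{G}_{\mathrm{rc}}$ takes values in $[0,M/2]$. The key observation is that the importance weights $p(y=i\mid\boldsymbol{x})/\sum_{j\in Y}p(y=j\mid\boldsymbol{x})$ form a probability vector over $i$: after the confidence correction in Eq.~(\ref{update_conf}) the numerator vanishes for every $i\notin Y$, so $\sum_{i=1}^k p(y=i\mid\boldsymbol{x})=\sum_{j\in Y}p(y=j\mid\boldsymbol{x})$, and hence the weights are nonnegative and sum to one. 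Therefore $g(\boldsymbol{x},Y)$ equals $\tfrac12$ times a convex combination of loss values $\mathcal{L}(f(\boldsymbol{x}),i)\in[0,M]$, which gives $0\le g\le M/2$.

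Next I would apply McDiarmid's inequality to the two one-sided suprema $\Phi^{+}(S):=\sup_{f\in\mathcal{F}}\big(R_{\mathrm{rc}}(f)-\widehat{R}_{\mathrm{rc}}(f)\big)$ and $\Phi^{-}(S):=\sup_{f\in\mathcal{F}}\big(\widehat{R}_{\mathrm{rc}}(f)-R_{\mathrm{rc}}(f)\big)$, viewed as functions of the sample $S=\{(\boldsymbol{x}_o,Y_o)\}_{o=1}^n$. Since $\widehat{R}_{\mathrm{rc}}(f)$ is an average of $n$ terms each lying in $[0,M/2]$, replacing a single example changes $\widehat{R}_{\mathrm{rc}}(f)$ uniformly in $f$, and hence changes $\Phi^{\pm}$, by at most $M/(2n)$; thus the bounded-difference constants are $c_o=M/(2n)$ with $\sum_{o}c_o^2=M^2/(4n)$. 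McDiarmid's inequality then yields, with probability at least $1-\delta/2$, that $\Phi^{+}\le\mathbb{E}[\Phi^{+}]+\frac{M}{2}\sqrt{\frac{\log(2/\delta)}{2n}}$, and the identical bound for $\Phi^{-}$.

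The third step is the symmetrization lemma: by introducing a ghost sample and Rademacher variables one obtains $\mathbb{E}[\Phi^{+}]\le 2\widetilde{\mathfrak{R}}_n(\mathcal{G}_{\mathrm{rc}})$, and identically $\mathbb{E}[\Phi^{-}]\le 2\widetilde{\mathfrak{R}}_n(\mathcal{G}_{\mathrm{rc}})$. Finally, taking a union bound over the two events (each of probability at least $1-\delta/2$) and using $\sup_{f}|R_{\mathrm{rc}}-\widehat{R}_{\mathrm{rc}}|=\max(\Phi^{+},\Phi^{-})$ gives, with probability at least $1-\delta$, the claimed inequality $\sup_{f\in\mathcal{F}}|R_{\mathrm{rc}}(f)-\widehat{R}_{\mathrm{rc}}(f)|\le 2\widetilde{\mathfrak{R}}_n(\mathcal{G}_{\mathrm{rc}})+\frac{M}{2}\sqrt{\frac{\log(2/\delta)}{2n}}$.

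I expect the main obstacle to be the boundedness step rather than the concentration machinery. One must argue carefully that the reweighting coefficients genuinely sum to one, which relies on zeroing out the non-candidate confidences as in Eq.~(\ref{update_conf}); without this correction the denominator $\sum_{j\in Y}p(y=j\mid\boldsymbol{x})$ could be far smaller than $\sum_{i=1}^k p(y=i\mid\boldsymbol{x})$, so $g$ would not be controlled by $M/2$, and both the McDiarmid constant $M/(2n)$ and the final $M/2$ prefactor would be lost. The symmetrization and McDiarmid steps are otherwise entirely standard, requiring only that each summand of $\widehat{R}_{\mathrm{rc}}$ lie in an interval of width $M/2$.
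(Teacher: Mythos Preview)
Your proposal is correct and follows essentially the same route as the paper: bound each one-sided supremum via McDiarmid's inequality with bounded-difference constant $M/(2n)$, control the expectation by symmetrization to get $2\widetilde{\mathfrak{R}}_n(\mathcal{G}_{\mathrm{rc}})$, and combine the two directions by a union bound. Your justification of the $M/(2n)$ constant---arguing that after the correction in Eq.~(\ref{update_conf}) the reweighting coefficients form a probability vector so each summand lies in $[0,M/2]$---is in fact more explicit than the paper's, which simply asserts the constant ``since $\mathcal{L}$ is bounded by $M$'' without spelling out why the factor $\tfrac12$ survives.
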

\begin{proof}
In order to prove this lemma, we first show that the one direction $\sup_{f\in\mathcal{F}}R_{\mathrm{rc}}(f)-\widehat{R}_{\mathrm{rc}}(f)$ is bounded with probability at least $1-{\delta}/{2}$, and the other direction can be similarly shown. Suppose an example $(\boldsymbol{x}_i,Y_i)$ is replaced by another arbitrary example $(\boldsymbol{x}_i^\prime, Y_i^\prime)$, then the change of $\sup_{f\in\mathcal{F}}R_{\mathrm{rc}}(f)-\widehat{R}_{\mathrm{rc}}(f)$ is no greater than ${M}/{(2n)}$, since $\mathcal{L}$ is bounded by $M$. By applying \emph{McDiarmid's inequality}~\cite{McDiarmid1989SurveysIC}, for any $\delta>0$, with probability at least $1-{\delta}/{2}$,
\begin{gather}
\nonumber
\sup_{f\in\mathcal{F}}R_{\mathrm{rc}}(f)-\widehat{R}_{\mathrm{rc}}(f)\leq\mathbb{E}\bigg[\sup_{f\in\mathcal{F}}R_{\mathrm{rc}}(f)-\widehat{R}_{\mathrm{rc}}(f)\bigg] + \frac{M}{2}\sqrt{\frac{\log\frac{2}{\delta}}{2n}}.
\end{gather}
Using the same trick in \cite{mohri2012foundations}, we can obtain
\begin{gather}
\nonumber
\mathbb{E}\bigg[\sup_{f\in\mathcal{F}}R_{\mathrm{rc}}(f)-\widehat{R}_{\mathrm{rc}}(f)\bigg]\leq 2\widetilde{\mathfrak{R}}_n(\mathcal{G}_{\mathrm{rc}}).
\end{gather}
By further taking into account the other side $\sup_{f\in\mathcal{F}}\widehat{R}_{\mathrm{rc}}(f)-R_{\mathrm{rc}}(f)$, we have for any $\delta>0$, with probability at least $1-\delta$,
\begin{gather}
\nonumber
\sup_{f\in\mathcal{F}}\left|R_{\mathrm{rc}}(f)-\widehat{R}_{\mathrm{rc}}(f)\right|\leq 2\widetilde{\mathfrak{R}}_n(\mathcal{G}_{\mathrm{rc}})+\frac{M}{2}\sqrt{\frac{\log\frac{2}{\delta}}{2n}},
\end{gather}
which concludes the proof.
\end{proof}
Next, we will bound the expected Rademacher complexity of $\mathcal{G}_{\mathrm{rc}}$ (i.e., $\widetilde{\mathfrak{R}}_n(\mathcal{G}_{\mathrm{rc}})$) by the following lemma.
\begin{lemma}
\label{ris_rademacher}
Assume the loss function $\mathcal{L}(f(\boldsymbol{x}),y)$ is $\rho$-Lipschitz with respect to $f(\boldsymbol{x})$ ($0<\rho<\infty$) for all $y\in\mathcal{Y}$. Then, the following inequality holds:
\begin{gather}
\nonumber
\widetilde{\mathfrak{R}}_n(\mathcal{G}_{\mathrm{rc}}) \leq \sqrt{2}\rho\sum_{y=1}^k\mathfrak{R}_n(\mathcal{H}_y),
\end{gather}
where
\begin{align}
\nonumber
\mathcal{H}_y &= \{h:\boldsymbol{x}\mapsto f_y(\boldsymbol{x})\mid f\in\mathcal{F}\},\\
\nonumber
\mathfrak{R}_n(\mathcal{H}_y) &= \mathbb{E}_{p(\boldsymbol{x})}\mathbb{E}_{\boldsymbol{\sigma}}\bigg[\sup_{h\in\mathcal{H}_y}\frac{1}{n}\sum_{i=1}^nh(\boldsymbol{x}_i)\bigg].
\end{align}
\end{lemma}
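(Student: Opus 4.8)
The plan is to recognize that every element of $\mathcal{G}_{\mathrm{rc}}$ is the composition of the vector-valued network output $f(\boldsymbol{x})\in\mathbb{R}^k$ with a \emph{fixed}, per-example scalar map, and then to strip off that scalar map by a Talagrand-type contraction, paying the price $\sqrt{2}$ that appears in the vector-valued version of the contraction inequality. For a sampled pair $(\boldsymbol{x}_o,Y_o)$ I would write $g(\boldsymbol{x}_o,Y_o)=h_o(f(\boldsymbol{x}_o))$ where $h_o(\boldsymbol{u})=\frac{1}{2}\sum_{i=1}^k w_i^{(o)}\,\mathcal{L}(\boldsymbol{u},i)$ and $w_i^{(o)}=p(y=i\mid\boldsymbol{x}_o)/\sum_{j\in Y_o}p(y=j\mid\boldsymbol{x}_o)$. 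The posteriors here are the conditional probabilities entering $R_{\mathrm{rc}}$ and, after the correction in Eq.~\eqref{update_conf} that zeroes out non-candidate labels, they are treated as fixed and do not depend on $f$; hence the $w_i^{(o)}$ are constants when one supremizes over $\mathcal{F}$, and $\widetilde{\mathfrak{R}}_n(\mathcal{G}_{\mathrm{rc}})=\mathbb{E}\,\mathbb{E}_{\boldsymbol{\sigma}}\big[\sup_{f\in\mathcal{F}}\frac{1}{n}\sum_{o=1}^n\sigma_o\,h_o(f(\boldsymbol{x}_o))\big]$.

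The first substantive step is a uniform Lipschitz bound on each $h_o$. Because non-candidate labels have $p(y=i\mid\boldsymbol{x})=0$, the weights $w_i^{(o)}$ are nonnegative, supported on $Y_o$, and satisfy $\sum_{i=1}^k w_i^{(o)}=1$; consequently $\boldsymbol{u}\mapsto\sum_i w_i^{(o)}\mathcal{L}(\boldsymbol{u},i)$ is a convex combination of the $\rho$-Lipschitz functions $\mathcal{L}(\cdot,i)$ and is itself $\rho$-Lipschitz in $\boldsymbol{u}$ with respect to the Euclidean norm. Including the prefactor $\tfrac12$, each $h_o$ is $\tfrac{\rho}{2}$-Lipschitz (and certainly $\rho$-Lipschitz), with a constant that crucially does \emph{not} grow with $k$.

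The second step applies the vector-valued contraction inequality for Rademacher averages: since each $h_o$ is $\rho$-Lipschitz,
\[
\mathbb{E}_{\boldsymbol{\sigma}}\Big[\sup_{f\in\mathcal{F}}\tfrac{1}{n}\sum\nolimits_{o=1}^{n}\sigma_o h_o(f(\boldsymbol{x}_o))\Big]\le \sqrt{2}\,\rho\,\mathbb{E}_{\boldsymbol{\sigma}}\Big[\sup_{f\in\mathcal{F}}\tfrac{1}{n}\sum\nolimits_{o=1}^{n}\sum\nolimits_{j=1}^{k}\sigma_{oj}f_j(\boldsymbol{x}_o)\Big],
\]
where $(\sigma_{oj})$ is a doubly-indexed Rademacher sequence. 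I would then bound the right-hand side by subadditivity of the supremum over the coordinate index $j$, giving $\sqrt{2}\rho\sum_{j=1}^k\mathbb{E}\,\mathbb{E}_{\boldsymbol{\sigma}}[\sup_f\frac{1}{n}\sum_o\sigma_{oj}f_j(\boldsymbol{x}_o)]=\sqrt{2}\rho\sum_{y=1}^k\mathfrak{R}_n(\mathcal{H}_y)$ after taking the outer expectation over $(\boldsymbol{x},Y)$; this is the claimed bound (and the tighter Lipschitz constant $\tfrac{\rho}{2}$ shows it in fact holds with room to spare).

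I expect the main obstacle to be justifying the contraction step. The scalar Ledoux--Talagrand lemma does not apply because $h_o$ acts on the entire vector $f(\boldsymbol{x})$ rather than on a single coordinate, so one must invoke the vector-contraction version, whose proof compares Rademacher/Gaussian processes and is the source of the $\sqrt{2}$. A second point demanding care is the treatment of the weights: one must argue that in the population risk $R_{\mathrm{rc}}$ they may be held fixed (the candidate-restricted posteriors) rather than the $f$-dependent softmax surrogate used at training time, and verify the simplex constraint $\sum_i w_i^{(o)}=1$, since it is exactly this convex-combination structure that keeps the Lipschitz constant independent of $k$ and prevents an otherwise unavoidable extra factor of $k$.
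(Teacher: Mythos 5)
Your proposal is correct and follows essentially the same route as the paper's proof: both observe that the weights $p(y=i\mid\boldsymbol{x})/\sum_{j\in Y}p(y=j\mid\boldsymbol{x})$ form a convex combination supported on the candidate set (hence the composed map is Lipschitz in $f(\boldsymbol{x})$ with a constant independent of $k$), and both then invoke the Rademacher vector-contraction inequality of Maurer to obtain the $\sqrt{2}\rho\sum_{y=1}^k\mathfrak{R}_n(\mathcal{H}_y)$ bound. Your additional observations---that the prefactor $\tfrac12$ actually yields the tighter constant $\tfrac{\rho}{2}$, and that the weights must be treated as fixed posteriors rather than $f$-dependent softmax outputs---are sound refinements of the same argument rather than a different approach.
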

\begin{proof}
First of all, we introduce $p_i(\boldsymbol{x}) = \frac{p(y=i\mid\boldsymbol{x})}{\sum_{j\in Y}p(y=j\mid\boldsymbol{x})}$ for each example $(\boldsymbol{x},Y)$. Thus we have $0\leq p_i(\boldsymbol{x})\leq 1,\forall i\in [k]$ and $\sum_{i=1}^k p_i(\boldsymbol{x})=1$ since $p_i(\boldsymbol{x})=0$ if $i\notin Y$. In this way, we can obtain $\widetilde{\mathfrak{R}}_n(\mathcal{G}_{\mathrm{rc}})\leq \mathfrak{R}_{n}(\mathcal{L}\circ\mathcal{F})$ where $\mathcal{L}\circ\mathcal{F}$ denotes $\{\mathcal{L}\circ f\mid f\in\mathcal{F}\}$. Since $\mathcal{H}_y = \{h:\boldsymbol{x}\mapsto f_y(\boldsymbol{x})\mid f\in\mathcal{F}\}$ and the loss function $\mathcal{L}(f(\boldsymbol{x}),y)$ is $\rho$-Lipschitz with respect to $f(\boldsymbol{x})$ ($0<\rho<\infty$) for all $y\in\mathcal{Y}$, by the Rademacher vector contraction inequality \cite{maurer2016vector}, we have $\mathfrak{R}_{n}(\mathcal{L}\circ\mathcal{F})\leq \sqrt{2}\rho\sum_{y=1}^k\mathfrak{R}_n(\mathcal{H}_y)$, which concludes the proof of Lemma \ref{ris_rademacher}.
\end{proof}

Combining Lemma \ref{est_lemma}, Lemma \ref{ris_lemma}, and Lemma \ref{ris_rademacher}, Theorem 4 is proved.\qed
% Here, we define the functional space $\mathcal{H}_y$ for the label $y\in\mathcal{Y}$ as
% \begin{gather}
% \nonumber
% \mathcal{H}_y = \{h: \boldsymbol{x}\mapsto f_y(\boldsymbol{x}) \mid f\in\mathcal{F}\}
% \end{gather}
% Then, it is a routine job to show that~\cite{mohri2012foundations}:
% \begin{gather}
% \label{tala}
% \widetilde{\mathfrak{R}}_n(\mathcal{G}_{\mathrm{rc}})\leq k\sum_{y=1}^k\mathfrak{R}_n(\mathcal{L}\circ\mathcal{H}_y)
% \leq k\rho \sum_{y=1}^k\mathfrak{R}_n(\mathcal{H}_y),
% \end{gather}
% where the last equality is due to Talagrand's contraction lemma~\cite{ledoux2013probability}. By combing Eq.~(\ref{tala}), Lemma~\ref{est_lemma}, and Lemma~\ref{ris_lemma}, Theorem 4 is proved.
\section{Proofs of Classifier-Consistency}
\subsection{Proof of Lemma 2}\label{C.2}
\paragraph{Cross Entropy Loss}
If the cross entropy loss is used, we have the following optimization problem:
\begin{gather}
\nonumber
\phi(g)=-\sum_{i=1}^k p(y=i\mid\boldsymbol{x})\log(g_i(\boldsymbol{x}))\\
\nonumber
\text{ s.t. } \sum_{i=1}^k g_i(\boldsymbol{x})=1.
\end{gather}
By using the Lagrange multiplier method, we can obtain the following non-constrained optimization problem:
\begin{gather}
\nonumber
\Phi(g)=-\sum_{i=1}^k p(y=i\mid\boldsymbol{x})\log(g_i(\boldsymbol{x}))+\lambda(\sum_{i=1}^kg_i(\boldsymbol{x})-1)).
\end{gather}
By setting the derivative to 0, we obtain
\begin{gather}
\nonumber
g^\star_i(\boldsymbol{x})=\frac{1}{\lambda} p(y=i\mid\boldsymbol{x}).
\end{gather}
Because $\sum_{i=1}^kg_i^\star(\boldsymbol{x})=1$ and $\sum_{i=1}^k p(y=i\mid\boldsymbol{x})=1$, we have
\begin{gather}
\nonumber
\sum_{i=1}^kg^\star_i(\boldsymbol{x})=\frac{1}{\lambda} \sum_{i=1}^kp(y=i\mid\boldsymbol{x}) = 1.
\end{gather}
Therefore, we can easily obtain $\lambda=1$. In this way, $g_i^\star=\frac{1}{\lambda}p(y=i\mid\boldsymbol{x})=p(y=i\mid\boldsymbol{x})$, which concludes the proof.
\paragraph{Mean Squared Error}
If the mean squared error is used, we have the following optimization problem:
\begin{gather}
\nonumber
\phi(g)=\sum_{i=1}^k (p(y=i\mid\boldsymbol{x})-g_i(\boldsymbol{x}))^2\\
\nonumber
\text{ s.t. } \sum_{i=1}^k g_i(\boldsymbol{x})=1.
\end{gather}
By using the Lagrange multiplier method, we can obtain the following non-constrained optimization problem:
\begin{gather}
\nonumber
\Phi(g)=\sum_{i=1}^k (p(y=i\mid\boldsymbol{x})-g_i(\boldsymbol{x}))^2+\lambda^\prime(\sum_{i=1}^kg_i(\boldsymbol{x})-1)).
\end{gather}
By setting the derivative to 0, we obtain
\begin{gather}
\nonumber
g^\star_i(\boldsymbol{x})=p(y=i\mid\boldsymbol{x})-\frac{\lambda^\prime}{2}.
\end{gather}
Because $\sum_{i=1}^kg_i^\star(\boldsymbol{x})=1$ and $\sum_{i=1}^k p(y=i\mid\boldsymbol{x})=1$, we have
\begin{align}
\nonumber
\sum_{i=1}^kg^\star_i(\boldsymbol{x})&=\sum_{i=1}^kp(y=i\mid\boldsymbol{x})-\frac{\lambda^\prime k}{2}\\
\nonumber
0 &= -\frac{\lambda^\prime k}{2}.
\end{align}
Since $k\neq 0$, we can obtain $\lambda^\prime=0$. In this way, $g_i^\star=p(y=i\mid\boldsymbol{x})-\frac{\lambda^\prime}{2}=p(y=i\mid\boldsymbol{x})$, which concludes the proof.
\subsection{Proof of Theorem 5}\label{C.3}
According to Lemma 2, by minimizing $R_{\mathrm{cc}}(f)$ with the cross entropy loss, we can obtain
\begin{gather}
\nonumber
q_j^\star(\boldsymbol{x})=p(Y=C_j\mid\boldsymbol{x}),\forall j\in[2^k-2].
\end{gather}
Let us introduce $\widetilde{\boldsymbol{v}}=[p(Y=C_1\mid\boldsymbol{x}),p(Y=C_2\mid\boldsymbol{x}),\ldots,p(Y=C_{2^k-2}\mid\boldsymbol{x})]$ and $\boldsymbol{v}=[p(y=1\mid\boldsymbol{x}),p(y=2\mid\boldsymbol{x}),\ldots,p(y=k\mid\boldsymbol{x})]$. We have
\begin{gather}
\nonumber
\widetilde{\boldsymbol{v}}=\boldsymbol{Q}^\top\boldsymbol{v}.
\end{gather}
Since $q^\star(\boldsymbol{x})=\widetilde{\boldsymbol{v}}$ and $g^\star(\boldsymbol{x})=\boldsymbol{v}$, we have $q^\star(\boldsymbol{x})=\boldsymbol{Q}^\top g^\star(\boldsymbol{x})$ where $g^\star(\boldsymbol{x})=\mathrm{softmax}(f^\star(\boldsymbol{x}))$. On the other hand, we can obtain $g_{\mathrm{cc}}(\boldsymbol{x})$ by minimizing $R_{\mathrm{cc}}$ (i.e., $g_{\mathrm{cc}}(\boldsymbol{x}) = \mathrm{softmax}(f_{\mathrm{cc}}(\boldsymbol{x}))$), and thus $q^\star(\boldsymbol{x})=\boldsymbol{Q}^\top g_{\mathrm{cc}}(\boldsymbol{x})$, which further ensures $\boldsymbol{Q}^\top g^\star(\boldsymbol{x})=\boldsymbol{Q}^\top g_{\mathrm{cc}}(\boldsymbol{x})$. Therefore, when $\boldsymbol{Q}$ has full rank, we obtain $g_{\mathrm{cc}}=g^\star$, which implies
$f_{\mathrm{cc}}=f^\star$.\qed
% which further ensures
% \begin{gather}
% \nonumber
% q^\star(\boldsymbol{x})=\widetilde{\boldsymbol{v}}=\boldsymbol{Q}^\top\boldsymbol{v}=\boldsymbol{Q}^\top g^\star(\boldsymbol{x}).
% \end{gather}
% Therefore, if the transition matrix $\boldsymbol{Q}$ is invertible, we can find the optimal $g^\star(\boldsymbol{x})=\boldsymbol{v}$, which implies $f^\star=f_{\mathrm{cc}}$.\qed
\section{Proof of Theorem 6}\label{D}
Since this proof is somewhat similar to the proof of Theorem 4, we briefly sketch the key points.

We define a function space for our CC method as
\begin{gather}
\nonumber
\mathcal{G}_{\mathrm{cc}}=\{(\boldsymbol{x},Y)\mapsto \mathcal{L}(q(\boldsymbol{x}),\widetilde{y})\mid f\in\mathcal{F}\},
\end{gather}
where $(\boldsymbol{x},Y)$ is randomly sampled from $\widetilde{p}(\boldsymbol{x},Y)$ and $Y=C_{\widetilde{y}}$ (i.e., $Y$ is the $\widetilde{y}$-th label set in $\mathcal{C}$). Let $\widetilde{\mathfrak{R}}_n(\mathcal{G}_{\mathrm{cc}})$ be the expected Rademacher complexity of $\mathcal{G}_{\mathrm{cc}}$, i.e.,
\begin{gather}
\nonumber
\widetilde{\mathfrak{R}}_n(\mathcal{G}_{\mathrm{cc}}) = \mathbb{E}_{\widetilde{p}(\boldsymbol{x},Y)}\mathbb{E}_{\boldsymbol{\sigma}}\bigg[\sup_{g\in\mathcal{G}_{\mathrm{cc}}}\frac{1}{n}\sum_{i=1}^n\sigma_ig(\boldsymbol{x}_i,Y_i)\bigg].
\end{gather}
Then we have the following lemma.
\begin{lemma}
\label{cc_lemma}
Suppose the loss function $\mathcal{L}$ is bounded by $M$, i.e., $M=\sup_{\boldsymbol{x}\in\mathcal{X},f\in\mathcal{F},\widetilde{y}\in\widetilde{\mathcal{Y}}}\mathcal{L}(q(\boldsymbol{x}),\widetilde{y})$, then for any $\delta>0$, with probability at least $1-\delta$,
\begin{gather}
\nonumber
\sup_{f\in\mathcal{F}}\left|R_{\mathrm{cc}}(f)-\widehat{R}_{\mathrm{cc}}(f)\right|\leq 2\widetilde{\mathfrak{R}}_n(\mathcal{G}_{\mathrm{cc}})+\frac{M}{2}\sqrt{\frac{\log\frac{2}{\delta}}{2n}}.
\end{gather}
\end{lemma}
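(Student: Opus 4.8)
The plan is to reproduce, almost verbatim, the argument already used for Lemma~\ref{ris_lemma}, replacing the function class $\mathcal{G}_{\mathrm{rc}}$ by $\mathcal{G}_{\mathrm{cc}}$ throughout: the only structural fact the argument needs is that every element of $\mathcal{G}_{\mathrm{cc}}$, namely $(\boldsymbol{x},Y)\mapsto\mathcal{L}(q(\boldsymbol{x}),\widetilde{y})$, is uniformly bounded (here by $M$, by assumption). Viewing $\Phi(S)=\sup_{f\in\mathcal{F}}\big(R_{\mathrm{cc}}(f)-\widehat{R}_{\mathrm{cc}}(f)\big)$ as a function of the sample $S=\{(\boldsymbol{x}_i,Y_i)\}_{i=1}^n$ drawn i.i.d.\ from $\widetilde{p}(\boldsymbol{x},Y)$, I would (i) establish one-sided concentration of $\Phi$ about its mean via McDiarmid's inequality, (ii) control $\mathbb{E}[\Phi]$ by symmetrization, and (iii) combine the two directions by a union bound.

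First I would compute the bounded-difference constant. Only $\widehat{R}_{\mathrm{cc}}$ depends on $S$, and replacing a single example $(\boldsymbol{x}_i,Y_i)$ by any $(\boldsymbol{x}_i',Y_i')$ perturbs exactly one of the $n$ summands $\tfrac{1}{n}\mathcal{L}(q(\boldsymbol{x}_i),\widetilde{y}_i)$; since $\mathcal{L}$ takes values in $[0,M]$ and taking a supremum over $f$ does not increase the per-coordinate sensitivity, $\Phi$ changes by at most the corresponding per-example amount. Feeding this constant into McDiarmid's inequality then gives, for any $\delta>0$, with probability at least $1-\delta/2$,
\begin{gather}
\nonumber
\Phi(S)\le\mathbb{E}[\Phi(S)]+\frac{M}{2}\sqrt{\frac{\log\frac{2}{\delta}}{2n}},
\end{gather}
i.e.\ the same deviation term as in Lemma~\ref{ris_lemma}.

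Next I would apply the standard ghost-sample symmetrization (the same device cited to \cite{mohri2012foundations} in the proof of Lemma~\ref{ris_lemma}) to obtain $\mathbb{E}[\Phi(S)]\le 2\widetilde{\mathfrak{R}}_n(\mathcal{G}_{\mathrm{cc}})$, where $\widetilde{\mathfrak{R}}_n(\mathcal{G}_{\mathrm{cc}})$ is exactly the expected Rademacher complexity defined just above the lemma. Running the identical argument on the reverse direction $\sup_{f\in\mathcal{F}}\big(\widehat{R}_{\mathrm{cc}}(f)-R_{\mathrm{cc}}(f)\big)$ and taking a union bound over the two failure events (each of probability $\delta/2$) promotes the one-sided estimates to the two-sided bound on $\sup_{f\in\mathcal{F}}|R_{\mathrm{cc}}(f)-\widehat{R}_{\mathrm{cc}}(f)|$ holding with probability at least $1-\delta$, which is precisely the claim.

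The hard part will be the bounded-difference step, since it alone fixes the coefficient in front of $\sqrt{\log(2/\delta)/(2n)}$. Unlike $R_{\mathrm{rc}}$, the estimator $\widehat{R}_{\mathrm{cc}}$ carries no explicit $\tfrac{1}{2}$ prefactor, so I would verify the single-example sensitivity with care to confirm that it yields exactly the $\tfrac{M}{2}$ coefficient asserted in the statement rather than a larger one; once that constant is pinned down, the remaining steps (symmetrization and the union bound over the two directions) are entirely routine and mirror Lemma~\ref{ris_lemma}.
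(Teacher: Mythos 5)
Your strategy coincides exactly with the paper's proof (McDiarmid for each direction, symmetrization to get $2\widetilde{\mathfrak{R}}_n(\mathcal{G}_{\mathrm{cc}})$, union bound), and the step you flagged as needing care is precisely where the claim breaks: the deviation term $\frac{M}{2}\sqrt{\log(2/\delta)/(2n)}$ is unobtainable by this route. Replacing one example perturbs the single summand $\frac{1}{n}\mathcal{L}(q(\boldsymbol{x}_i),\widetilde{y}_i)$, which ranges over $[0,M/n]$ since $\mathcal{L}$ is only assumed to take values in $[0,M]$; hence the bounded-difference constant is $c_i=M/n$, McDiarmid gives the tail $\exp\bigl(-2t^2/\sum_i c_i^2\bigr)=\exp\bigl(-2nt^2/M^2\bigr)$, and setting it to $\delta/2$ yields the deviation $M\sqrt{\log(2/\delta)/(2n)}$ --- twice what your second paragraph asserts. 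The $\frac{1}{2}$ in Lemma~\ref{ris_lemma} is not cosmetic: there each per-example term is $\frac{1}{2}\sum_{i=1}^k p_i(\boldsymbol{x})\mathcal{L}(f(\boldsymbol{x}),i)\leq M/2$, because the weights form a convex combination and $\widehat{R}_{\mathrm{rc}}$ carries an explicit $\frac{1}{2}$ prefactor; $\widehat{R}_{\mathrm{cc}}$ has no analogue of either, so the sensitivity $M/n$ cannot be improved in general.

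The inconsistency lies in the statement rather than in your plan. The paper's own proof of this lemma computes the bounded-difference constant as $M/n$ and concludes with the deviation $M\sqrt{\log(2/\delta)/(2n)}$, contradicting the $\frac{M}{2}$ displayed in the lemma; moreover, the final term $2M\sqrt{\log(2/\delta)/(2n)}$ in Theorem~\ref{cls_bound}, obtained by doubling via Lemma~\ref{est_lemma}, is consistent with the $M$ version, not the $\frac{M}{2}$ version. So the $\frac{M}{2}$ in the statement is evidently a typo carried over from the RC case. Prove the lemma with $M$ in place of $\frac{M}{2}$; with that correction, the rest of your argument (the bounded-difference verification, symmetrization, and the union bound over the two one-sided events) goes through verbatim and reproduces the paper's proof.
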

\begin{proof}
In order to prove this lemma, we first show that the one direction $\sup_{f\in\mathcal{F}}R_{\mathrm{cc}}(f)-\widehat{R}_{\mathrm{cc}}(f)$ is bounded with probability at least $1-{\delta}/{2}$, and the other direction can be similarly shown. Suppose an example $(\boldsymbol{x}_i,Y_i)$ is replaced by another arbitrary example $(\boldsymbol{x}_i^\prime, Y_i^\prime)$, then the change of $\sup_{f\in\mathcal{F}}R_{\mathrm{cc}}(f)-\widehat{R}_{\mathrm{cc}}(f)$ is no greater than ${M}/{n}$, since $\mathcal{L}$ is bounded by $M$. By applying \emph{McDiarmid's inequality}~\cite{McDiarmid1989SurveysIC}, for any $\delta>0$, with probability at least $1-{\delta}/{2}$,
\begin{gather}
\nonumber
\sup_{f\in\mathcal{F}}R_{\mathrm{cc}}(f)-\widehat{R}_{\mathrm{cc}}(f)\leq\mathbb{E}\bigg[\sup_{f\in\mathcal{F}}R_{\mathrm{cc}}(f)-\widehat{R}_{\mathrm{cc}}(f)\bigg] + {M}\sqrt{\frac{\log\frac{2}{\delta}}{2n}}.
\end{gather}
Using the same trick in \cite{mohri2012foundations}, we can obtain
$\mathbb{E}[\sup_{f\in\mathcal{F}}R_{\mathrm{cc}}(f)-\widehat{R}_{\mathrm{cc}}(f)]\leq 2\widetilde{\mathfrak{R}}_n(\mathcal{G}_{\mathrm{cc}}).
$
By further taking into account the other side $\sup_{f\in\mathcal{F}}\widehat{R}_{\mathrm{cc}}(f)-R_{\mathrm{cc}}(f)$, we have for any $\delta>0$, with probability at least $1-\delta$,
\begin{gather}
\nonumber
\sup_{f\in\mathcal{F}}\left|R_{\mathrm{cc}}(f)-\widehat{R}_{\mathrm{cc}}(f)\right|\leq 2\widetilde{\mathfrak{R}}_n(\mathcal{G}_{\mathrm{cc}})+{M}\sqrt{\frac{\log\frac{2}{\delta}}{2n}},
\end{gather}
which concludes the proof.
\end{proof}
% Here, we discuss the specific form of the loss function $\mathcal{L}$. As shown before, the cross entropy loss can satisfy the assumption 1), hence we insert it into our formulation. In this way, we can define the function space of CC as
% \begin{gather}
% \nonumber
% \mathcal{G}_{\mathrm{cc}}=\{(\boldsymbol{x},Y)\mapsto -\log\bigg(\frac{1}{2^{k-1}-1}\sum_{y\in Y}\frac{\exp(f_y(\boldsymbol{x}))}{\sum_{i=1}^k\exp(f_i(\boldsymbol{x}))}\bigg)\mid f\in\mathcal{F}\}.
% \end{gather}
%Using the trick in the proof of Lemma 3 in~\cite{ishida2017learning} and the trick in the proof of Lemma 2 in~\cite{yu2018learning}, we can obtain
Suppose the loss function $\mathcal{L}(q(\boldsymbol{x}),\widetilde{y})$ is $\rho^\prime$-Lipschitz with respect to $f(\boldsymbol{x})$ ($0\leq \rho\leq \infty$) for all $\widetilde{y}\in\widetilde{\mathcal{Y}}$, by the Rademacher vector contraction inequality \cite{maurer2016vector}, we can obtain $\widetilde{\mathfrak{R}}_n(\mathcal{G}_{\mathrm{cc}})\leq \sqrt{2}\rho^\prime\sum_{y=1}^k\mathfrak{R}_n(\mathcal{H}_y)
$.
%where the first inequality is due to the definition of $\mathcal{G}_{\text{cls}}$ and the trick used in the proof of Lemma 3 in~\cite{ishida2017learning}, and the second inequality is due to the tricks used in Lemma 2 and Lemma 3 in~\cite{yu2018learning}.
By further taking into account Lemma \ref{cc_lemma} and Lemma \ref{est_lemma}, for any $\delta>0$, with probability $1-\delta$,
\begin{gather}
\nonumber
R_{\mathrm{cc}}(\widehat{f}_{\mathrm{cc}}) - R_{\mathrm{cc}}(f^\star) = R_{\mathrm{cc}}(\widehat{f}_{\mathrm{cc}}) - R_{\mathrm{cc}}(f_{\mathrm{cc}})\leq
4\sqrt{2}\rho^\prime\sum_{y=1}^k{\mathfrak{R}}_n(\mathcal{H}_y)+2M\sqrt{\frac{\log\frac{2}{\delta}}{2n}},
\end{gather}
which concludes the proof of Theorem 6.\qed
\section{Detailed Information of Experiments}
In this section, we provide more detailed information of the experiments.
\subsection{Datasets and Models}\label{E.1}
\paragraph{Benchmark Datasets.}
We use four widely-used benchmark datasets (including MNIST, Kuzushiji-MNIST, Fashion-MNIST, CIFAR-10) and five datasets (including Yeast, Texture, Dermatology, Har, 20Newsgroups) from the UCI Machine Learning Repository. The statistics of these datasets with the corresponding base models are reported in Table \ref{benchmark_datasets}. It is worth noting that we only use the linear model on the UCI datasets, since they are not large-scale datasets. We report the descriptions of these datasets with the sources as follows.
\setcounter{table}{4}
\begin{table*}[!t]
\centering
	\caption{Characteristics of the controlled datasets.}
	\label{benchmark_datasets}
	\setlength{\tabcolsep}{1.2mm}{
				\begin{tabular}{c|c|c|c|c|c}
					\toprule
					Dataset & \#Train & \#Test & \#Features & \#Classes & Model \\
					\midrule
					Yeast & 1,335 & 149 & 8 & 10 & Linear Model\\
					Texture & 4,950 & 550 & 40 & 11 & Linear Model\\%Phoneme & 4,863 & 541 & 5 & 2 & Linear Model\\
					Dermatology & 329 & 37 & 34 & 6 & Linear Model\\
					Har & 9,269 & 1,030 & 561 & 6 & Linear Model \\
					20Newsgroups & 16,961 & 1,885 & 300 & 20 & Linear Model \\
					MNIST & 60,000 & 10,000 & 784 & 10 & three-layer ($d$-500-10) MLP, LeNet\\
					Fashion-MNIST & 60,000 & 10,000 & 784 & 10 & three-layer ($d$-500-10) MLP, LeNet  \\
					Kuzushiji-MNIST & 60,000 & 10,000 & 784 & 10 &  three-layer ($d$-500-10) MLP, LeNet \\
					CIFAR-10 & 50,000 & 10,000 & 3,072 & 10 & 34-layer ResNet, 22-layer DenseNet \\
					\bottomrule
				\end{tabular}
				}
\end{table*}
\begin{table*}[!t]
\centering\caption{Characteristics of the real-world partially labeled datasets.}
\label{real_data}
\resizebox{1.0\textwidth}{!}{
\setlength{\tabcolsep}{1.0mm}{
\begin{tabular}{c|c|c|c|c|c|c}
\toprule
Dataset & \#Examples & \#Features & \#Classes & Avg. \#CLs & Application Domain & Model\\
\midrule
Lost & 1,122  & 108 & 16 & 2.23 & \textsl{automatic face naming} \cite{panis2014overview} & Linear Model\\
MSRCv2 & 1,758 & 48 & 23 & 3.16 & \textsl{object classification} \cite{liu2012conditional} & Linear Model\\
BirdSong & 4,998 & 38 & 13 & 2.18 & \textsl{bird song classification} \cite{briggs2012rank} & Linear Model\\
Soccer Player & 17,472 & 279 & 171 & 2.09 & \textsl{automatic face naming} \cite{zeng2013learning} & Linear Model\\
Yahoo! News & 22,991 & 163 & 219 & 1.91 & \textsl{automatic face naming} \cite{guillaumin2010multiple} & Linear Model\\
\bottomrule
\end{tabular}
}
}
\end{table*}
\begin{itemize}
\item MNIST\footnote{\url{http://yann.lecun.com/exdb/mnist/}} \cite{lecun1998gradient}: It is a 10-class dataset of handwritten digits (0 to 9). Each instance is a 28$\times$28 grayscale
image.
\item Kuzushiji-MNIST\footnote{\url{https://github.com/rois-codh/kmnist}} \cite{clanuwat2018deep}: It is a 10-class dataset of fashion items (T-shirt/top, trouser, pullover, dress,
sandal, coat, shirt, sneaker, bag, and ankle boot). Each instance is a 28$\times$28 grayscale image.
\item Fashion-MNIST\footnote{\url{https://github.com/zalandoresearch/fashion-mnist}} \cite{xiao2017fashion}: It is a 10-class dataset of cursive Japanese (“Kuzushiji”) characters. Each
instance is a 28$\times$28 grayscale image.
\item CIFAR-10\footnote{\url{https://www.cs.toronto.edu/˜kriz/cifar.html}} \cite{krizhevsky2009learning}: It is a 10-class dataset of 10 different objects (airplane, bird, automobile, cat,
deer, dog, frog, horse, ship, and truck). Each instance is a 32$\times$32$\times$3 colored image in RGB format. This dataset is
normalized with mean $(0.4914, 0.4822, 0.4465)$ and standard deviation $(0.247, 0.243, 0.261)$.
\item 20Newsgroups\footnote{\url{http://qwone.com/˜jason/20Newsgroups/}}: It is a 20-class dataset of 20 different newsgroups (sci.crypt, sci.electronics, sci.med, sci.space, comp.graphics, comp.os.ms-windows.misc,
comp.sys.ibm.pc.hardware, comp.sys.mac.hardware, comp.windows.x, rec.autos, rec.motorcycles, rec.sport.baseball,
rec.sport.hockey, misc.forsale, talk.politics.misc, talk.politics.guns,
talk.politics.mideast, talk.religion.misc, alt.atheism, soc.religion.christian). We obtained the tf-idf features, and
applied TruncatedSVD \cite{halko2009finding} to reduce the dimension to 300. We randomly sample 90\% of
the examples from the whole dataset to construct the training set, and the rest 10\% forms the test set.
\item Yeast, Texture, Dermatology, Har\footnote{\url{https://archive.ics.uci.edu/ml/datasets.php}}: They are all the datasets from the UCI Machine Learning Repository.
Since they are all regular-scale datasets, we only apply linear model on them. For each dataset, we randomly sample
90\% of the examples from the whole dataset to construct the training set, and the rest 10\% forms the test set.
\end{itemize}
We run 5 trials on the four benchmark datasets and run 10 trials on the five UCI datasets, and record the mean accuracy with standard deviation. For the used models, the detailed information of the used 34-layer ResNet \cite{he2016deep} and 22-layer DenseNet \cite{huang2017densely} can be found in the corresponding papers.
\paragraph{Real-World Partially Labeled Datasets.} We also use five real-world partially labeled datasets\footnote{\url{http://palm.seu.edu.cn/zhangml/Resources.htm\#partial_data}}, including Lost, BirdSong, MSRCv2, Soccer Player, Yahoo! News. Table \ref{real_data} reports the characteristics of these real-world partially labeled datasets, including Lost~\cite{cour2011learning}, Birdsong~\cite{briggs2012rank}, MSRCv2~\cite{liu2012conditional}, Soccer Player~\cite{zeng2013learning}, Yahoo! News~\cite{guillaumin2010multiple}. These real-world partially labeled datasets come from several application domains. Specifically, Lost, Soccer Player, and Yahoo! News are from \emph{automatic face naming}, Birdsong is from \emph{bird song classification}, and MSRCv2 is from \emph{object classification}. For automatic face naming, each face cropped from an image or a video frame is taken as an instance, and the names appearing on the corresponding captions or subtitles are considered as candidate labels. For object classification, each image segment is regarded as an instance, and objects appearing in the same image are taken as candidate labels. For bird song classification, singing syllables of the birds are represented as instances and bird species jointly singing during a 10-seconds period are regarded as candidate labels. For each real-world partially labeled dataset, the average number of candidate labels (Avg. \#CLs) per instance is also recorded in Table~\ref{real_data}. In the experiments, we run 10 trials (with 90\%/10\% train/test split) on each real-world partially labeled dataset, and the mean accuracy with standard deviation is recorded for each method.
Note that most of the existing parametric PLL methods adopt the linear model, hence we also apply linear model on these real-world partially labeled datasets for fair comparisons.

On all the above datasets, we take the average accuracy of the last ten epochs as the accuracy for each trial.  All the experiments are conducted on NVIDIA Tesla V100 GPUs. Since our proposed methods are compatible with any stochastic optimizer, the time complexity of optimization could be in the linear order with respect to the number of data points.
\begin{table*}[!t]
\centering
%\scriptsize
	\caption{Transductive accuracy of each method using neural networks on benchmark datasets. ResNet is trained on CIFAR-10, and MLP is trained on the other three datasets.}
	\label{large_mlp_train}
\resizebox{1.0\textwidth}{!}{
\setlength{\tabcolsep}{6mm}{
\begin{tabular}{c|cccc}
\toprule
%\cline{2-11}
                      & MNIST & Kuzushiji-MNIST & Fashion-MNIST & CIFAR-10\\
                      \midrule
RC  & \textbf{98.81$\pm$0.02\%} & \textbf{97.45$\pm$0.06\%} & \textbf{94.30$\pm$0.09\%}& \textbf{87.48$\pm$0.44\%}\\
CC & 98.77$\pm$0.06\% & 97.31$\pm$0.05\%$\bullet$ & 93.55$\pm$0.14\%$\bullet$  & 86.15$\pm$0.26\%$\bullet$ \\
\midrule
GA    & 96.72$\pm$0.11\%$\bullet$ & 94.85$\pm$0.08\%$\bullet$ & 87.34$\pm$0.10\%$\bullet$ & 76.70$\pm$0.21\%$\bullet$ \\
NN    & 97.25$\pm$0.08\%$\bullet$ & 93.91$\pm$0.06\%$\bullet$ & 88.83$\pm$0.18\%$\bullet$ & 74.31$\pm$0.35\%$\bullet$  \\
Free & 88.38$\pm$0.51\%$\bullet$ & 83.73$\pm$0.31\%$\bullet$ & 82.77$\pm$0.61\%$\bullet$ & 17.74$\pm$1.11\%$\bullet$\\
PC   & 93.42$\pm$0.12\%$\bullet$ & 88.26$\pm$0.10\%$\bullet$ & 85.54$\pm$0.18\%$\bullet$ & 46.93$\pm$2.35\%$\bullet$ \\
Forward   & 98.68$\pm$0.04\%$\bullet$ & 96.89$\pm$0.07\%$\bullet$ & 91.48$\pm$0.26\%$\bullet$ & 78.72$\pm$1.32\%$\bullet$ \\
EXP & 98.70$\pm$0.03\% & 97.03$\pm$0.12\%$\bullet$ & 92.60$\pm$0.05\%$\bullet$ & 79.52$\pm$0.56\%$\bullet$\\
LOG & 98.75$\pm$0.06\% & 97.18$\pm$0.06\%$\bullet$ & 93.52$\pm$0.06\%$\bullet$  & 85.96$\pm$0.45\%\\
MAE & 98.63$\pm$0.05\%$\bullet$ & 97.01$\pm$0.04\%$\bullet$ & 92.02$\pm$0.08\%$\bullet$ & 74.31$\pm$3.24\%$\bullet$\\
MSE & 97.35$\pm$0.24\%$\bullet$ & 95.61$\pm$0.06\%$\bullet$ & 90.53$\pm$0.12\%$\bullet$ & 69.81$\pm$2.43\%$\bullet$ \\
GCE & 97.15$\pm$0.03\%$\bullet$ & 95.41$\pm$0.04\%$\bullet$ & 90.80$\pm$0.16\%$\bullet$ & 77.77$\pm$0.60\%$\bullet$ \\
Phuber-CE & 95.59$\pm$0.30\%$\bullet$ & 91.66$\pm$0.23\%$\bullet$ & 88.65$\pm$0.12\%$\bullet$ & 65.42$\pm$0.96\%$\bullet$ \\
\bottomrule
\end{tabular}
}
}
\end{table*}
\begin{table*}[!t]
\centering
%\scriptsize
	\caption{Transductive accuracy of each method using neural networks on benchmark datasets. DenseNet is trained on CIFAR-10, and LeNet is trained on the other three datasets.}
	\label{large_lenet_train}
\resizebox{1.0\textwidth}{!}{
\setlength{\tabcolsep}{6mm}{
\begin{tabular}{c|cccc}
\toprule
%\cline{2-11}
                      & MNIST & Kuzushiji-MNIST & Fashion-MNIST & CIFAR-10 ResNet\\
                      \midrule
RC  & \textbf{99.46$\pm$0.02\%} & 98.69$\pm$0.03\% & \textbf{94.32$\pm$0.07\%} & \textbf{86.77$\pm$0.47\%}\\
CC  & 99.43$\pm$0.03\% & \textbf{98.78$\pm$0.01\%} & 94.31$\pm$0.17\% & 85.38$\pm$0.16\%$\bullet$\\
\midrule
GA & 95.58$\pm$0.02\%$\bullet$ & 97.13$\pm$0.02\%$\bullet$ & 89.33$\pm$0.03\%$\bullet$ & 75.38$\pm$0.23\%$\bullet$ \\
NN & 98.72$\pm$0.04\%$\bullet$ & 96.99$\pm$0.06\%$\bullet$ & 90.35$\pm$0.19\%$\bullet$ & 75.12$\pm$0.25\%$\bullet$ \\
Free & 79.98$\pm$2.03\%$\bullet$ & 84.01$\pm$1.36\%$\bullet$ & 75.03$\pm$3.95\%$\bullet$ & 46.65$\pm$0.35\%$\bullet$\\
PC   & 95.32$\pm$0.13\%$\bullet$ & 90.80$\pm$0.12\%$\bullet$ & 85.39$\pm$0.18\%$\bullet$ & 55.68$\pm$2.30\%$\bullet$\\
Forward & 99.25$\pm$0.04\%$\bullet$ & 98.72$\pm$0.06\% & 92.77$\pm$0.23\%$\bullet$ & 78.74$\pm$1.41\%$\bullet$ \\
EXP & 99.27$\pm$0.01\%$\bullet$ & 98.38$\pm$0.11\%$\bullet$ & 93.23$\pm$0.04\%$\bullet$ & 79.84$\pm$1.22\%$\bullet$ \\
LOG & 99.38$\pm$0.09\% & 98.75$\pm$0.06\% & 93.52$\pm$0.07\% & 84.10$\pm$0.54\%$\bullet$\\
MAE & 99.29$\pm$0.03\%$\bullet$ & 98.47$\pm$0.17\%$\bullet$ & 90.10$\pm$3.41\%$\bullet$ & 74.05$\pm$0.87\%$\bullet$ \\
MSE & 98.71$\pm$0.03\%$\bullet$ & 95.53$\pm$0.17\%$\bullet$ & 90.81$\pm$0.18\%$\bullet$ & 79.12$\pm$0.40\%$\bullet$ \\
GCE & 98.84$\pm$0.02\%$\bullet$ & 97.48$\pm$0.16\%$\bullet$ & 91.72$\pm$0.08\%$\bullet$ & 79.47$\pm$0.38\%$\bullet$ \\
Phuber-CE & 97.31$\pm$0.07\%$\bullet$ & 92.44$\pm$1.19\%$\bullet$ & 88.94$\pm$0.11\%$\bullet$ & 70.73$\pm$0.39\%$\bullet$ \\
\bottomrule
\end{tabular}
}
}
\end{table*}
\subsection{Compared Methods}\label{E.2}
The compared PLL methods are listed as follows.
\begin{itemize}
\item SURE~\cite{feng2019partial}: It iteratively enlarges the confidence of the candidate label with the highest probability to be the correct label.
\item CLPL~\cite{cour2011learning}: It uses a convex formulation by using the one-versus-all strategy in the multi-class loss function.
\item IPAL~\cite{zhang2015solving}: It is a non-parametric method that applies the label propagation strategy~\cite{zhou2004learning} to iteratively update the confidence of each candidate label.
\item PLSVM~\cite{elkan2008learning}: It is a maximum margin-based method that differentiates candidate labels from non-candidate labels by maximizing the margin between them.
\item PLECOC~\cite{zhang2017disambiguation}: It adapts the Error-Correcting Output Codes method to deal with partially labeled examples in a disambiguation-free manner.
\item PLKNN~\cite{hullermeier2006learning}: It adapts the widely-used $k$-nearest neighbors method to make predictions for partially labeled examples.
\end{itemize}
For all the above methods, their parameters are specified or searched according to the suggested parameter settings by respective papers. It is worth noting that since all the compared PLL methods use full batch size, we also use full batch size (with 2000 training epochs) for our proposed methods RC and CC, to keep fair comparisons.

Besides, we also compare with various complementary-label learning methods for two reasons: 1) By regarding each non-candidate label as a complementary label, we can transform the partially labeled dataset into complementarily labeled dataset, thus we can directly use complementary label methods. 2) Existing complementary-label learning methods can be applied to deal with large-scale datasets. The compared complementary-label learning methods are listed as follows.
\begin{itemize}
\item PC~\cite{ishida2017learning}: It utilizes the pairwise comparison strategy (with sigmoid loss) in the multi-class loss function to learn from complementarily labeled data.
\item Forward~\cite{yu2018learning}: It conducts forward correction by estimating the latent class transition probability matrix to learn from complementarily labeled data.
\item Free, NN, GA~\cite{Ishida2019Complementary}: These are three methods adapted from the same unbiased risk estimator for learning from complementarily labeled data. For the Free method, it minimizes the original empirical risk estimator. For the NN method, it corrects the negative term in the risk estimator using max operator. For the GA method, it uses a gradient ascent strategy to prevent from overfitting.
\item MAE, MSE, GCE, Phuber-CE~\cite{feng2020learning}: These are four methods that insert conventional bounded multi-class loss functions into the unbised risk estimator for learning with multiple complementary labels.
\item EXP, LOG~\cite{feng2020learning}: They are two methods for learning with multiple complementary labels. For these two methods, upper-bound surrogate loss functions are used in the derived empirical risk estimator \cite{feng2020learning}.
\end{itemize}
Hyper-parameters for all the methods are selected so as to maximize the accuracy on a validation set, which is
constructed by randomly sampling 10\% of the training set.
\begin{figure*}[!t]


      \includegraphics[scale=0.410]{test.pdf}
    \\
     % \subfloat[NDCG@10]{
    %\label{fig:mini:subfig:d} %% label for second subfigure
      %\centering
      \includegraphics[scale=0.410]{training.pdf}
%}
  \caption{Experimental results of different methods for different datasets and models. Dark colors show the mean accuracy of 5 trials and light colors show the standard deviation.}
  \label{curves} %% label for entire figure %\vspace{-0.5cm}
\end{figure*}
\begin{figure*}[htbp]
%\addtolength{\belowcaptionskip}{-10pt}
\centering
    \subfigure{
    %\label{fig:mini:subfig:a} %% label for first subfigure
      \includegraphics[width=0.47\textwidth]{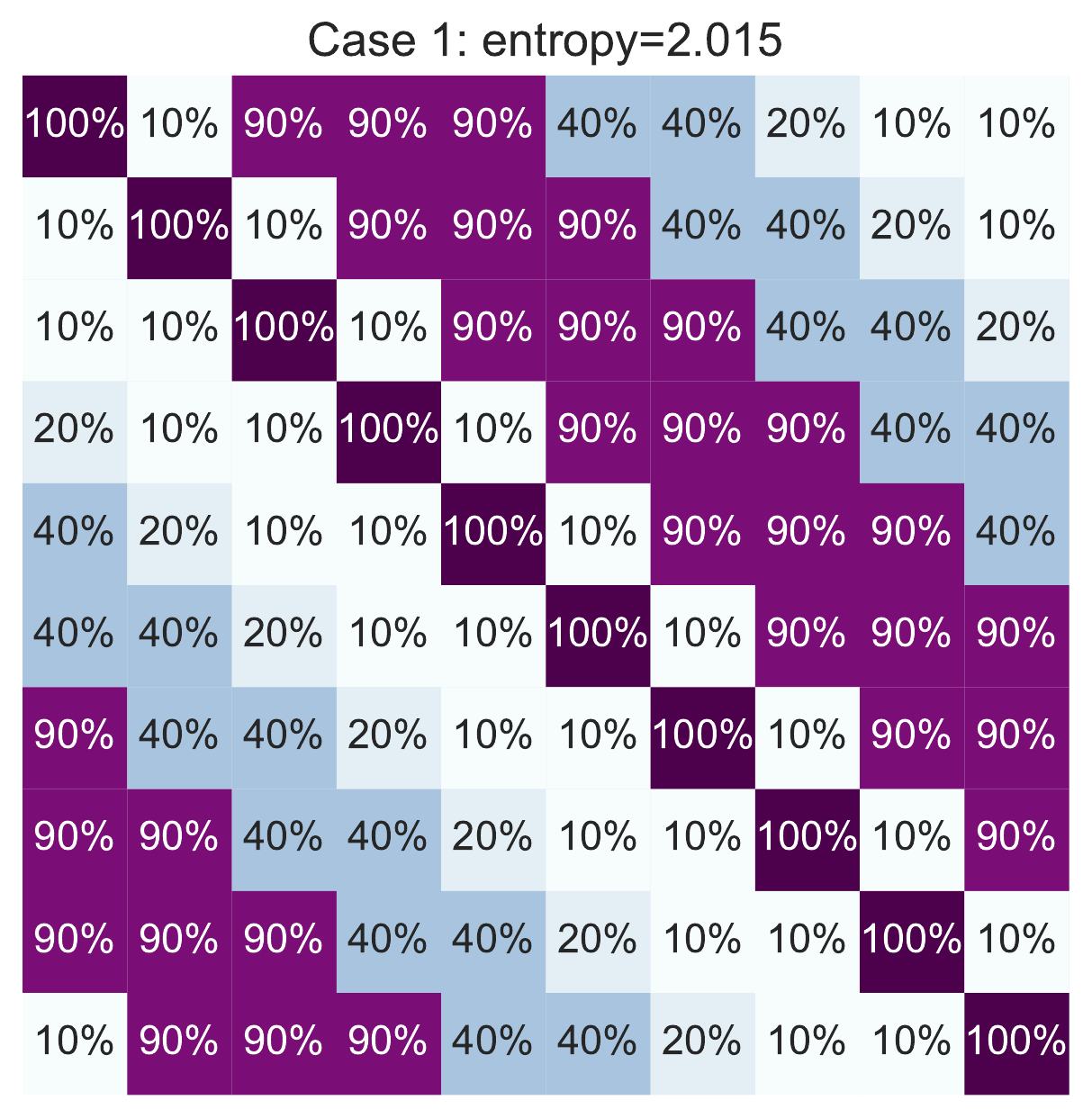}}%
  \subfigure{
    %\label{fig:mini:subfig:b} %% label for second subfigure
      \includegraphics[width=0.47\textwidth]{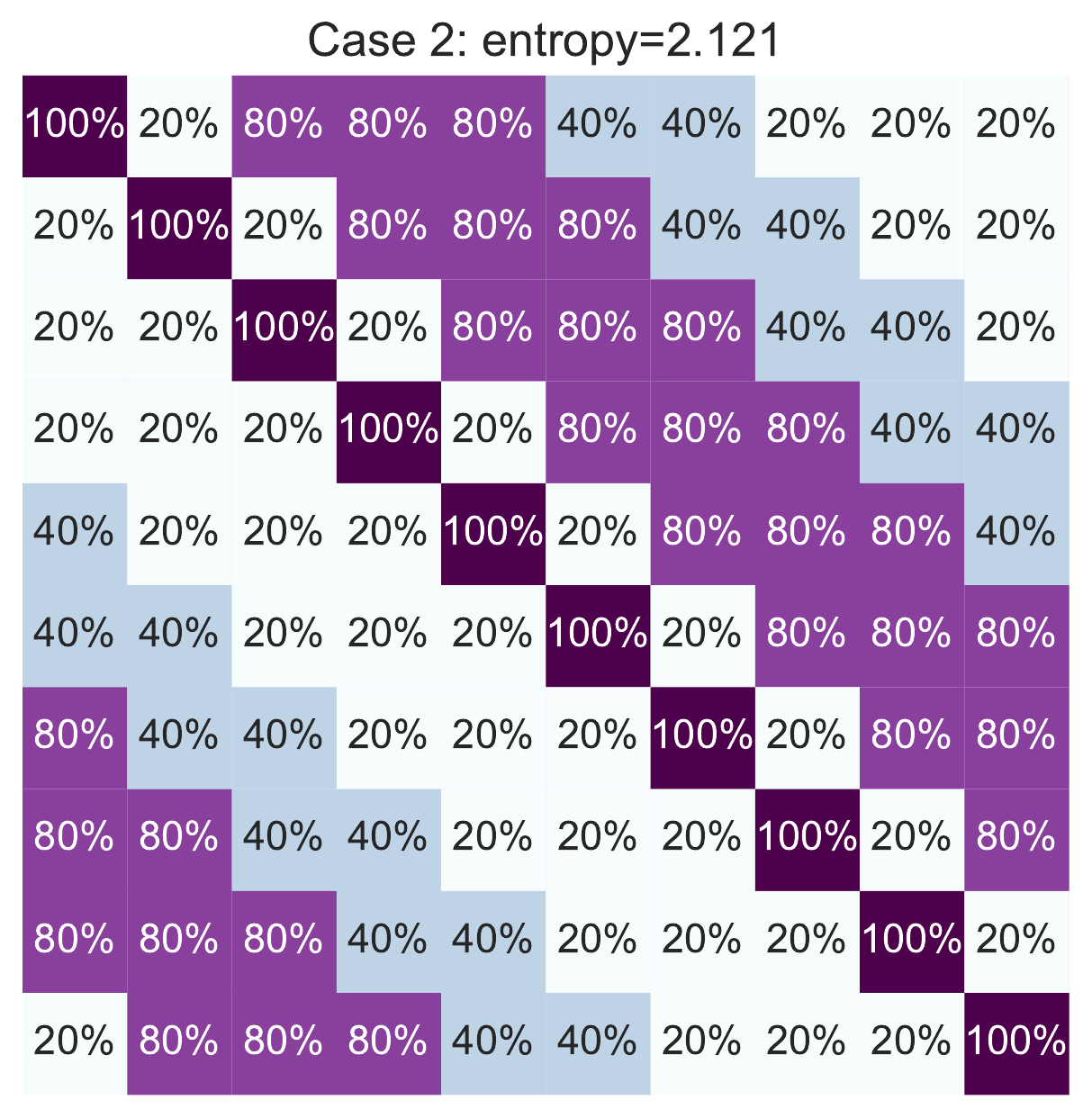}}
      \subfigure{
    %\label{fig:mini:subfig:c} %% label for second subfigure
      \includegraphics[width=0.47\textwidth]{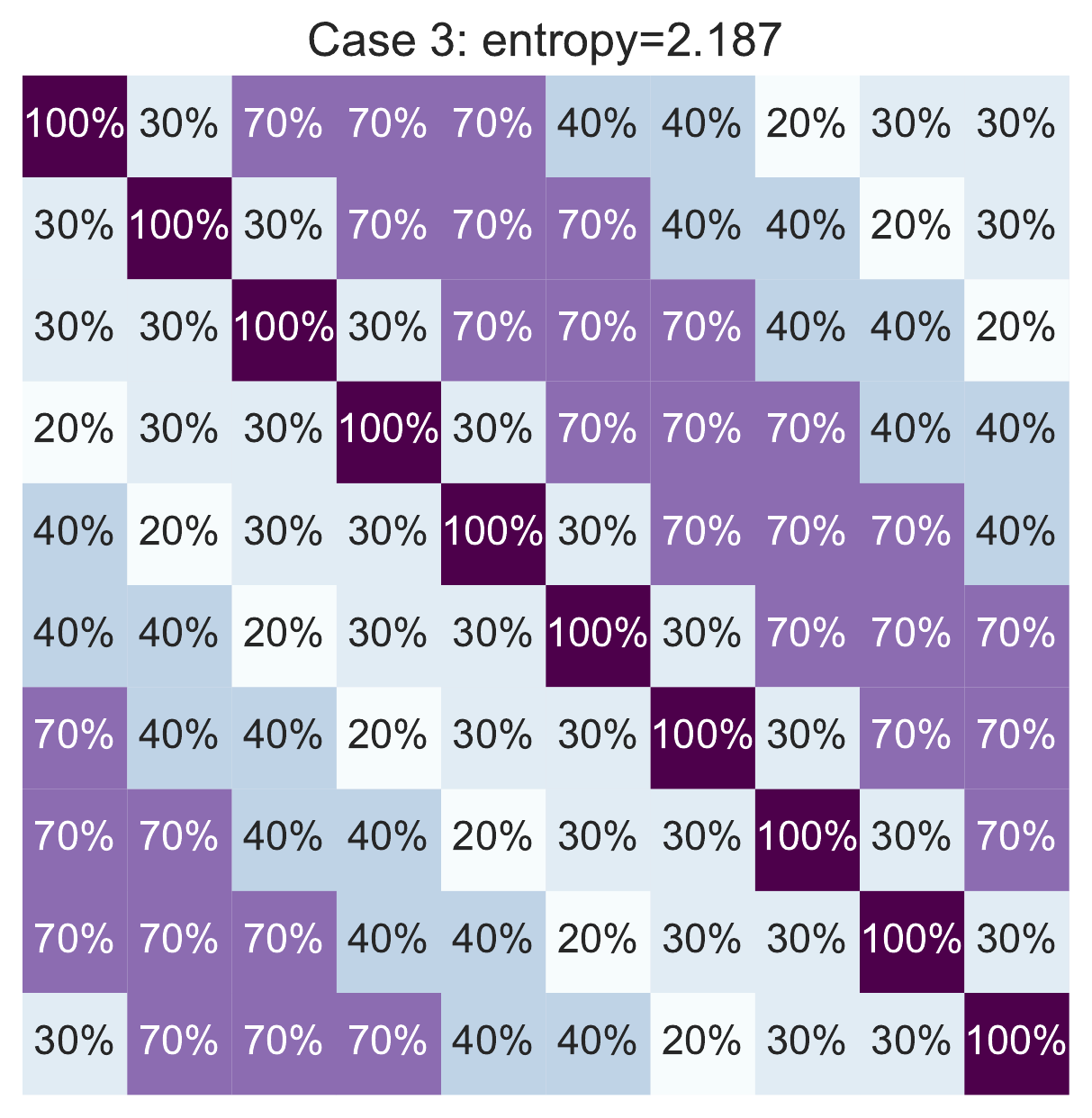}}
      \subfigure{
    %\label{fig:mini:subfig:d} %% label for second subfigure
      \includegraphics[width=0.47\textwidth]{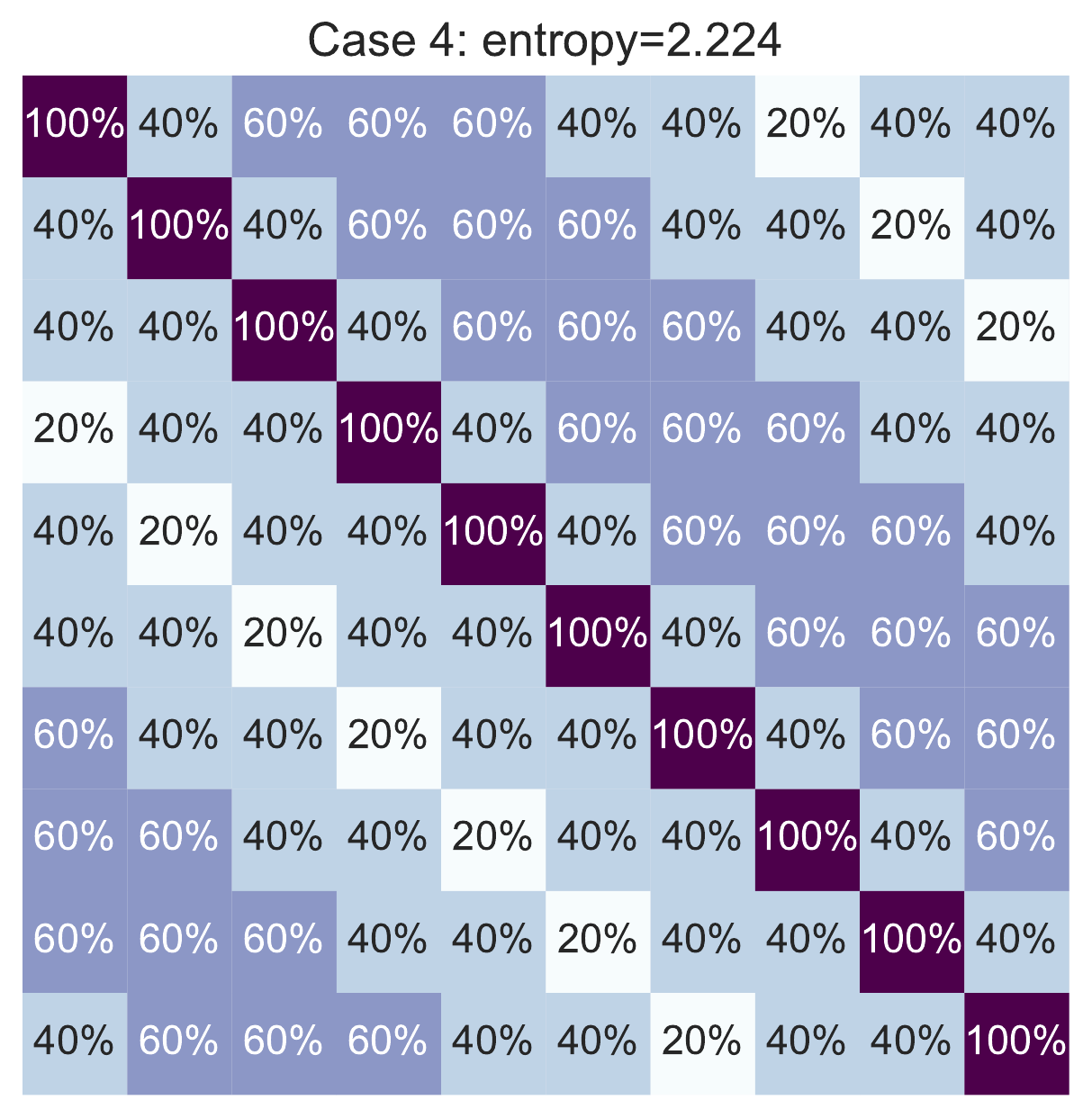}}
      %\vspace{-0.3cm}
      \subfigure{
    %\label{fig:mini:subfig:d} %% label for second subfigure
      \includegraphics[width=0.47\textwidth]{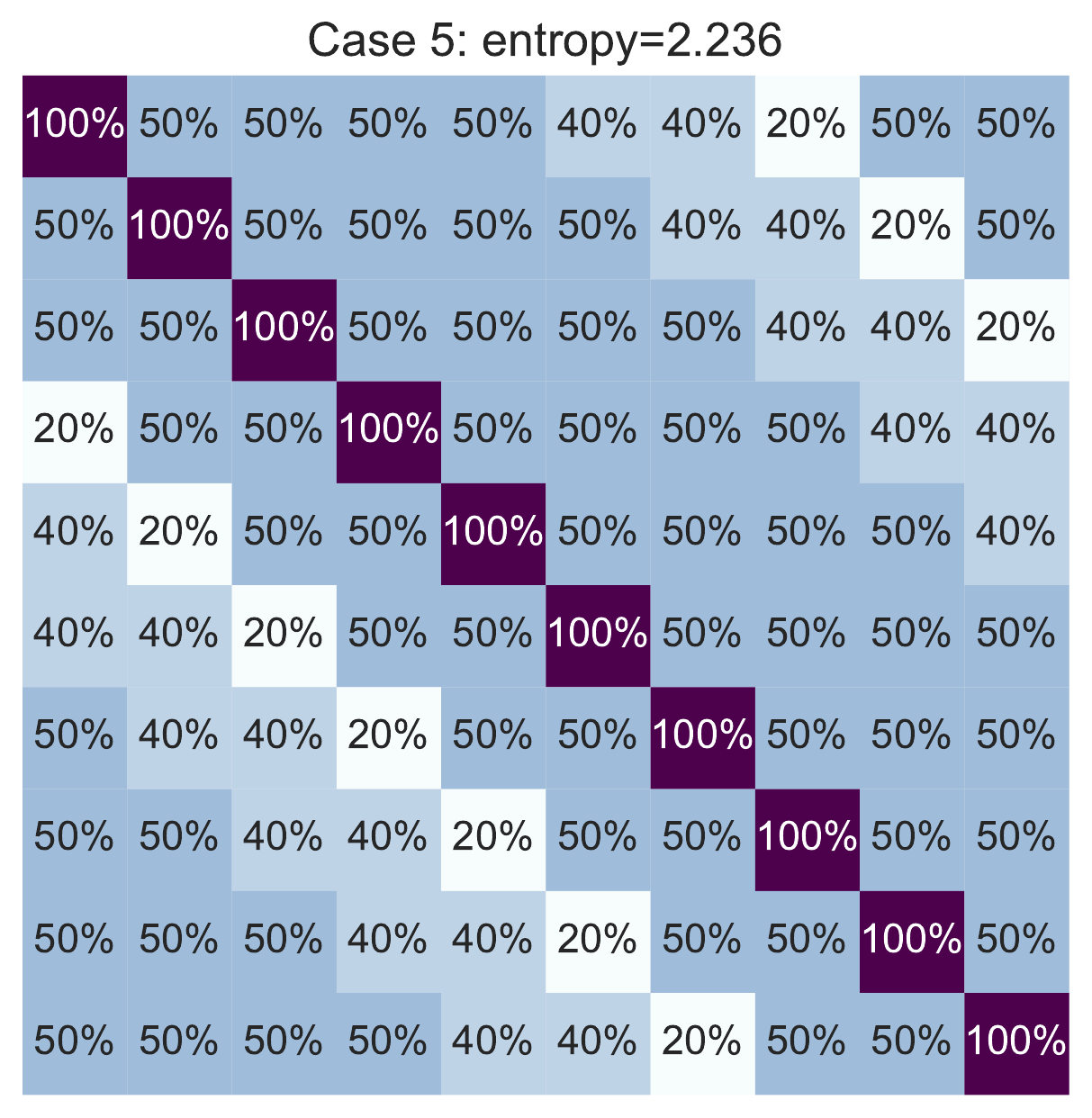}}
      \subfigure{
    %\label{fig:mini:subfig:d} %% label for second subfigure
      \includegraphics[width=0.47\textwidth]{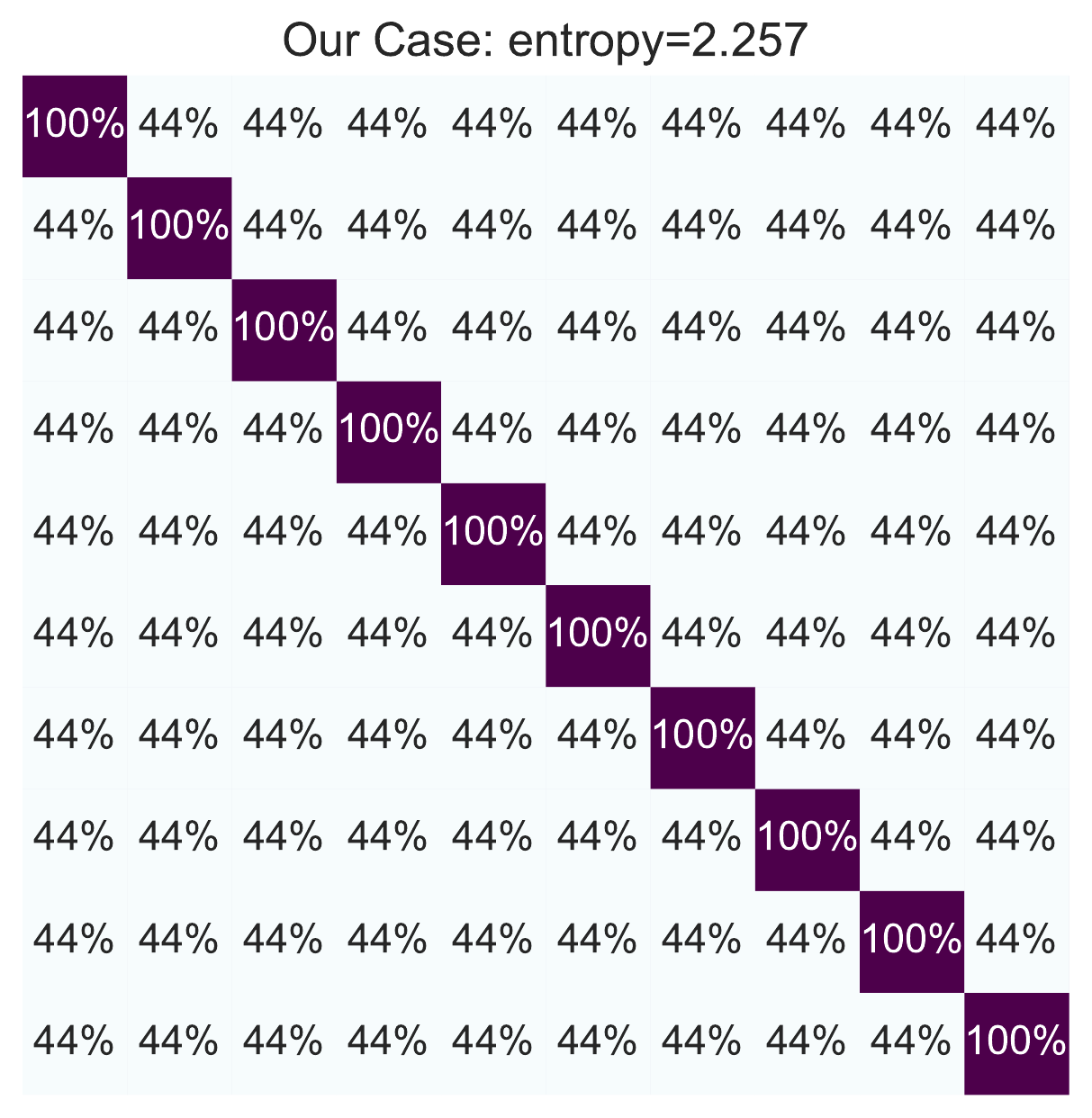}}
  \caption{Heatmaps of different generation processes of candidate label sets.}
  \label{heatmaps} %% label for entire figure
\end{figure*}
\subsection{Transductive Analysis}\label{E.3}
Here, we provide additional experiments to investigate the transductive accuracy of each method, i.e., the training set is evaluated with true labels. Table \ref{large_mlp_train} and Table \ref{large_lenet_train} report the transductive accuracy of each method using different neural networks on benchmark datasets. As shown in the two tables, our proposed methods RC and CC still significantly outperform other compared methods in most cases. In addition, it is worth noting that the gap of transductive accuracy between RC and CC is not so significant. However, as shown before, the gap of test accuracy between RC and CC is quite significant. These observations further support our conjecture that the estimation error bound of RC is probably tighter than that of CC.
\subsection{Performance Curves}\label{E.4}
Here, we record the test accuracy at each training epoch to provide more detailed visualized results. To avoid the overcrowding of many curves in a single figure, we only use seven methods including RC, CC, GA, NN, Free, PC, and Forward. The linear model and the MLP model are trained on the benchmark datasets. Figure \ref{curves} reports the experimental results of the seven methods for different datasets and models. Dark colors show the mean accuracy of 5 trials and light colors show the standard deviation. As shown in Figure \ref{curves}, our proposed PLL methods RC and CC still consistently outperform other compared methods, even when the simple linear model is used.

\section{Experiments on Effectiveness of Generation Model}\label{F}
Here, we would like to test the performance of our methods under different data generation processes. As indicated before, our proposed PLL methods are based on the proposed data generation model. Therefore, we would like to investigate the influence of different generation models on our proposed methods. We use \emph{entropy} to measure how well given candidate label sets match the proposed generation model. By this measure, we could know ahead of model training whether to apply our proposed methods or not on a specific dataset. We expect that the higher the entropy, the better the match, thus the better the performance of our proposed methods. To verify our conjecture, we generate various candidate labels sets by different generation models.
It is worth noting that the average number of candidate labels (Avg. \#CLs) per instance plays an important role in partially labeled datasets. Intuitively, the performance of PLL methods would generally be better if trained on the datasets with smaller Avg. \#CLs. The Avg. \#CLs of our generation model is 5. Therefore, to keep fair comparisons, the Avg. \#CLs of other studied generation models is also kept as 5.

In following experiments, we still focus on the case where the candidate label set is independent of the instance. We additionally introduce the \emph{class transition matrix} (denoted by $\boldsymbol{T}$) for partially labeled data, where $T_{ij}$ describes the probability of the label $j$ being a candidate label given the true label $i$ for each instance. Intuitively, $T_{ii}=1$ always holds since the true label is always a candidate label. In this way, we provide various formulations of the matrix $\boldsymbol{T}$ to instantiate different generation models.

The studied generation models are illustrated in Figure \ref{heatmaps}. As shown in Figure \ref{heatmaps}, we provide six cases of generation models, and each of them holds a value of entropy. The value of entropy is calculated by the following two steps: 1) The matrix $\boldsymbol{T}$ is normalized by $P_{ij} = T_{ij}/(\sum_j{T_{ij}}),\ \forall i,j\in[k]$. 2) The entropy of the case is calculated by $-\frac{1}{k}\sum_{i=1}^k\sum_{j=1}^k P_{ij}\log P_{ij}$. As in our proposed generation model, given the true label, other labels have the same probability to be a candidate label, our case achieves the maximum entropy (i.e., 2.257).
\end{document}